%% file: main.tex
\documentclass{article} 
\usepackage{iclr2027_conference,times}
\input{math_commands.tex}

\usepackage{hyperref}
\usepackage{url}
\usepackage{booktabs} 
\usepackage{graphicx}
\usepackage{multirow}
\usepackage{amsmath,amssymb}
\usepackage{algorithm}
\usepackage{algpseudocode}
\usepackage{amsthm}
\usepackage{wrapfig}
\usepackage{subcaption}
\usepackage{xcolor}
\definecolor{slowred}{RGB}{190,45,45}
\definecolor{fastgreen}{RGB}{0,125,65}

\newcommand{\slow}[1]{\textcolor{slowred}{#1}}
\newcommand{\fast}[1]{\textcolor{fastgreen}{#1}}

\newtheorem{theorem}{Theorem}
\newtheorem{proposition}[theorem]{Proposition}

\newtheorem{lemma}[theorem]{Lemma}
\newtheorem{corollary}[theorem]{Corollary}
\title{Moments-guided Edge Sampling}

\iclrfinalcopy

\author{Weibin Cai, Reza Zafarani\\
Data Lab, Department of EECS\\
Syracuse University\\
Syracuse, NY, USA \\
\texttt{\{weibin44,reza\}@data.syr.edu} \\
}

\begin{document}

\maketitle
\lhead{Preprint.}
\vspace{-7mm}
\begin{abstract}
Edge sampling makes local decisions to achieve graph-level objectives, such as preserving structural properties. 
This creates a fundamental challenge: \textit{how can the effect of a local edge edit (i.e., edge addition or removal) on global graph structure be quantified and controlled?} 
We address this challenge with a \textit{moment-guided edge sampling framework} based on spectral moments of the random-walk transition matrix. 
We compute exact moment changes through two complementary methods: a combinatorial method with closed-form updates for low-order moments, and a low-rank method that exploits \textit{locality} and \textit{cyclic trace invariance} to compress computations to edited endpoints, supporting arbitrary moment orders and batched edits. 
For single-edge edits at fixed moment orders, the low-rank method reduces the cost from $O(mn)$ to $O(m)$, while the combinatorial method evaluates low-order changes in constant time given maintained local statistics. 
These moment changes provide \textbf{interpretable structural signatures} of local edge motifs that aggregate into graph-level fingerprints. 
This structural meaning motivates us to ask whether preserving moments also preserves the graph properties. 
We further derive and validate that moment-preserving sampling can \textbf{retain related structural properties}, including triangle-weighted clustering coefficient. 
These structural insights enable \textbf{analysis and improvement of graph learning}: different edge structures have distinct effects on supervised node classification, while moment-guided augmentation is competitive for graph contrastive learning. 
Together, these findings establish moments as an interpretable and controllable bridge from local edge edits to global graph structure and learning. 

\end{abstract}

\input{Sections/1_Introduction}

\input{Sections/Method}

\input{Sections/Experiments}

\input{Sections/Related_Work}

\section{Conclusion}
We introduce a moment-guided edge sampling framework that connects local edge edits to global graph structure through spectral moments. 
To compute edge-wise moment changes efficiently, we develop an $O(1)$ combinatorial method for low-order moment and a low-rank method that supports arbitrary orders and batched edits, reducing single-edge computation by a factor of $n$ over direct recomputation for fixed moment orders. 
These moment changes provide interpretable structural signatures, capturing local motifs and forming graph-level fingerprints. 
Using them, our framework selects edits that move the graph toward a target moment profile. 
We further derive connections between moments and structural properties and validate that moment-preserving sampling retains properties such as triangle-weighted clustering. 
Finally, case studies in supervised node classification and graph contrastive learning suggest that different structural edge types have distinct effects on supervised performance, while moment-guided augmentation yields competitive results in contrastive learning.  
Overall, spectral moments provide an interpretable and controllable bridge from local edge edits to global graph structure and downstream learning.

\bibliography{reference}
\bibliographystyle{iclr2027_conference}

\appendix
\input{Sections/Appendix}

\input{Sections/pseudocode}

\end{document}

%% file: math_commands.tex
\usepackage{amsmath,amsfonts,bm}

\def\eqref#1{equation~\ref{#1}}

\def\1{\bm{1}}

\DeclareMathAlphabet{\mathsfit}{\encodingdefault}{\sfdefault}{m}{sl}
\SetMathAlphabet{\mathsfit}{bold}{\encodingdefault}{\sfdefault}{bx}{n}



%% file: Sections/1_Introduction.tex
\section{Introduction}
Graph sampling is fundamental to graph mining and graph learning. 
In graph sparsification, it reduces graph size while preserving structural properties such as spectra and clustering coefficients~\citep{leskovec2006sampling,ahmed2013network,spielman2008graph}. 
In supervised GNNs, sampling can regularize training, mitigate overfitting and over-smoothing~\citep{rong2019dropedge}, and improve scalability~\citep{hamilton2017inductive,chiang2019cluster,zeng2019graphsaint}. 
In unsupervised GNNs, contrastive learning methods commonly use sampling to construct augmented graph views~\citep{you2020graph,velickovic2018deep,zhu2020deep}. 
Across these settings, edges are often the basic sampling units, while the objectives are defined at the graph level, such as preserving structural properties. 
Targeted edge sampling therefore \textit{requires a way to relate individual edges to global graph structure}. 
However, an edge cannot be characterized independently of its surroundings: its structural role depends on nearby edges, and its contribution to global graph properties is difficult to measure directly. 
What is needed is an \textit{edge-level quantity that is locally computable yet directly reflects how an edit changes global structure}. 
Without such a connection, many commonly used sampling methods rely on simple strategies such as uniform edge dropping or random-walk sampling~\citep{leskovec2006sampling,zeng2019graphsaint}, rather than explicitly controlling the structural information induced by sampling.

Spectral moments of the random-walk transition matrix provide a natural bridge between local edge edits (i.e., edge additions or removals) and global graph structure. 
The $k$-th moment is the average $k$-step return probability over all nodes, or equivalently, a normalized sum of weighted length-$k$ closed walks. 
This connection is useful in two ways. 
(1) \textit{Moments summarize global graph structure}. 
A set of moments maps each graph to a point in a moment space, providing a compact representation of its structure. 
Graphs with similar structural characteristics tend to occupy nearby regions of this space~\citep{jin2020spectral}. 
(2) \textit{Moment changes connect individual edge edits to this global structure representation}. 
Adding or removing an edge changes the weighted closed walks in the graph and therefore moves its point in the moment space. 
This displacement is captured by the edge-wise moment change. 
By computing these changes for candidate edge edits, we can select local edge edits that move the graph toward a desired target point, turning local modifications into controllable changes of global graph structure.

\paragraph{Present work.} 
Building on this view, we develop a moment-guided edge sampling framework that iteratively selects edge edits whose moment changes move the graph toward a target moment profile. 
The main challenge is to compute these edge-wise moment changes efficiently. 
We first develop a \textit{combinatorial method} that computes low-order moment changes by tracking affected closed walks, using maintained statistics to achieve $O(1)$ time per candidate. 
However, this approach becomes impractical at higher orders, as the number and variety of affected closed-walk patterns grow rapidly and make explicit tracking increasingly complex. 
To overcome this limitation, we develop a \textit{low-rank method} that exploits  \textit{locality} and \textit{cyclic trace invariance} to compress affected walks into small endpoint-to-endpoint matrices. 
The low-rank method supports arbitrary moment orders and batched edits, while reducing single-edge computation by a factor of $n$ over direct recomputation. 
We further characterize how moment changes interact across edge edits, providing exact locality and upper bounds on pairwise interactions for arbitrary moment orders in Appendix~\ref{ap:method_details}. 
Using this framework, we show that edge-wise moment changes provide interpretable structural signatures, that preserving moments retains related structural properties, and that this structural control can benefit graph learning. 
Code is available at \href{https://github.com/Weibin44/Moment-guided-graph-sampling}{github}. 
Our contributions are as follows:
\begin{itemize}
    \item We propose an \textbf{efficient moment-guided edge sampling framework} that steers graphs toward target moment profiles. 
    Our combinatorial method computes exact low-order changes in $O(1)$ time per candidate, while our low-rank method support arbitrary orders and batched edits with an $n$-fold reduction over recomputation for single-edge edits. 
    \item We show that edge-wise moment changes provide \textbf{interpretable structural signatures}, characterizing edge motifs and forming graph-level fingerprints. 
    We derive connections between moments and structural properties, including triangle-weighted clustering coefficient, and validate that moment-preserving sampling can \textbf{retain structural properties}. 
    \item We explore the potential of moment-guided sampling through case studies on supervised node classification and graph contrastive learning. 
    Moment changes reveal how different structural edge types affect supervised performance, while moment-guided augmentation achieves competitive results in contrastive learning. 
    These results suggest that moment-guided structural control can support \textbf{analysis and improvement of graph learning}. 
\end{itemize}
\section{Preliminaries}
\paragraph{Random Walk Transition Matrix.} 
For an undirected graph $G = (V, E)$ with $|V| = n$ nodes and $|E|=m$ edges, let $A\in \mathbb{R}^{n \times n}$ denotes its adjacency matrix, where $A_{ij}=1$ if $(i,j)\in E$ and $A_{ij} = 0$ otherwise. 
Let $D\in \mathbb{R}^{n \times n}$ be the diagonal degree matrix, with $D_{ii}=\sum_{j=1}^{n} A_{ji}$. 
The random-walk transition matrix is $P=AD^{-1}$, where $P_{ij}$ is the probability of moving from node $j$ to node $i$ in one step. 
Since $P$ is similar to the symmetric matrix $P_{\mathrm{sym}}=D^{-1/2}AD^{-1/2}$, its eigenvalues are real and lie in $[-1,1]$.


\paragraph{Spectral Moments.} 
Let $\lambda_1,\ldots,\lambda_n$ be the eigenvalues of $P$. 
The $k$-th spectral moment of $G$ is $m_k=\frac{1}{n}\sum_{i=1}^{n}\lambda_i^{k}=\frac{1}{n}\mathrm{Tr}(P^k)$. 
The diagonal entry $P^k_{ii}$ is the probability that a $k$-step random walk starting at node $i$ returns to $i$. 
Thus, $m_k$ is the average $k$-step return probability over all nodes, or equivalently, a normalized sum of weighted length-$k$ closed walks. 
For example, $m_2=\frac{1}{n}\mathrm{Tr}(P^2)=\frac{2}{n}\sum_{\{i,j\}\in E}\frac{1}{d_id_j}$, $m_3=\frac{1}{n}\mathrm{Tr}(P^3)=\frac{6}{n}\sum_{\{i,j,k\}\in\triangle}\frac{1}{d_id_jd_k}$, where $d_i$ is the degree of node $i$ and $\triangle$ denotes the set of unordered triangles; $m_1=0$ for graphs without self-loops. 
Spectral moments are closely connected to graph diffusion and structural properties.
For example, the heat trace~\citep{tsitsulin2018netlsd} and personalized PageRank trace~\citep{page1998pagerank} can be expressed as weighted sums of moments (Appendix~\ref{ap:relationship_diff_moments}) while moments are also related to spectral and structural graph properties~\citep{preciado2013structural,cohen2018approximating}. 
We mainly use $m_2$ and $m_3$ for interpretation and visualization, while our method supports arbitrary moment orders.

%% file: Sections/Method.tex
\section{Moment-Guided Edge Sampling}
\label{sec:method}
Let $\mathcal{K}=\{k_1,\ldots,k_p\}$ denote the selected moment orders, and let $\mathbf{m}=(m_k)_{k\in\mathcal{K}}$ and $\mathbf{m}^*=(m_k^*)_{k\in\mathcal{K}}$ denote the current and target moment profiles. 
Our goal is to select edge edits that move $\mathbf{m}$ toward $\mathbf{m}^*$, where the target may preserve the original moments or specify a desired structural change. 
We represent an edge edit as $\epsilon=(o,u,v)$, where $o\in\{\textsc{add}, \textsc{delete}\}$ specifies whether the edge $(u,v)$ is added or deleted. 
Let $G^\epsilon$ and $P^\epsilon$ denote the graph and random-walk transition matrix after applying $\epsilon$. The induced change in the $k$-th moment is 
\begin{equation}
\Delta m_k(\epsilon)
=
m_k(G^\epsilon) - m_k(G)
=
\frac{1}{n}
\left[
\operatorname{Tr}\left((P^\epsilon)^k\right)
-
\operatorname{Tr}(P^k)
\right].
\label{eq:delta_moment}
\end{equation}
Thus, $\Delta \mathbf{m}(\epsilon)$ describes how a local edge edit moves the graph in moment space. 
By evaluating these changes for candidate edits, we can select operations that move the current moment profile toward the target. 
The main challenge is to compute these edge-wise moment changes efficiently. 
Directly recomputation requires repeatedly evaluating matrix powers for many candidate edits and is prohibitively expensive on large graphs. 
We address this challenge from two complementary perspectives. 
The \textit{combinatorial view} (Section~\ref{sec:combinatorial_method}) directly tracks the local closed-walk changes induced by an edge edit, such as destroyed, created, or reweighted triangles, yielding exact closed-form updates for low-order moments. 
However, explicit enumeration becomes increasing difficult at higher orders as the number of closed-walk patterns grows rapidly. 
The \textit{algebraic low-rank view} (Section~\ref{sec:algebraic_method}) avoids explicit enumeration by exploiting \textit{locality} and \textit{cyclic trace invariance}.  
It represents edge edits as low-rank updates to $P$ and compress affected closed walks into small endpoint-to-endpoint matrices, supporting arbitrary moment orders and batched edits. 
For a single-edge edit, this reduces the cost from $O(Kmn)$ under direct recomputation to $O(Km)$. 
Finally, Section~\ref{sec:selection} uses these moment changes to iteratively select edits that move the graph toward the target profile. 

\subsection{Combinatorial Moment Change}
\label{sec:combinatorial_method}
The key idea is to \textit{express moments as weighted sums of simple closed-walks structures}. 
In particular, $m_2$ is determined by edges, and $m_3$ is determined by triangles. 
We first derive the exact moment changes for deleting a single edge; the corresponding formulas for edge addition are provided in Appendix~\ref{ap:combinatorial_delta}. 
Define 
\begin{equation}
S_2=\sum_{\{i,j\}\in E}\frac{1}{d_i d_j},
\qquad
S_3=\sum_{\{i,j,k\}\in\triangle}\frac{1}{d_i d_j d_k}.
\end{equation}
where $\triangle$ denotes the set of unordered triangles. 
Since $m_2=\frac{2}{n}S_2$ and $m_3=\frac{6}{n}S_3$, computing their changes reduces to updating $S_2$ and $S_3$. 

To avoid rescanning the entire graph after each edge edit, we maintain three local statistics: 
\begin{equation}
    W_i=\sum_{j\in\mathcal{N}(i)}\frac{1}{d_j}, 
    \qquad 
    M_{ij}=\sum_{k\in\mathcal{N}(i)\cap\mathcal{N}(j)}\frac{1}{d_k}, 
    \qquad 
    H_i=\sum_{\substack{
        \{a,b\}\subseteq\mathcal{N}(i)\\ 
        \{a,b\}\in E
    }}\frac{1}{d_a d_b}.
\end{equation}
Here, $W_i$ summarizes the inverse-degree mass of the neighbors of $i$, $M_{ij}$ the weighted common-neighbor mass of $i$ and $j$, and $H_i$ the weighted triangle mass around $i$. 
All quantities are evaluated before the edge edit. 
Once maintained, they allow the moment changes to be computed using only local information around the edited edge.

\paragraph{Edge deletion induced $\Delta m2$.} 
Consider deleting an edge $(u,v)\in E$. 
The deletion removes the contribution of $(u,v)$ from $S_2$ and reweights the surviving edges incident to $u$ and $v$ because their degrees decrease. 
For example, after $d_u$ changes to $d_u-1$, each surviving edge $(u,k)$ with $k\neq v$ gains $\frac{1}{(d_u-1)d_k}-\frac{1}{d_u d_k}=\frac{1}{d_k}\cdot\frac{1}{d_u(d_u-1)}$. 
Summing these changes over both endpoints and subtracting the removed-edge contribution gives 
\begin{equation}
\Delta S_2^{\mathrm{del}}(u,v)
=
\underbrace{
\frac{W_u-d_v^{-1}}{d_u(d_u-1)}
+
\frac{W_v-d_u^{-1}}{d_v(d_v-1)}
}_{\text{reweighted surviving edges}}
-
\underbrace{
\frac{1}{d_u d_v}
}_{\text{destroyed edge}}.
\label{eq:combinatorial_s2}
\end{equation}


\paragraph{Edge deletion induced $\Delta m3$.} 
Deleting $(u,v)\in E$ similarly has two effects on $S_3$: it destroys all triangles containing $(u,v)$ and reweights the surviving triangles incident to $u$ or $v$. 
For endpoint $u$, the total triangle mass before deletion is $H_u$, while the mass of triangles containing $(u,v)$ is $M_{uv}/d_v$.  
Thus, the surviving triangle mass around $u$ is $H_u - M_{uv}/d_v$, and its reweighting contributes $\left(\frac{1}{d_u-1}-\frac{1}{d_u}\right)\left(H_u-\frac{M_{uv}}{d_v}\right)$. 
Combining the reweighted surviving triangles at both endpoints with the destroyed triangles gives 

\begin{equation}
\Delta S_3^{\mathrm{del}}(u,v)
=
\underbrace{
\frac{H_u-M_{uv}/d_v}{d_u(d_u-1)}
+
\frac{H_v-M_{uv}/d_u}{d_v(d_v-1)}
}_{\text{reweighted surviving triangles}}
-
\underbrace{
\frac{M_{uv}}{d_u d_v}
}_{\text{destroyed triangles}}.
\end{equation}

The corresponding moment changes are
\begin{equation}
\Delta m_2
=
\frac{2}{n}\Delta S_2,
\qquad
\Delta m_3
=
\frac{6}{n}\Delta S_3.
\end{equation}
Once $d_i$, $W_i$, $M_{ij}$, and $H_i$ are maintained, evaluating $\Delta m_2$ and $\Delta m_3$ for a candidate edge requires only constant-time lookups and arithmetic. 
However, extending this combinatorial approach to higher-order moments requires tracking an increasing variety of affected closed-walk patterns. This motivates the low-rank formulation in Section~\ref{sec:algebraic_method}.

\subsection{Low-rank Moment Change}
\label{sec:algebraic_method}
We now derive an algebraic formulation for exact moment changes of arbitrary order and batched edge edits. 
To obtain such a formulation, we exploit two properties of edit-affected closed walks: \textit{locality}, since an edge edit changes the transition matrix only at its endpoints, and \textit{cyclic trace invariance}, which allows affected closed walks to be anchored at these touched nodes. 
Together, these properties lead to a low-rank update of the transition matrix and a compact trace formulation over small endpoint-to-endpoint matrices. 

\noindent \textbf{Locality: edge edits form a low-rank update.}  
Let $\mathcal{B}$ be a batch of edge edits, and let $G'$ and $P'$ denote the graph and transition matrix after applying all edits in $\mathcal{B}$. 
Only the transition columns corresponding to edited endpoints can change. 
We collect these nodes into the touched endpoint set 
\begin{equation}
T=\{i\in V:\text{$i$ is an endpoint of an edit in $\mathcal{B}$}\},
\qquad
r=|T|\leq 2|\mathcal{B}|.
\end{equation}
Write $T=\{i_1,\ldots,i_r\}$. 
For each touched node $i_a$, let $p_a=P_{:i_a}$ and $p'_a=P'_{:i_a}$ denote its transition columns before and after the edits, and define $c_a=p'_a-p_a$. 
Stacking these columns gives $C=[c_1,\ldots,c_r]\in\mathbb{R}^{n\times r}$ and $R=[e_{i_1},\ldots,e_{i_r}]\in\mathbb{R}^{n\times r}$, where $e_i$ is the $i$-th standard basis vector. 
The updated transition matrix can therefore be written exactly as 
\begin{equation}
    P' = P + CR^\top.
\end{equation}
Thus, a batch of edge edits induces a rank-$r$ update to transition matrix $P$, where $r$ depends only on the number of touched endpoints. 

\noindent \textbf{Cyclic trace invariance: affected closed walks are anchored at touched endpoints.} 
Let $Q = CR^\top \in \mathbb{R}^{n\times n}$.   
Expanding $(P+Q)^k$ and canceling the all-$P$ term gives 
\begin{equation}
\Delta T_k
=
n\Delta m_k
=
\operatorname{Tr}\left((P+Q)^k\right)-\operatorname{Tr}(P^k)
=
\sum_{w\in\{P,Q\}^k\setminus\{P^k\}}
\operatorname{Tr}(w). 
\end{equation}
Each word $w$ contains at least one $Q$ and represents a class of affected closed walks.  
Intuitively, a $Q$ factor marks a position in the closed walk where the transition is changed at a touched endpoint, while a $P$ factor represents an unchanged transition.  
These affected closed walks can therefore be characterized by two factors: 
(1) how many touched-endpoint occurrences they contain, and
(2) where these occurrences appear along the closed walk. 
These correspond to the number and positions of the $Q$ factors in $w$. 

Let $\ell$ denote the number of $Q$ factors. 
Grouping the affected closed walks by $\ell$ gives 
\begin{equation}
\Delta T_k
=
\sum_{\ell=1}^{k}
\sum_{\substack{
w\in\{P,Q\}^k\\
w\text{ contains }\ell\text{ copies of }Q
}}
\operatorname{Tr}(w).
\label{eq:group_by_q}
\end{equation}
We next describe the positions of these $\ell$ touched-endpoint occurrences. 
By cyclic invariance of the trace, each closed walk can be rotated so that it starts from a $Q$ factor. 
It can then be written as 
\begin{equation}
\operatorname{Tr}
\left(
P^{a_1}Q
P^{a_2}Q
\cdots
P^{a_\ell}Q
\right),
\label{eq:anchored_word}
\end{equation}
where $a_j$ is the number of unchanged transitions between two consecutive touched-endpoint occurrences. 
Since the closed walk has total length $k$, with $\ell$ positions occupied by $Q$, these segment lengths satisfy $a_1+\cdots+a_\ell=k-\ell$. 
Thus, all possible positions of the $\ell$ touched-endpoint occurrences are represented by 
\begin{equation}
\mathcal{A}_{k,\ell}
=
\left\{
(a_1,\ldots,a_\ell)\in\mathbb{Z}_{\geq0}^{\ell}
:
\sum_{j=1}^{\ell}a_j=k-\ell
\right\}.
\label{eq:segment_configurations}
\end{equation}
Cyclic rotation also changes the counting multiplicity. 
In the original trace expansion, a length-$k$ closed walk can be read from any of its $k$ positions. 
After anchoring the walk at a touched-endpoint occurrence, only its $\ell$ $Q$ positions can serve as starting points. 
Therefore, each anchored representation is weighted by $k/\ell$, giving 
\begin{equation}
\Delta T_k
=
\sum_{\ell=1}^{k}
\frac{k}{\ell}
\sum_{(a_1,\ldots,a_\ell)\in\mathcal{A}_{k,\ell}}
\operatorname{Tr}
\left(
P^{a_1}Q
P^{a_2}Q
\cdots
P^{a_\ell}Q
\right).
\label{eq:rotated_trace}
\end{equation}



\noindent \textbf{Endpoint compression: affected walks become small matrices product.} 
The representation above still involves the full $n\times n$ matrices $P$ and $Q$. 
Since $Q=CR^\top$ is supported only on touched endpoints, we can further compress each walk segment between two consecutive $Q$ factors to these endpoints. 
Substituting $Q=CR^\top$ into Eq.~\ref{eq:rotated_trace} and applying cyclic trace invariance gives 
\begin{equation}
\operatorname{Tr}\left(P^{a_1}CR^\top P^{a_2}CR^\top\cdots P^{a_\ell}CR^\top\right) =
\operatorname{Tr}\left[
(R^\top P^{a_1}C)(R^\top P^{a_2}C)\cdots(R^\top P^{a_\ell}C)
\right].
\end{equation}

We therefore define  
\begin{equation}
    H_t = R^\top P^t C \in \mathbb{R}^{r\times r}
\end{equation}
Each $H_t$ summarizes an edit-induced transition at one touched endpoint followed by $t$ unchanged transitions that end at another touched endpoint. 
Thus, a closed walk with segment lengths $(a_1,\ldots,a_\ell)$ is compressed into the product $H_{a_1}H_{a_2}\cdots H_{a_\ell}$. 
Eq.~\ref{eq:rotated_trace} therefore becomes 
\begin{equation}
\Delta T_k
=
\sum_{\ell=1}^{k}
\frac{k}{\ell}
\sum_{(a_1,\ldots,a_\ell)\in\mathcal{A}_{k,\ell}}
\operatorname{Tr}
\left(
H_{a_1}H_{a_2}\cdots H_{a_\ell}
\right),
\label{eq:low_rank_delta}
\end{equation}
and the moment change is $\Delta m_k=\Delta T_k/n$.
Hence, instead of computing affected closed walks through full $n\times n$ matrix power, we only need products of $r\times r$ matrices defined over the touched endpoints. 
Implementation details and complexity analysis are provided in Appendix~\ref{ap:method_details}.

\subsection{Moment-Guided Edge Selection}
\label{sec:selection}
Given moment changes computed in Sections~\ref{sec:combinatorial_method} and~\ref{sec:algebraic_method}, we select edge edits that move the current moment profile toward the target $\mathbf{m}^*$. 

\textbf{Scoring.} 
For a candidate edit $\epsilon$, the expected moment profile after applying this edit is $\mathbf{m}+\Delta\mathbf{m}(\epsilon)$. 
We score the edit by its normalized distance to the target:  
\begin{equation}
\operatorname{score}(\epsilon)
=
\sum_{k\in\mathcal{K}}\left(
\frac{m_k+\Delta m_k(\epsilon)-m_k^\ast}{\sigma_k}
\right)^2, 
\label{eq:score}
\end{equation}
where $\sigma_k=m_k(G)$ is the $k$-th moment of the original graph $G$. 
This fixed normalization places different moment orders on comparable scales throughout the sampling process. 

\textbf{Selection.} 
At each step, we evaluate Eq.~\ref{eq:score} over a candidate set $\mathcal{C}_t$ and select
\begin{equation}
\epsilon_t^*
=
\arg\min_{\epsilon\in\mathcal{C}_t}
\operatorname{score}(\epsilon).
\end{equation}
We apply $\epsilon_t^*$, update $\mathbf{m}\leftarrow\mathbf{m}+\Delta\mathbf{m}(\epsilon_t^*)$, and repeat until the sampling budget is reached. 
The candidate set can include both edge additions and deletions.

%% file: Sections/Experiments.tex
\section{moment changes as structural signatures}
\label{sec:graph_summarization}

Having established how to compute edge-wise moment changes, we next examine what structural information they encode. 
For interpretability, we focus on edge deletion. 
For an existing edge $e=(u,v) \in E$, let $\epsilon_e=(\textsc{Delete},u,v)$ and write $\Delta\mathbf{m}(e):=\Delta\mathbf{m}(\epsilon_e)$ for brevity. 
We show that these moment changes characterize local edge structure and aggregate into graph-level fingerprints. 

\begin{wrapfigure}{r}{0.5\linewidth}
    \centering
    \vspace{-9mm}
    \includegraphics[width=\linewidth]{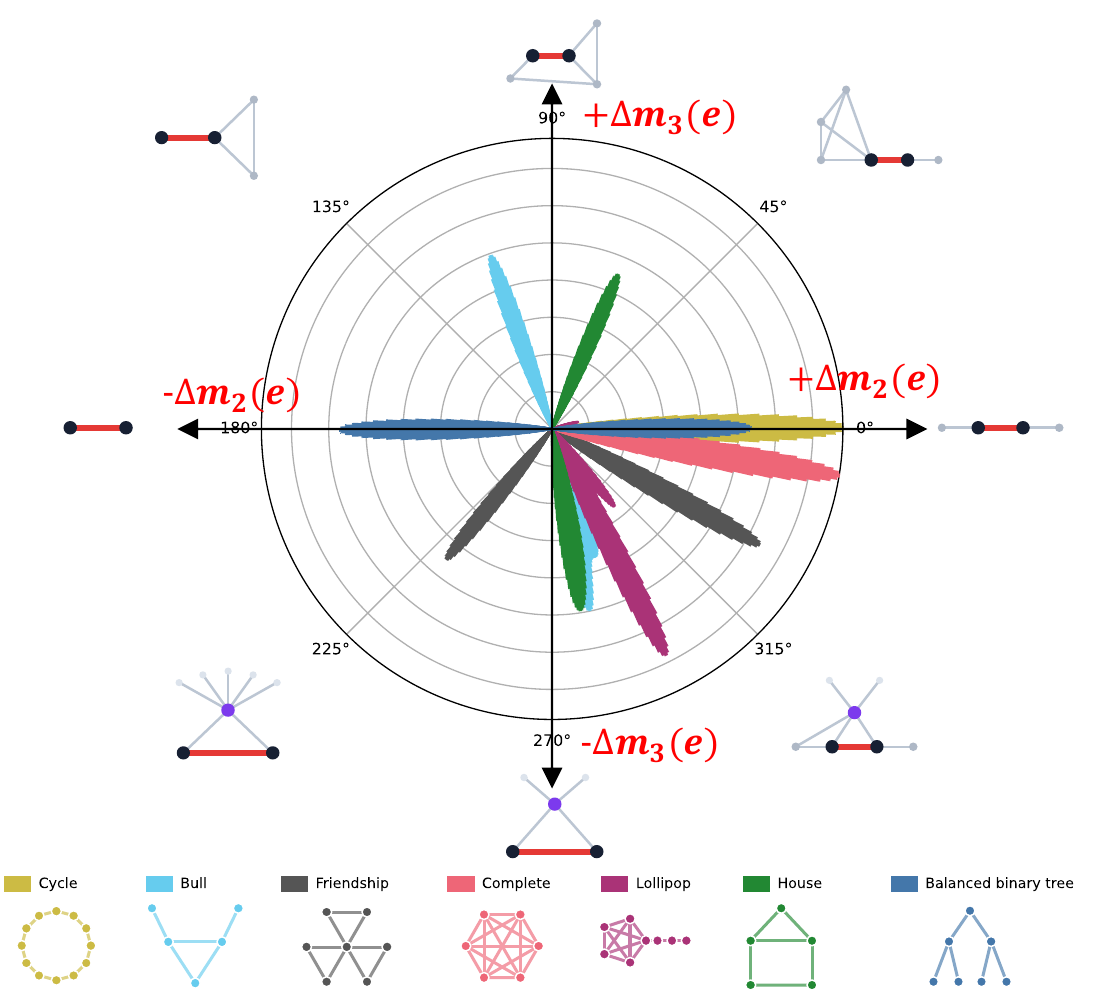}
    \vspace{-7mm}
    \caption{Edge distributions of classic graphs in the $(\Delta m_2, \Delta m_3)$ space. 
    Outer examples show representative edge-centered motifs along different moment-change directions, with the red edge indicating the deleted edge. 
    Inner profiles show the smoothed edge distributions for classic graphs. 
    }
    \vspace{-6mm}
    \label{fig:edge_moment_shifts}
\end{wrapfigure}

\paragraph{Moment changes reveal edge-level structural signatures.}  
The vector $\Delta \mathbf{m}(e)$ describes how deleting $e$ moves the graph in moment space, reflecting the edge's structural role. 
For visualization, we focus on $(m_2,m_3)$ and normalize each coordinate as detailed in Appendix~\ref{ap:edge_moment_direction}. 
Figure~\ref{fig:edge_moment_shifts} connects $\Delta \mathbf{m}(e)$ to graph structure at two levels: representative edge-centered motifs illustrate their local meaning, while classic-graph distributions show how these local patterns aggregate across a graph. 

Edges within triangles tend to have negative $\Delta m_3(e)$ because their deletion destroys triangles. 
In contrast, bridge-like edges attached to clustered regions can have positive $\Delta m_3(e)$ because their removal concentrates three-step return probability within the remaining cluster.  
For $\Delta m_2(e)$, deleting an internal path edge can increase two-step return probabilities, whereas deleting an isolated dyad edge decreases them by isolating both endpoints. 

These patterns are also visible in the classic graphs. 
Graphs with structurally homogeneous edges, such as cycles and complete graphs, concentrate around a narrow range of directions, whereas graphs with multiple edge types, as the bull graph, exhibit multiple directional modes. 
Together, these observations show that moment changes provide interpretable signatures of local edge structure. 

\begin{figure}
    \centering
    \includegraphics[width=\linewidth]{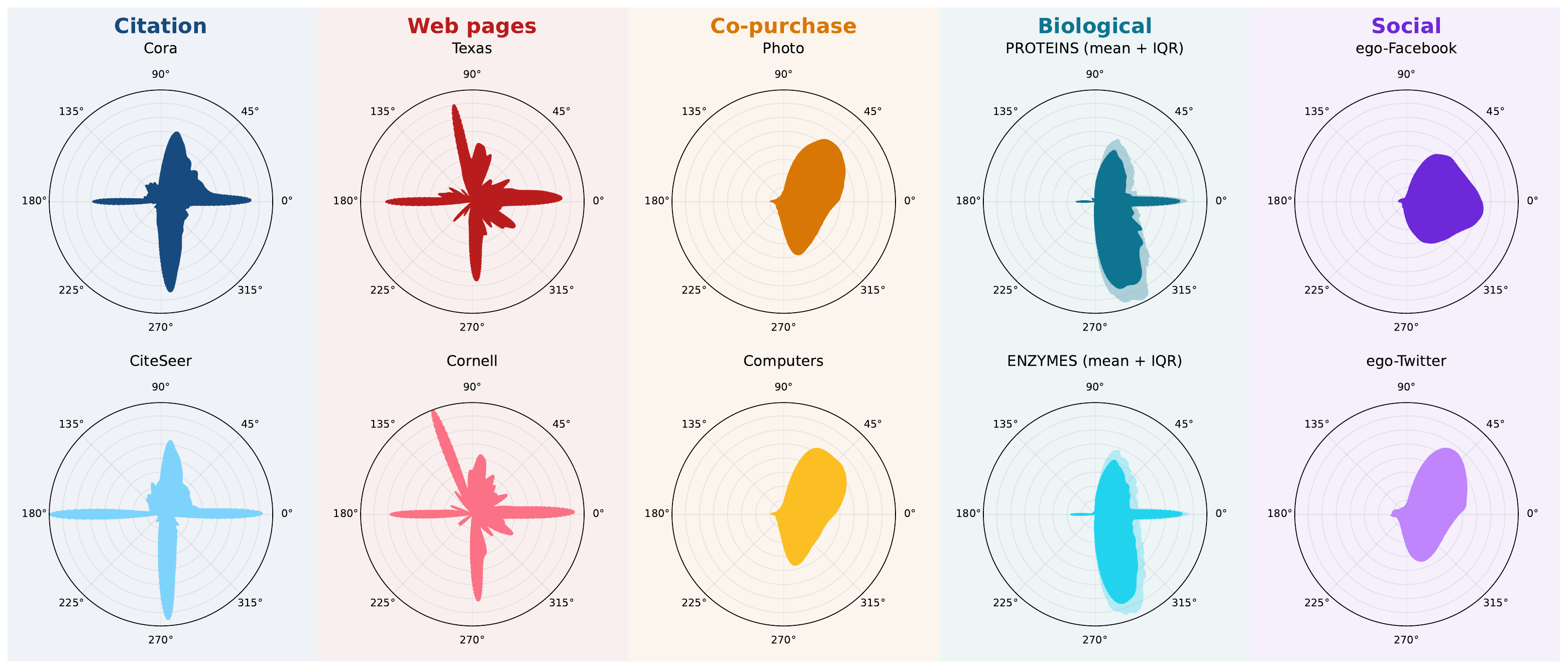}
    \vspace{-6mm}
    \caption{Moment-change fingerprints of real-world graphs. 
    Each profile shows the smoothed edge distribution in the normalized $(\Delta m_2(e), \Delta m_3(e))$ space. 
    Similar profiles indicate similar distributions of local edge-centered motifs. Visualization details are provided in Appendix~\ref{ap:edge_moment_direction}.}
    \label{fig:fingerprint_graphs} 
\end{figure}

\paragraph{Edge-level signatures aggregate into graph-level fingerprints.} 
We next move from individual edges to real-world graphs. 
As shown in Figure~\ref{fig:fingerprint_graphs}, graphs from the same domain often exhibit similar moment-change profiles, while different domains can show distinct patterns. 
These distributions provide compact graph-level fingerprints of the structural edge types present in a graph. 

\noindent \textit{Different fingerprints reveal structural differences across domains.} 
Citation and web-page graphs show broadly similar profiles but differ clearly in the second quadrant. 
Web-page graphs contain more edges in this region, which are often bridge-like edges connected to clustered regions. 
This pattern is consistent with hub pages serving navigational roles across otherwise distinct local page clusters, whereas citation edges more often connect papers within the same topical community.

\noindent \textit{Similar fingerprints suggest shared edge-formation mechanisms.} 
Co-purchase networks and social networks exhibit similar profiles despite belonging to different domains.  
This similarity may reflect shared-neighborhood mechanisms: products connect through common transactions or categories, while people connect through shared affiliations, interests, or mutual friends~\citep{kossinets2006empirical}. 
Such mechanisms promote dense communities, triangles, and hub--periphery patterns~\citep{leskovec2009community}, leading to similar moment-change profiles.

\section{Moment-preserving sampling retain structural Properties}
\label{sec:graph_properties}

Section~\ref{sec:graph_summarization} shows that edge-wise moment changes encode structural information. 
We now ask whether preserving the moments also retains structural properties associated with them. 
Because $P$ normalizes transitions by node degree and $m_k=n^{-1}\operatorname{Tr}(P^k)$ averages over all nodes, \textit{these moments naturally align with normalized or degree-weighted properties rather than raw, size-dependent counts.} 
This makes moment preservation particularly relevant when reducing graph complexity while retaining structural characteristics, as in graph sparsification~\citep{spielman2008graph}, representative graph sampling~\citep{leskovec2006sampling}, and graph augmentation~\citep{rong2019dropedge}. 

To examine this relationship, we perform \textit{moment-preserving edge sampling}. Let $G_0$ be the original graph and set the target profile to its moments, $\mathbf{m}^*=\mathbf{m}(G_0)$. 
We progressively remove edges while minimizing deviation from this target and track which structural properties remain stable. 
We study four properties related to the moments and compare their preservation against uniform random edge removal at the same removal ratios. 
Figure~\ref{fig:preserve_graph_properties} shows representative results, with  additional results and derivations are provided in Appendix~\ref{ap:preserve_graph_properties}.

\begin{figure}[t]
    \centering
    \begin{minipage}{0.24\linewidth}
        \centering
        \includegraphics[width=\linewidth]{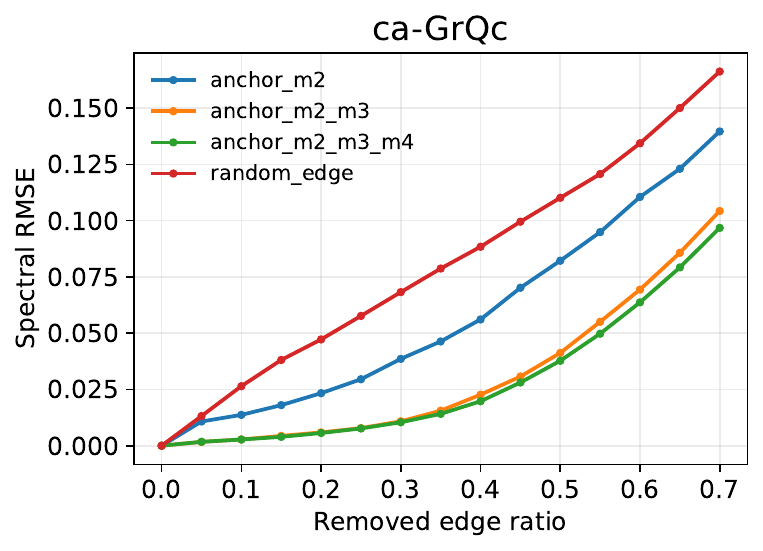}
    \end{minipage}
    \hfill
    \begin{minipage}{0.24\linewidth}
        \centering
        \includegraphics[width=\linewidth]{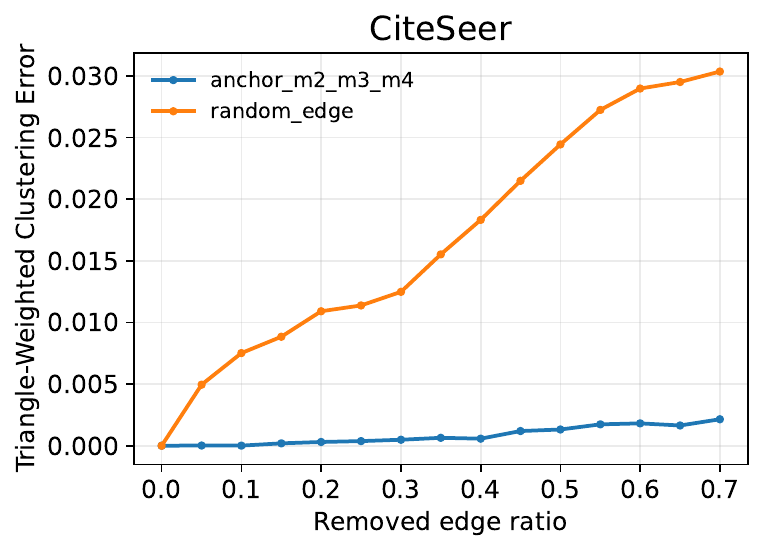}
    \end{minipage}
    \hfill
    \begin{minipage}{0.24\linewidth}
        \centering
        \includegraphics[width=\linewidth]{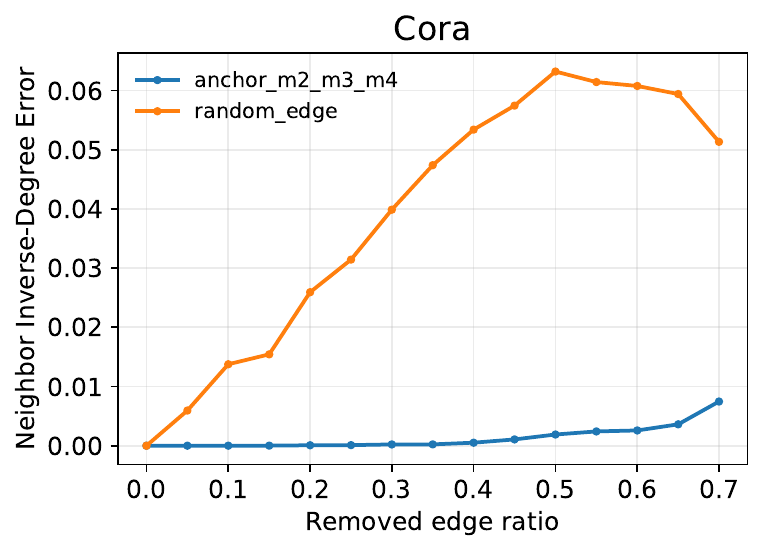}
    \end{minipage}
    \hfill
    \begin{minipage}{0.24\linewidth}
        \centering
        \includegraphics[width=\linewidth]{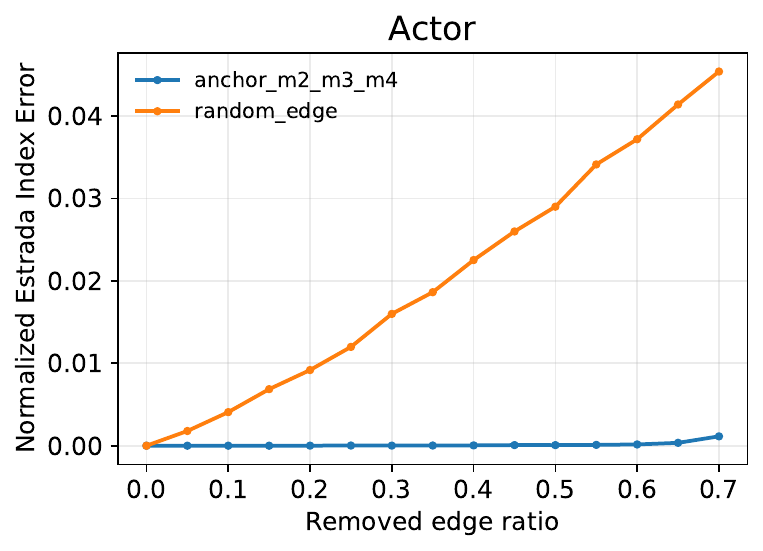}
    \end{minipage}

    \caption{
    Preservation of four structural properties under moment-preserving edge sampling: \textit{normalized spectrum}, \textit{triangle-weighted clustering}, \textit{neighbor inverse degree}, and \textit{normalized Estrada index}. 
    Moment-preserving sampling yields smaller deviations than uniform random edge removal across all four properties. 
    For the spectrum, preserving more moment orders further reduces RMSE. 
    }
    \label{fig:preserve_graph_properties}
\end{figure}

\paragraph{Normalized spectrum.} 
Moments characterize the eigenvalue distribution, and finitely many moments can approximate a graph spectrum~\citep{cohen2018approximating,kong2017spectrum}. 
Preserving more moments constraints the spectrum more strongly. 
We measure spectral preservation using the root mean squared error (RMSE) between the eigenvalues of the original and sampled graphs. 
Figure~\ref{fig:preserve_graph_properties} shows that preserving additional moment orders further reduces the spectral RMSE.

\paragraph{Triangle-weighted clustering coefficient.} 
The third moment $m_3$ is a degree-weighted sum over triangles and is therefore closely related to clustering. 
Let $C_i=\frac{2|\Delta_i|}{d_i(d_i-1)}$, $\Delta_i = \bigl\{\{j,k\}: j,k \in N_i,\ (j,k)\in E \bigr\}$ denote the local clustering coefficient and the set of triangles incident to node $i$. 
This relation motivates the \textit{triangle-weighted} clustering coefficient 
\begin{equation}
\widetilde C_i
=
C_i\cdot
\mathbb{E}_{\{j,k\}\sim \Delta_i}
\left[
\frac{1}{\sqrt{d_j d_k}}
\right],
\label{eq:simplified_clustering}
\end{equation}
Compared with standard clustering, $\widetilde C_i$ assigns greater weight to triangles involving lower-degree neighbors. 
We evaluate preservation using the absolute error of its graph-level mean.

\paragraph{Degree structure: accessibility-weighted inverse degree of $r$-step reachable nodes.} 
The second moment has a direct interpretation in terms of neighboring degrees: $m_2=\frac{1}{n}\sum_i\mathbb{E}_{j\sim N(i)}[1/d_j]$. 
Thus, $m_2$ measures the average inverse degree observed from one-hop neighbors. 

More generally, higher even-order moments extend this interpretation to $r$-step reachable nodes. 
For an $r$-step walk, define 
\begin{equation}
a_i^{(r)}(j)
=
d_i(P^r)_{ji}
=
\sum_{\pi:i\overset{r}{\leadsto}j}
\prod_{v\in\operatorname{Int}(\pi)}\frac{1}{d_v},
\qquad
m_{2r}
=
\frac{1}{n}
\sum_i
\mathbb{E}
\left[
\left.
\frac{a_i^{(r)}(X_r)}{d_{X_r}}
\right|
X_0=i
\right].
\label{eq:degree_structure}
\end{equation}
Here, $X_0=i$ and $X_r$ denote the starting and $r$-step reached nodes, respectively; $\pi:i\overset{r}{\leadsto}j$ denotes an $r$-step walk from $i$ to $j$, and $\operatorname{Int}(\pi)$ its internal nodes. 
Thus, $m_{2r}$ captures the inverse degree of $r$-step reachable nodes weighted by their accessibility. 
In our experiments, we use the $r=1$ case and measure the absolute error of average neighbor inverse degree. 


\paragraph{Diffusion-based connectivity.} 
The moment $m_k$ is the average probability that a $k$-step random walk returns to its starting node. 
This return probability reflects how strongly diffusion remains concentrated around its origin: larger values indicate more locally confined diffusion, while smaller values indicate broader spreading through the graph. 
The collection of moments therefore reflects this diffusion-based connectivity across different walk lengths. 
To aggregate return behavior across different walk lengths, we use a normalized variant of the Estrada index~\citep{jin2020spectral,estrada2002characterization} $\frac{1}{n}EE_P(G)=\frac{1}{n}\sum_{j=1}^{n}e^{\lambda_j}=\frac{1}{n}\operatorname{Tr}(e^P)=\frac{1}{n}\sum_{k=0}^{\infty}\frac{\operatorname{Tr}(P^k)}{k!}=\sum_{k=0}^{\infty}\frac{m_k}{k!}$.  
Thus, the normalized Estrada index summarizes multi-scale return behavior as a diffusion-based connectivity measure. 
We measure its preservation using absolute error.

Figure~\ref{fig:preserve_graph_properties} shows that moment-preserving sampling produces smaller deviations than uniform random edge removal across all four properties. 
Together, these results indicate that controlling spectral moments can retain multiple structural characteristics while the graph is progressively sampled.

\section{moment-guided sampling for graph learning}
\label{sec:improve_gnns}
Section~\ref{sec:graph_summarization} and~\ref{sec:graph_properties} show that moment encode and preserve structural information. 
We next explore the potential of this structural control for graph learning through two case studies on Cora and Citeseer: supervised node classification and graph contrastive learning.

\begin{figure}[t]
    \centering
    \begin{subfigure}{0.42\linewidth}
        \centering
        \includegraphics[width=\linewidth]{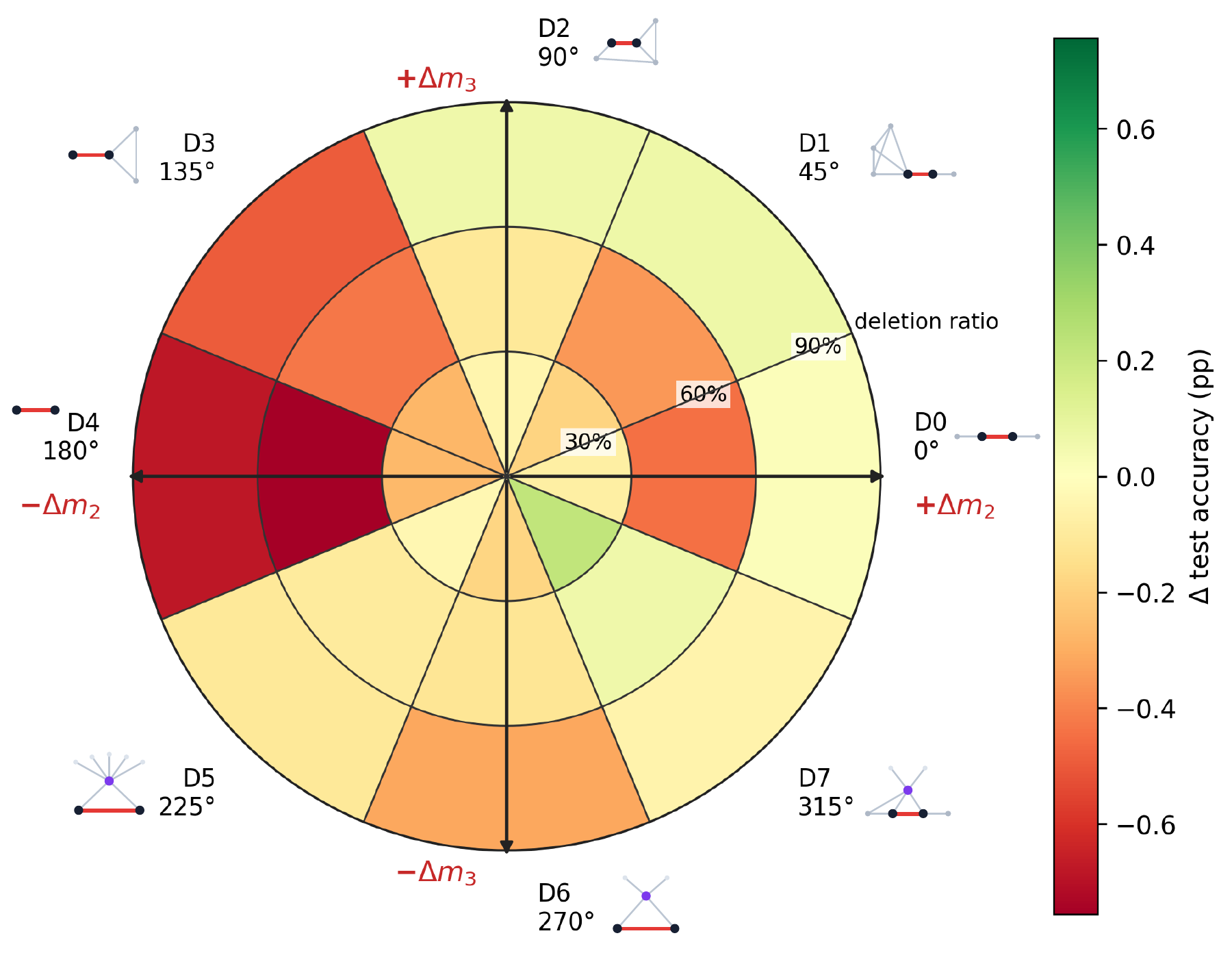}
        \caption{$\Delta$ Test accuracy on Cora.}
        \label{fig:supervised_cora}
    \end{subfigure}
    \hspace{0.12\linewidth}
    \begin{subfigure}{0.42\linewidth}
        \centering
        \includegraphics[width=\linewidth]{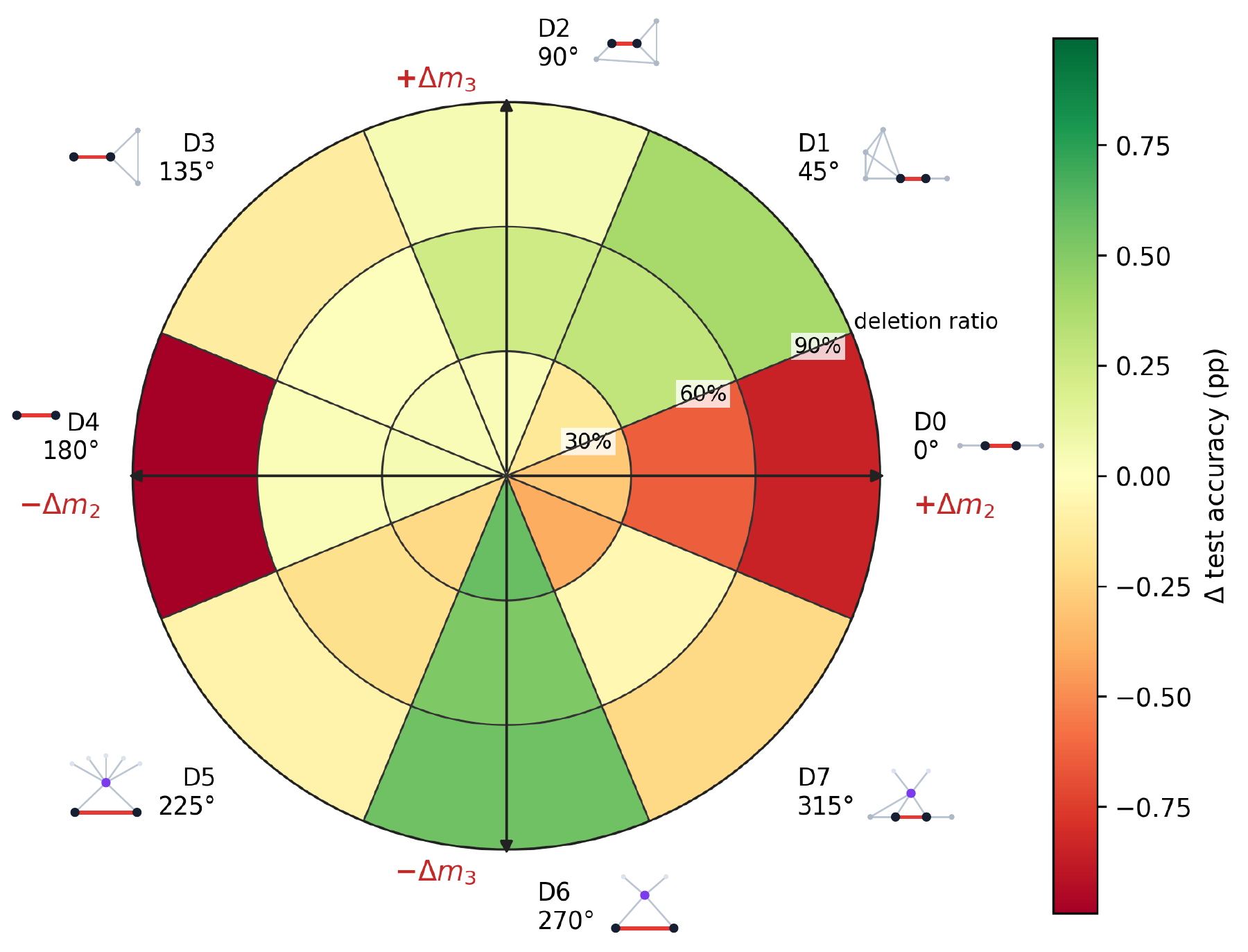}
        \caption{$\Delta$ Test accuracy on Citeseer.}
        \label{fig:supervised_citeseer}
    \end{subfigure}
    \vspace{-2mm}
    \caption{
    Effect of removing edges from different moment-change directions on supervised node classification. 
    Edges are divided into eight directions ($D0$--$D7$), each associated with a representative motif. 
    The radius indicates the deletion ratio within each direction, and the heatmap shows the test--accuracy difference from uniform random edge removal under the same budget. 
    Edges connecting training nodes with the same label are excluded from deletion. 
    }
    \label{fig:supervised}
\end{figure}

\paragraph{Moment changes reveal structural sensitivity in supervised learning.} 
The moment change $\Delta\mathbf{m}(e)$ characterizes the structural role of an edge, allowing edges with similar local structure to be grouped together. 
We divide edges into eight moment-change directions, remove different fractions from each group, and compare against uniform random edge removal under the same budget. 
As shown in Figure~\ref{fig:supervised}, removing many edges from $D4$ consistently degrades performance on both datasets. 
This degradation is associated with more isolated nodes, which limits neighborhood aggregation in these homophily graphs. 
These results illustrate how moment changes provide an interpretable way to analyze the sensitivity of supervised learning to different edge structures.

\paragraph{Moment-guided sampling as graph augmentation.} 

\begin{wraptable}{r}{0.46\textwidth}
\vspace{-1em}
\centering
\caption{Unsupervised node classification accuracy (\%) on Citeseer and Cora. Results are reported as mean $\pm$ standard deviation.}
\label{tab:unsupervised}
\small
\setlength{\tabcolsep}{4pt}
\begin{tabular}{lcc}
\toprule
Method & Citeseer & Cora \\
\midrule
GRACE   & $71.15 \pm 0.73$ & $82.99 \pm 1.53$ \\
MVGRL   & $72.41 \pm 1.30$ & $84.18 \pm 0.98$ \\
GCL-SPAN & $71.94 \pm 0.68$ & $84.07 \pm 0.59$ \\ \midrule
Ours     & $\mathbf{72.65 \pm 0.65}$ & $\mathbf{84.58 \pm 0.56}$ \\
\bottomrule
\end{tabular}
\vspace{-1em}
\end{wraptable}

Graph contrastive learning relies on augmented views that perturb the graph while retaining useful structural information~\citep{zhu2021graph}. 
Since moment-preserving sampling retains several structural properties, we use it to construct an augmented view and compare with GRACE, MVGRL, and GCL-SPAN~\citep{zhu2020deep,hassani2020contrastive,lin2022spectral} on unsupervised node classification. 
As shown in Table~\ref{tab:unsupervised}, our method achieves competitive results on both Citeseer and Cora, despite not being specifically designed for graph contrastive learning. 
Additional experimental details are provided in Appendix~\ref{ap:unsupervised_node_classification}.

%% file: Sections/Related_Work.tex
\section{Related Work}
\textbf{Graph sampling and sparsification.}
Graph sampling and sparsification reduce graph complexity while preserving structural or learning-relevant properties~\citep{chen2023demystifying}. 
Existing methods include random edge dropping for regularization and augmentation~\citep{rong2019dropedge},  subgraph sampling for scalable training~\citep{zeng2019graphsaint}, or graph partitioning to retain community structure~\citep{chiang2019cluster}. 
However, these methods do not explicitly quantify how each edge edit changes a prescribed global structural representation. 
Our method addresses this gap by measuring edge-wise moment changes and selecting edits that move the graph toward a target moment profile.

\textbf{Applications of spectral moments.} 
Spectral moments provide compact summaries of graph structure and diffusion. 
As shown in Appendix~\ref{ap:relationship_diff_moments}, heat trace and personalized PageRank (PPR) trace can be expressed as weighted sums of moments, connecting moments to multi-scale diffusion behavior. 
This connection underlies their use in downstream tasks: 
(1) \textbf{graph representation}, where spectral signatures such as the heat trace summarize local and global structure~\citep{tsitsulin2018netlsd}, and moments provide interpretable graph embeddings~\citep{jin2020spectral};
(2) \textbf{preservation of graph properties}, where moments approximate graph spectra~\citep{cohen2018approximating} and relate to properties such as spectral radius, clustering, and the Estrada index~\citep{preciado2013structural,jin2020spectral,estrada2002characterization}; and 
(3) \textbf{graph learning}, where diffusion operators such as PPR and heat kernels, whose traces are weighted sums of moments, improve graph propagation~\citep{gasteiger2018predict,gasteiger2019diffusion}. 
Unlike these uses of moments as graph-level descriptors or diffusion operators, we use their \emph{changes} as edge-level signals for interpretable and controllable graph sampling.

%% file: Sections/Appendix.tex
\section{Relationships between diffusion kernel and moments}
\label{ap:relationship_diff_moments}

\subsection{Heat Trace: Poisson-weighted Moment Generating Function}
\label{ap:heat_trace}
The normalized heat trace~\citep{tsitsulin2018netlsd} can be seen as the poisson-weighted generating function of moments:
\begin{align}
    \bar{h}(t) = \frac{1}{n} \sum_j e^{-t\lambda_j} &= \frac{1}{n}\text{tr}(e^{-tL})=\frac{1}{n}\text{tr}(e^{-t(I-S)})=e^{-t}\frac{1}{n}\text{tr}(e^{tS}) \\
    &= e^{-t} \sum_{k=0}^{\infty} \frac{t^k}{k!} \frac{1}{n}\text{tr}(S^{k}) \\
    &= e^{-t} \sum_{k=0}^{\infty} \frac{t^k}{k!} m_k 
\end{align}
where $L=I-D^{-\frac{1}{2}}AD^{-\frac{1}{2}}$, $S=D^{-\frac{1}{2}}AD^{-\frac{1}{2}}$, which is similar to $P=D^{-1}A$, and share same eigenvalues $\lambda_i \in [-1, 1]$.
\vspace{2mm}
When $t<<1$, it focus on low-rank, local; when $t>>1$, it focus on high-rank, global, such as connectivity. Smoother than single moment. 

\subsection{PPR Trace: Geometric-weighted Moment Generating Function}
\label{ap:ppr}
Since $P$ is a transition matrix with spectral radius $\rho(P)\leq1$ and $\alpha \in(0,1)$.By definition of PPR and Neumann series, we have 
\begin{equation}
    \Pi_\alpha = \alpha(I - (1-\alpha)P)^{-1} = \alpha \sum_{k=0}^\infty (1-\alpha)^k P^k, 
\end{equation}
Where $\Pi_\alpha$ is pagerank value distribution, and $\alpha$ is restart probability. 
\begin{equation}
    \frac{\operatorname{Tr}(\Pi_\alpha)}{n} = \alpha \sum_{k=0}^\infty (1-\alpha)^k m_k.
\end{equation}
This result demonstrates that moments summarize the global return behavior of diffusion operators, thus preserving moments is related to preserving quantities that underlie PPR diffusion. 

\section{Additional Details for Moments-guided edge sampling}
\label{ap:method_details}

To make our proposed methods in Section~\ref{sec:combinatorial_method},~\ref{sec:algebraic_method} and~\ref{sec:selection} more concrete (e.g., how to update local statistics), we provide more details about the full process in the following sections and pseudocode: 
\begin{center}
\begin{tabular}{ll}
Unified sampling framework
& Section~\ref{ap:sampling_framework}, Algorithm~\ref{alg:moment-guided-sampling} \\
\quad Combinatorial change
& Section~\ref{ap:combinatorial_delta}, Algorithm~\ref{alg:combinatorial-delta} \\
\quad\quad Local-statistic maintenance
& Section~\ref{ap:local_statistics}, Algorithms~\ref{alg:apply-add}--\ref{alg:delta-h} \\
\quad Low-rank change
& Section~\ref{ap:algebric_method}, Algorithms~\ref{alg:lowrank-delta}--\ref{alg:trace-delta} \\
Implementation optimizations
& Section~\ref{ap:efficiency}
\end{tabular}
\end{center}

\subsection{Unified Sampling Framework}
\label{ap:sampling_framework}

In this paper, we propose two exact moment-delta computation methods: the combinatorial closed-walk delta in Section~\ref{sec:combinatorial_method} and the algebraic low-rank delta in Section~\ref{sec:algebraic_method}. 
They serve different needs. 
The combinatorial method gives fast closed-form updates for low-order moments, but is difficult to extend to higher-order moments. 
The algebraic method extends naturally to higher-order moments and batched edits, but has a higher cost. 
Thus, both methods can be used as interchangeable delta-computation modules in a unified sampling framework. 

Algorithm~\ref{alg:moment-guided-sampling} presents this shared framework.
Each candidate edge edit is written as $\epsilon=(o,u,v)$, where $o\in\{\textsc{Add},\textsc{Delete}\}$ specifies the edit type and $(u,v)$ is the node pair to be edited. 
The abstract routine \textsc{DeltaMoment} computes the moment delta for a candidate edit, and can be implemented by either \textsc{CombinatorialDelta} in Algorithm~\ref{alg:combinatorial-delta} or \textsc{LowRankDelta} in Algorithm~\ref{alg:lowrank-delta}.  
After an edit $\epsilon^\ast$ is selected, \textsc{ApplyOperation} applies it to the graph and updates any state required by the chosen delta method. 
For \textsc{CombinatorialDelta}, it calls \textsc{ApplyAdd} or \textsc{ApplyDelete} to maintain $d_i,W_i,M_{ij},H_i$. 
For \textsc{LowRankDelta}, no persistent local statistics are required, so \textsc{ApplyOperation} simply applies $\epsilon^\ast$ to $G$.

\subsection{Combinatorial change for Low-order moments}
\label{ap:combinatorial_delta}

We present \textsc{CombinatorialDelta} in Algorithm~\ref{alg:combinatorial-delta}. 
The edge-deletion case is derived in Section~\ref{sec:combinatorial_method}; here we provide the corresponding formulas for edge addition.

\paragraph{Edge addition induced $\Delta m2$.} 
Adding a new edge $(u, v)\notin E$ introduces one new term in $S_2$, and decreases the weights of existing edges incident to $u$ or $v$, because their degrees increase. 
For node $u$, the degree changes from $d_u$ to $d_u+1$, so the total change over its existing incident edges is 
\[
\sum_{k\in\mathcal{N}(u)}
\left(
\frac{1}{(d_u+1)d_k}
-
\frac{1}{d_ud_k}
\right)
=
-\frac{W_u}{d_u(d_u+1)}.
\]
Combining both endpoints with the new edge contribution gives
\begin{equation}
\Delta S_2^{\mathrm{add}(u,v)}
=
\underbrace{
\frac{1}{(d_u+1)(d_v+1)}
}_{\text{new edge}}
-
\underbrace{
\left[
\frac{W_u}{d_u(d_u+1)}
+
\frac{W_v}{d_v(d_v+1)}
\right]
}_{\text{down-weighted incident edges}}.
\end{equation}

\paragraph{Edge addition induced $\Delta m3$.} 
Adding $(u, v) \notin E$ creates one new triangle for each common neighbor of $u$ and $v$. 
The total mass of these new triangles is $M_{uv}/((d_u+1)(d_v+1))$. 
At the same time, existing triangles incident to $u$ or $v$ are down-weighted because the endpoint degrees increase. 
For node $u$, this change is 
\[
-\frac{1}{d_u(d_u+1)}
\sum_{\{j,k\}\subseteq\mathcal{N}(u),\{j,k\}\in E}
\frac{1}{d_jd_k}
=
-\frac{H_u}{d_u(d_u+1)}.
\]
Thus, 
\begin{equation}
\Delta S_3^{\mathrm{add}(u,v)}
=
\underbrace{
\frac{M_{uv}}{(d_u+1)(d_v+1)}
}_{\text{new triangles}}
-
\underbrace{
\sum_{i\in\{u,v\}}
\frac{H_i}{d_i(d_i+1)}
}_{\text{down-weighted incident triangles}}.
\end{equation}
Finally, the normalized moment deltas are
\[
\Delta m_2=\frac{2}{n}\Delta S_2,
\qquad
\Delta m_3=\frac{6}{n}\Delta S_3.
\]

\subsection{Maintaining Local Statistics}
\label{ap:local_statistics}

The combinatorial method relies on the local statistics $d_i,W_i,M_{ij},H_i$. 
After each edge edit, these quantities must be updated before evaluating the next candidate, so that the moment deltas remain accurate. 
Algorithms~\ref{alg:delta-h},~\ref{alg:apply-add}, and~\ref{alg:apply-delete} give the local update rules for maintaining them without recomputing the whole graph. 

\textsc{DeltaHADD} and \textsc{DeltaHDelete} compute the changes to the triangle masses $H_i$ caused by an edge addition or deletion. 
\textsc{ApplyAdd} and \textsc{ApplyDelete} then update the graph, degrees, neighborhoods, and the maintained statistics $W_i,M_{ij},H_i$. 
All updates are restricted to the edited endpoints and their neighborhoods.

\subsection{Low-rank change for general Edge Edits}
\label{ap:algebric_method}

We represent \textsc{LowRankDelta} in Algorithm~\ref{alg:lowrank-delta}. 
In our experiments, we focus on single-edge deletion. 
For deleting $(u, v)\in E$, assuming $d_u > 1$ and $d_v > 1$, the changed columns are 
\[
p'_u=\frac{A_{:u}-e_v}{d_u-1},
\qquad
p'_v=\frac{A_{:v}-e_u}{d_v-1}.
\]
Thus,
\[
c_u=p'_u-p_u=\frac{p_u-e_v}{d_u-1},
\qquad
c_v=p'_v-p_v=\frac{p_v-e_u}{d_v-1}.
\]
Similarly, for adding a non-edge $(u,v) \notin E$, the changed columns are 
\[
p'_u=\frac{A_{:u}+e_v}{d_u+1},
\qquad
p'_v=\frac{A_{:v}+e_u}{d_v+1},
\]
and therefore
\[
c_u=p'_u-p_u=\frac{e_v-p_u}{d_u+1},
\qquad
c_v=p'_v-p_v=\frac{e_u-p_v}{d_v+1}.
\]
After obtaining the changed columns, we construct $C$ and $R$ and compute $H_t$. 
For a batch $\mathcal{B}$, we first apply all edits locally, then set $c_i=P'_{:i}-P_{:i}$ for each touched endpoint $i\in T$. 
This automatically captures interactions among edited edges through the updated touched columns.

\input{Sections/Appendix_Interaction_Locality}
\subsection{Efficient Implementation}
\label{ap:efficiency}
Moment-guided sampling has two main computational bottlenecks: \textit{evaluating moment changes for individual candidate edits} and \textit{repeatedly rescoring candidates as the graph evolves}. 
We address the first by exploiting the endpoint structure of the low-rank formulation, computing only the transitions needed between touched endpoints in Section~\ref{ap:efficiency_endpoint_transition}. 
For the second, the locality and bounded interactions in Section~\ref{ap:interaction} show that an edge edit affects other candidates primarily within a local region.  
Motivated by this observation, we introduce a lazy top-$k$ rescoring strategy (Section~\ref{ap:lazy_heap}) for both the combinatorial and low-rank methods, reducing the number of exact candidate evaluations while empirically preserving sampling quality (Section~\ref{ap:lazy_results}).

\subsubsection{Endpoint low-rank computation} 
\label{ap:efficiency_endpoint_transition}
In practice, we often use single-edge deletion, i.e., $|\mathcal{B}|=1$, to avoid interactions among multiple simultaneous edits. 
For deleting an edge $(u,v)$, the touched endpoint set is $T=\{u,v\}$ so $r=2$. 
Then each $H_t=R^\top P^tC$ is only a $2\times2$ matrix. 
However, sparse propagation first computes $P^tC\in\mathbb{R}^{n\times2}$ over all $n$ nodes and then keeps only the two rows indexed by $u$ and $v$. 
This costs $O(Km)$ and is wasteful when only four entries of each $H_t$ are needed. 
We therefore compute these entries directly from endpoint-to-endpoint transition probabilities. 

For $a\in\{u,v\}$, one entry of $H_t$ is 
\begin{align}
H_t(a,u)
&=
e_a^\top P^t c_u \\
&=
e_a^\top P^t \frac{Pe_u-e_v}{d_u-1} \\
&=
\frac{e_a^\top P^{t+1}e_u-e_a^\top P^t e_v}{d_u-1}.
\end{align}
Define $s_t(a,b)=e_a^\top P^t e_b$. 
Then
\begin{equation}
H_t(a,u)
=
\frac{s_{t+1}(a,u)-s_t(a,v)}{d_u-1}.
\end{equation}
Similarly, the full matrix can be written as
\begin{equation}
\renewcommand{\arraystretch}{1.4}
H_t =
\begin{bmatrix}
\displaystyle\frac{s_{t+1}(u,u)-s_t(u,v)}{d_u-1}
&
\displaystyle\frac{s_{t+1}(u,v)-s_t(u,u)}{d_v-1}
\\[1.1em]
\displaystyle\frac{s_{t+1}(v,u)-s_t(v,v)}{d_u-1}
&
\displaystyle\frac{s_{t+1}(v,v)-s_t(v,u)}{d_v-1}
\end{bmatrix}.
\end{equation}
If deleting $(u,v)$ isolates an endpoint, e.g., $d_u=1$, we use the direct column change $c_u=-e_v$, so $H_t(a,u)=-s_t(a,v)$. 

Thus, computing $H_t$ reduces to estimating endpoint transition probabilities $s_t(a,b)$ instead of propagating the full sparse product $P^tC$ over the whole graph. 
This only requires propagation within the local neighborhoods of the endpoints. 
The cost becomes 
\[
O\left(\sum_{t=0}^{K}\operatorname{vol}\left(B_t(u)\cup B_t(v)\right)\right),
\]
where $B_t(u)$ is the $t$-hop neighborhood of $u$, and $\operatorname{vol}(\cdot)$ denotes the sum of degrees in the set. 
For low-order moments, this local propagation remains small: to compute $m_k$, we only need transition probabilities up to order $t=k-1$.


\subsubsection{lazy Top-$K$ candidate rescoring}
\label{ap:lazy_heap}
Exhaustively rescoring all candidates after every edge edit is expensive, especially when the sampling budget grows with graph size. 
Motivated by the locality and bounded interactions in Section~\ref{ap:interaction}, we use a lazy top-$K$ strategy that avoids repeatedly evaluating all candidates. 
We first compute the scores of all candidate edges on the original graph and store them in a min-heap. 
At each sampling step, we retrieve the $K$ candidates with the smallest cached scores and recompute only their scores on the current graph. 
We then select the candidate with the smallest updated score, apply the edit, and update its entry in the heap, while the remaining candidates retain their cached scores. 
The selected edge is removed from the heap, and the procedure repeats until the sampling budget is reached. 
This reduces each step from rescoring all $m$ candidates to evaluating only $K\ll m$ candidates, trading exact greedy selection for substantially lower computation. 
We evaluate the resulting sampling quality and runtime in Section~\ref{ap:lazy_results}.

\subsection{Complexity and efficiency evaluation}
\label{ap:lazy_results}
We evaluate the efficiency of moment-guided sampling from both theoretical and empirical perspectives. 
Section~\ref{ap:complexity_analysis} shows that the combinatorial method evaluates low-order moment changes in $O(1)$ time per candidate, while the low-rank method reduces single-edge computation from $O(Kmn)$ to $O(Km)$. 
Section~\ref{ap:lazy_preserve} shows that the heuristic lazy top-$K$ strategy introduced in Section~\ref{ap:lazy_heap} retains comparable property preservation. 
Finally, Section~\ref{ap:runtime_compare} shows that lazy rescoring substantially reduces runtime, achieving over $2\times$ speedup in most settings and up to $6.45\times$. 

\subsubsection{Complexity analysis}
\label{ap:complexity_analysis}
We compare the cost of computing moment changes up to order $K$. 
A direct method in Eq.~\ref{eq:delta_moment} recomputes the matrix powers of the updated transition matrix after each edge removal. 
This costs $O(Kmn)$ time if sparse matrix multiplication is used and the powers become dense, and $O(Kn^3)$ time in the dense worst case. 
It also requires $O(n^2)$ space. 
For the combinatorial method, once the local statistics $d_i,W_i,M_{ij},H_i$ are maintained, computing $\Delta m_2$ and $\Delta m_3$ for one candidate edge only requires constant-time lookups and arithmetic, i.e., $O(1)$ time per candidate. 
Its space cost is $O(n+|\mathcal{P}|)$, where $\mathcal{P}=\{(i,j):\mathcal{N}(i)\cap\mathcal{N}(j)\neq\emptyset\}$ stores pairs with nonzero weighted common-neighbor mass; in the worst case, this becomes $O(n^2)$. 
For the low-rank method, let $T$ be the touched endpoint set and $r=|T|$. 
Constructing the low-rank update $C$ only depends on the old and new neighbors of touched endpoints, costing $\tau=O(\sum_{i\in T}(d_i+d'_i))$. 
Computing all small matrices $H_t=R^\top P^tC$ for $t=0,\ldots,K-1$ by sparse propagation costs $O(Kmr)$. 
After these matrices are obtained, the multiplication between $r\times r$ matrices costs $O(Kr^3)$. 
For fixed moment orders, the terms and coefficients in Eq.~\ref{eq:low_rank_delta} can be precomputed once and reused across candidate edits. 
Thus, $O(\tau+Kmr+Kr^3)\approx O(Kmr+Kr^3)$, and the space cost is $O(m+\tau+Kr^2)$. 

For a single-edge removal $e=(u,v)$, we have $r=2$ and $\tau=O(d_u+d_v)$. 
Per candidate, direct power computation costs $O(Kmn)$ with sparse propagation, or $O(Kn^3)$ in the dense worst case. 
In contrast, the combinatorial method costs $O(1)$, and the low-rank method costs $O(Km)$. 
Their space costs are $O(n^2)$, $O(n+|\mathcal{P}|)$, and $O(m+d_u+d_v+K)$, respectively. 
Thus, the combinatorial method is the fastest for $m_2$ and $m_3$, while the low-rank method is more general and still much faster than direct power computation: its time ratio to direct sparse computation is $(2Km)/(Kmn)=O(1/n)$. 
The $O(Km)$ low-rank cost for single-edge removal can be further optimized in Appendix~\ref{ap:efficiency}. 

\subsubsection{lazy top-$K$ preserves structural properties}  
\label{ap:lazy_preserve}
To evaluate sampling quality of lazy top-$K$ rescoring, we examine the same structural properties as in Section~\ref{sec:graph_properties}.
Figure~\ref{fig:lazy_preservation} compares original moment-guided sampling with lazy top-$K$ variants using different values of $K$.  
Larger $K$ more closely matches original sampling, and $K=25$ already achieves nearly identical property preservation.

\begin{figure}[t]
    \centering
    \begin{minipage}{0.24\linewidth}
        \centering
        \includegraphics[width=\linewidth]{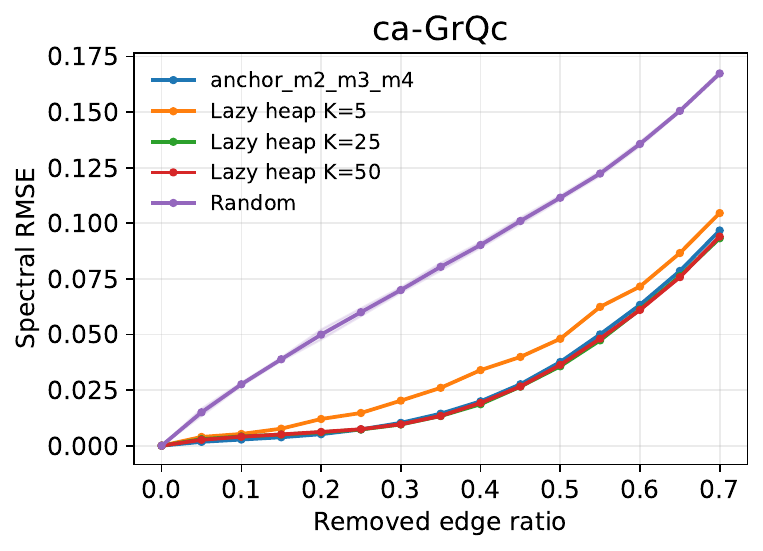}
    \end{minipage}
    \hfill
    \begin{minipage}{0.24\linewidth}
        \centering
        \includegraphics[width=\linewidth]{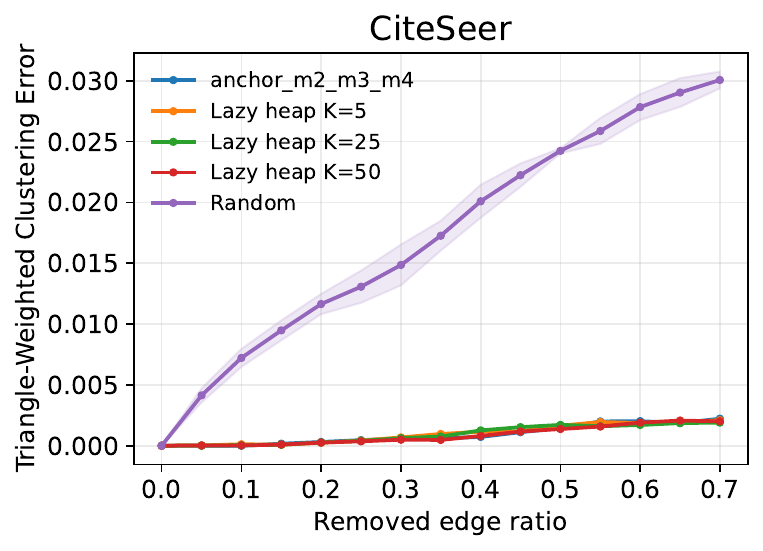}
    \end{minipage}
    \hfill
    \begin{minipage}{0.24\linewidth}
        \centering
        \includegraphics[width=\linewidth]{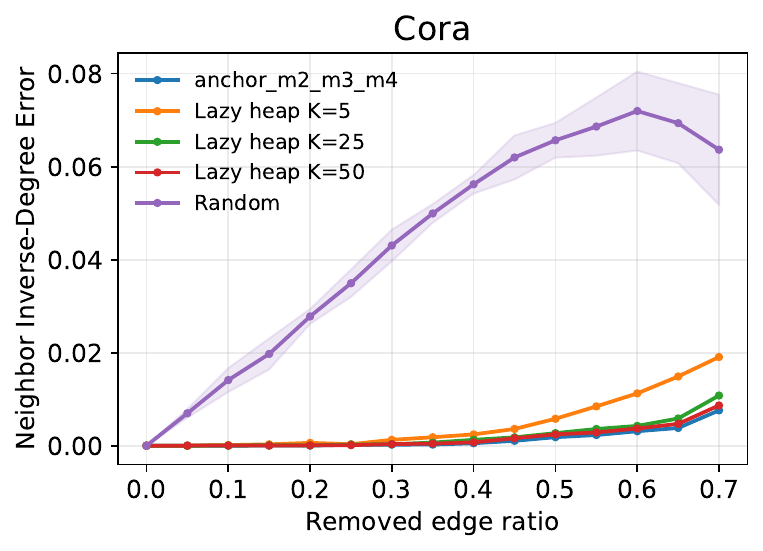}
    \end{minipage}
    \hfill
    \begin{minipage}{0.24\linewidth}
        \centering
        \includegraphics[width=\linewidth]{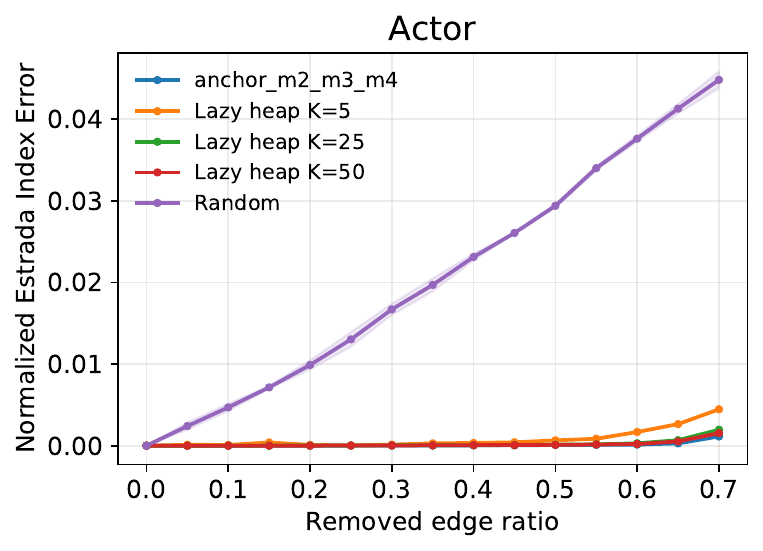}
    \end{minipage}

    \caption{
    Structural-property preservation under original full-rescoring method and lazy top-$K$ candidate rescoring. 
    The \texttt{anchor\_m2\_m3\_m4} curve denotes the original method, using the same moment-preserving setting as in Figure~\ref{fig:preserve_graph_properties}. 
    Each lazy variant preserve the same moment orders but only rescoring only the top-$K$ candidates at each steo. 
    Lazy top-$K$ closely tracks original method, with larger $K$ yielding closer agreement; $K=25$ already nearly matches the original baseline. 
    Random edge removal produces substantially deviations. 
    }
    \label{fig:lazy_preservation}
\end{figure}

\subsubsection{Runtime comparison} 
\label{ap:runtime_compare}

We compare the preprocessing and per-deletion runtime of direct trace, combinatorial, low-rank, and hybrid moment-change computation, together with their lazy top-$K$ variants.
Table~\ref{tab:sampling_efficiency} reports the results on Cora using a single CPU thread. 
The results show four main trends: 
\begin{enumerate}
    \item Both the combinatorial and low-rank methods are substantially faster than direct trace recomputation. 
    \item For low-order moments, the combinatorial method is much faster than low-rank method. 
    For $(m_2,m_3)$, it is nearly $50\times$ faster because maintained local statistics enable constant-time candidate evaluation. 
    \item The hybrid method does not improve runtime over pure low-rank computation for $(m_2,m_3,m_4)$ and is slightly slower in our measurements. 
    This is because the low-rank cost is dominated by the highest moment order: $m_4$ already requires the necessary $H_t$ matrices, while obtaining lower-order moments from them adds little additional cost. 
    \item Lazy top-$K$ rescoring further reduces runtime, achieving $1.62\times$--$6.45\times$ speedup over full rescoring, with smaller $K$ consistently yielding greater speedup. 
\end{enumerate}


\begin{table}[t]
\centering
\caption{
Sampling efficiency on Cora using a single CPU thread. 
\textit{Direct trace} computes edge-wise moment changes by directly evaluating the trace difference in Eq.~\ref{eq:delta_moment}; due to its high cost, it is measured over two sequential deletions, while the other methods are evaluated over 100 deletion. 
Per-deletion time excludes preprocessing. 
\textit{Hybrid} uses the combinatorial method for $m_2,m_3$ and the low-rank method for $m_4$.
A dash in the lazt top-$K$ column denotes the original full-rescoring method, while a numeric $K$ denotes lazy top-$K$ rescoring. 
Speedup is measured relative to the corresponding full-rescoring configuration using the same backend and moment orders.} 
\label{tab:sampling_efficiency}
\small
\setlength{\tabcolsep}{5pt}
\begin{tabular}{lllrrr}
\toprule
\multirow{2}{*}{Moment orders}
& \multirow{2}{*}{Backend}
& \multirow{2}{*}{Lazy top-$K$}
& \multicolumn{2}{c}{Time cost (s)}
& \multirow{2}{*}{Speedup} \\
\cmidrule(lr){4-5}
& & & Preprocessing & Per deletion & \\
\midrule
\multirow{9}{*}{$(m_2,m_3)$}
& Direct trace & -- & 0.197 & \slow{50.16513} & -- \\
\cmidrule(lr){2-6}
& \multirow{4}{*}{Combinatory}
& -- & 0.100 & 0.00067 & -- \\
& & 5  & 0.101 & \fast{0.00028} & \fast{$2.37\times$} \\
& & 25 & 0.098 & 0.00036 & $1.86\times$ \\
& & 50 & 0.100 & 0.00041 & $1.62\times$ \\
\cmidrule(lr){2-6}
& \multirow{4}{*}{Low-rank}
& -- & 0.098 & 0.03192 & -- \\
& & 5  & 0.133 & \fast{0.00728} & \fast{$4.38\times$} \\
& & 25 & 0.133 & 0.00760 & $4.20\times$ \\
& & 50 & 0.132 & 0.00793 & $4.02\times$ \\
\midrule
\multirow{9}{*}{$(m_2,m_3,m_4)$}
& Direct trace & -- & 0.104 & \slow{220.91917} & -- \\
\cmidrule(lr){2-6}
& \multirow{4}{*}{Hybrid}
& -- & 0.107 & 0.04942 & -- \\
& & 5  & 0.245 & \fast{0.00767} & \fast{$6.45\times$} \\
& & 25 & 0.170 & 0.00832 & $5.94\times$ \\
& & 50 & 0.165 & 0.00911 & $5.42\times$ \\
\cmidrule(lr){2-6}
& \multirow{4}{*}{Low-rank}
& -- & 0.101 & 0.04895 & -- \\
& & 5  & 0.154 & \fast{0.00759} & \fast{$6.45\times$} \\
& & 25 & 0.153 & 0.00813 & $6.02\times$ \\
& & 50 & 0.154 & 0.00889 & $5.51\times$ \\
\bottomrule
\end{tabular}
\end{table}

\section{Experimental details}
\label{ap:details_experiments}
This section provides additional experimental details. Appendix~\ref{ap:edge_moment_direction} describes the construction of the visualizations in Figures~\ref{fig:edge_moment_shifts} and~\ref{fig:fingerprint_graphs}. Appendix~\ref{ap:preserve_graph_properties} derives the relationships between spectral moments and the graph properties introduced in Section~\ref{sec:graph_properties}. Appendix~\ref{ap:graph_learning} provides the experimental settings for the graph learning tasks.

\subsection{Edge Moment Direction Profiles}
\label{ap:edge_moment_direction}

\paragraph{Edge-deletion moment vectors.} 
Let $\mathcal{K}=\{k_1,\ldots, k_p\}$ be an ordered collection of $p\ge2$ selected moment orders, defining the moment representation 
\begin{equation}
\mathbf m_{\mathcal K}(G)
=
\begin{bmatrix}
m_{k_1}(G)\\
\vdots\\
m_{k_p}(G)
\end{bmatrix}
\in\mathbb R^p. 
\end{equation}
For each edge $e \in E$, let $G^{-e}=(V,E\setminus\{e\})$ denote the graph obtained by deleting only $e$. 
The deletion-induced change in the $k$-th moment is 
\begin{equation}
\Delta m_k(e)=m_k(G^{-e})-m_k(G). 
\end{equation}
All edges are evaluated independently relative to the same original graph $G$, providing a common reference for their structural effects. 
The \textit{edge-deletion moment vector} is
\begin{equation}
\boldsymbol{\Delta}_e
=
\mathbf m_{\mathcal K}(G^{-e})-\mathbf m_{\mathcal K}(G)
=
\begin{bmatrix}
\Delta m_{k_1}(e)\\
\vdots\\
\Delta m_{k_p}(e)
\end{bmatrix}. 
\end{equation}
These moment changes can be computed using the combinatorial formulas for the supported orders (Section~\ref{sec:combinatorial_method}) or the algebraic method for arbitrary orders (Section~\ref{sec:algebraic_method}). 

\paragraph{Normalization.} 
Because the selected moments may have different numerical scales, we normalize each coordinate by the corresponding moment of the original graph. 
For this relative normalization, we assume $m_k(G)>0$ for every $k\in \mathcal{K}$. 
The \textit{normalized edge-deletion moment vector} is 
\begin{equation}
\widetilde{\boldsymbol{\Delta}}_e
=
\begin{bmatrix}
\Delta m_{k_1}(e)/m_{k_1}(G)\\
\vdots\\
\Delta m_{k_p}(e)/m_{k_p}(G)
\end{bmatrix}.
\end{equation}
This normalized expresses each coordinate as a relative change from the original graph and prevents either moment from dominating solely because of its numerical scale. 

\paragraph{Moment direction and magnitude.} 
We decompose each nonzero normalized vector into its magnitude and unit direction. 
Its magnitude is 
\begin{equation}
r_e
=
\left\|\widetilde{\boldsymbol{\Delta}}_e\right\|_2
=
\sqrt{
\sum_{k\in\mathcal K}
\left(\frac{\Delta m_k(e)}{m_k(G)}\right)^2
},
\end{equation}
and, for $r_e>0$, its direction is
\begin{equation}
\mathbf u_e
=
\frac{\widetilde{\boldsymbol{\Delta}}_e}{r_e}
\in\mathbb S^{p-1},
\qquad
\mathbb S^{p-1}
=
\{\mathbf u\in\mathbb R^p:\|\mathbf u\|_2=1\}.
\end{equation}
The magnitude $r_e$ measures the strength of structural change induced by deleting edge $e$, whereas $\mathbf{u_e}$ describes the relative direction of that change. 
The direction space is the unit circle for $p=2$ and the unit sphere or hypersphere for higher dimensions. 
Zero vectors have no defined direction, so the directional profile uses the edge set $E_+=\{e\in E:r_e>0\}$. 
The following density normalization assumes $E_+\ne\varnothing$. 


\paragraph{Directional density estimation.} 
We measure the separation between two unit directions using their spherical geodesic distance,  
\begin{equation}
d_{\mathbb S}(\mathbf u,\mathbf v)
=
\arccos(\mathbf u^\top\mathbf v)
\in[0,\pi].
\end{equation}
To reduce sensitivity to finite-sample noise and discretization boundaries, we smooth the edge directions using a Gaussian kernel based on this distance, 
\begin{equation}
K_h(\mathbf u,\mathbf u_e)
=
\exp\left[
-\frac12
\left(
\frac{d_{\mathbb S}(\mathbf u,\mathbf u_e)}{h}
\right)^2
\right],
\end{equation}
where $h$ is the angular bandwidth. 
We use $h=3^\circ=\pi/60$ in the reported experiments. 

For $p=2$, write $\mathbf u(\theta)=(\cos\theta,\sin\theta)^\top$.
Then
\begin{equation}
d_{\mathbb S}(\mathbf u(\theta),\mathbf u(\phi))
=
\left|
\bigl((\theta-\phi+\pi)\bmod2\pi\bigr)-\pi
\right|.
\end{equation}
Since the kernel uses the squared distance, this definition reduces exactly to the circular Gaussian kernel in two dimensions. 

Assigning equal weight to every edge in $E_+$, we define the unnormalized directional profile as 
\begin{equation}
q_G(\mathbf u)
=
\sum_{e\in E_+}K_h(\mathbf u,\mathbf u_e).
\end{equation}
Its normalized continuous density with respect to spherical surface measure $dS$ is 
\begin{equation}
p_G(\mathbf u)
=
\frac{q_G(\mathbf u)}
{\displaystyle\int_{\mathbb S^{p-1}}q_G(\mathbf v)\,dS(\mathbf v)},
\qquad
\int_{\mathbb S^{p-1}}p_G(\mathbf u)\,dS(\mathbf u)=1.
\end{equation}
Thus, $p_G$ characterizes the distribution of nonzero edge moment directions in $G$. 
For $p=2$, the surface measure becomes $d\theta$, recovering normalization over the unit circle. 

\paragraph{Directional discretization.} 
For numerical evaluation, partition $\mathbb S^{p-1}$ into cells $\{C_j\}_{j=0}^{J-1}$, with representative directions $\mathbf u_j$ and surface measures $a_j=\int_{C_j}dS>0$. 
The normalization integral can be approximated by 
\begin{equation}
\int_{\mathbb S^{p-1}}q_G(\mathbf u)\,dS(\mathbf u)
\approx
\sum_{j=0}^{J-1}a_jq_G(\mathbf u_j).
\end{equation}
We define the normalized discrete weight of cell $j$ as 
\begin{equation}
w_{G,j}
=
\frac{a_jq_G(\mathbf u_j)}
{\displaystyle\sum_{\ell=0}^{J-1}a_\ell q_G(\mathbf u_\ell)},
\qquad
\sum_{j=0}^{J-1}w_{G,j}=1.
\end{equation}
For sufficiently fine cells, this weight approximates the probability mass of the corresponding directional region, 
\begin{equation}
w_{G,j}
\approx
p_G(\mathbf u_j)a_j
\approx
\int_{C_j}p_G(\mathbf u)\,dS(\mathbf u).
\end{equation}
For equal-area cells, the common factor $a_j$ cancels from the discrete normalization.

\paragraph{Area-normalized visualization.} 
For visualization, we specialize to $\mathcal K=\{2,3\}$ and represent the direction space by the unit circle. 
Using $\mathbf u(\theta)=(\cos\theta,\sin\theta)^\top$, we abbreviate $q_G(\theta)=q_G(\mathbf u(\theta))$ and $p_G(\theta)=p_G(\mathbf u(\theta))$. 
We evaluate the profile on a uniform angular grid 
\begin{equation}
\Theta
=
\{\theta_j=j\delta_\theta\}_{j=0}^{J-1},
\qquad
\delta_\theta=\frac{2\pi}{J}.
\end{equation}

We set $\delta_\theta=0.5^\circ=\pi/360$, giving $J=720$ sectors. 
Each sector has angular measure $a_j=\delta_\theta$, so the general discrete weights reduce to 
\begin{equation}
w_{G,j}
=
\frac{q_G(\theta_j)}
{\displaystyle\sum_{\ell=0}^{J-1}q_G(\theta_\ell)},
\qquad
\sum_{j=0}^{J-1}w_{G,j}=1.
\end{equation}
The corresponding quadrature approximation is 
\begin{equation}
\int_0^{2\pi}q_G(\theta)\,d\theta
\approx
\delta_\theta
\sum_{\ell=0}^{J-1}q_G(\theta_\ell).
\end{equation}
For a sufficiently fine grid, the sector weight satisfies 
\begin{equation}
w_{G,j}
\approx
p_G(\theta_j)\delta_\theta
\approx
\int_{\theta_j-\delta_\theta/2}^{\theta_j+\delta_\theta/2}
p_G(\theta)\,d\theta,
\end{equation}
where the density is interpreted periodically at the endpoints of the angular interval. 
We encode each sector's probability mass by its area, choosing the radius $R_{G,j}$ to satisfy 
\begin{equation}
\frac12R_{G,j}^2\delta_\theta=w_{G,j}.
\end{equation}
This gives 
\begin{equation}
R_{G,j}
=
\sqrt{\frac{2w_{G,j}}{\delta_\theta}}.
\label{eq:edge-profile-sector-radius}
\end{equation}
The total visualization area is therefore 
\begin{equation}
\frac12\sum_{j=0}^{J-1}R_{G,j}^2\delta_\theta
=
\sum_{j=0}^{J-1}w_{G,j}
=
1. 
\end{equation}
Hence, all graphs have equal visualization area, and differences in shape reflect their directional distributions rather than their number of edges.

\paragraph{Graph collections.} 
For a collection $\mathcal{G}=\{G_1,\ldots,G_N\}$, such as PROTEINS or ENZYMES, we compute the normalized cell masses $\{w_{G_g,j}\}_{j=0}^{J-1}$ separately for each graph. 
Consequently, every graph contributes equally to the collection-level mean, regardless of its size. 
The mean profile is 
\begin{equation}
\overline w_{\mathcal G,j}
=
\frac1N\sum_{g=1}^Nw_{G_g,j}.
\end{equation}
Variation across graphs is summarized by the pointwise interquartile  interval 
\begin{equation}
\left[
Q_{0.25}\!\left(\{w_{G_g, j}\}_{g=1}^{N}\right), 
Q_{0.75}\!\left(\{w_{G_g, j}\}_{g=1}^{N}\right) 
\right].
\end{equation}
For the two-dimensional visualization, the solid curve is obtained by applying Eq.~\ref{eq:edge-profile-sector-radius} to the mean masses $\overline w_{\mathcal G,j}$. 
The shaded band is obtained by applying the same radius transformation to the lower and upper pointwise quartiles.

\paragraph{Classic graphs.} 
We provide the specifications of the classic graphs used in Figure~\ref{fig:edge_moment_shifts}. 
\begin{itemize}
    \item Complete graph: 20 nodes; 
    \item Cycle: 60 nodes; 
    \item Path: 60 nodes; 
    \item Lollipop graph: 30 nodes with a clique of size 15; 
    \item House graph: 5 nodes; 
    \item Bull graph: 5 nodes;
    \item Friendship graph: 41 nodes with 20 blades, consisting of one center node and 20 triangles; 
    \item Balanced binary tree: 31 nodes, depth 4, and 30 edges; 
\end{itemize}

\subsection{Derivation and evaluation of structural properties} 
\label{ap:preserve_graph_properties}
This section derives and evaluates the relationships between spectral moments and the graph properties introduced in Section~\ref{sec:graph_properties}.

\subsubsection{Derivations of structural properties} 
\paragraph{Triangle-weighted clustering coefficient.} 
For an undirected graph $G=(V,E)$, let $N_i$ denote the neighborhood of node $i$,
$d_i=|N_i|$, and
\[
\Delta_i = \bigl\{\{j,k\}: j,k \in N_i,\ (j,k)\in E \bigr\}
\]
be the set of triangles incident to $i$, represented by the edges among its neighbors. The standard local clustering coefficient is
\[
C_i
=
\frac{2|\Delta_i|}{d_i(d_i-1)},
\]
with $C_i=0$ when $d_i<2$. 


The third spectral moment of the random-walk matrix $P=D^{-1}A$ can be written as
\begin{align}
m_3
&= \frac{1}{n}\operatorname{Tr}(P^3)
 = \frac{1}{n}\sum_i (P^3)_{ii} \notag\\
&= \frac{1}{n}\sum_i \sum_{j,k}
\frac{A_{ij}A_{jk}A_{ki}}{d_i d_j d_k} \notag\\
&= \frac{1}{n}\sum_i
\frac{2}{d_i}
\sum_{\{j,k\}\in \Delta_i}
\frac{1}{d_j d_k} \notag\\
&= \frac{1}{n}\sum_i
\frac{2|\Delta_i|}{d_i}
\mathbb{E}_{\{j,k\}\sim \Delta_i}
\left[
\frac{1}{d_j d_k}
\right] \notag\\
&= \frac{1}{n}\sum_i
C_i(d_i-1)
\mathbb{E}_{\{j,k\}\sim \Delta_i}
\left[
\frac{1}{d_j d_k}
\right].
\end{align}

When the degrees within each local triangle are comparable, i.e., $d_i\approx d_j\approx d_k$ for $\{j,k\}\in\Delta_i$, and the degrees are not too small, we have
\begin{equation}
C_i(d_i-1)
\mathbb{E}_{\{j,k\}\sim \Delta_i}
\left[
\frac{1}{d_j d_k}
\right]
\approx
C_i
\mathbb{E}_{\{j,k\}\sim \Delta_i}
\left[
\frac{1}{\sqrt{d_j d_k}}
\right]
=
\widetilde C_i.
\end{equation}
Thus, the proposed score can be viewed as a moment-aligned weighted local clustering statistic that assigns larger weights to triangles involving low-degree neighbors. 

We use mean absolute error to measure: 
\begin{equation}
E_{\mathrm{clust}}(G',G)
=
\left|
\frac{1}{|V'|}\sum_{i\in V'} \widetilde C_i(G')
-
\frac{1}{|V|}\sum_{i\in V} \widetilde C_i(G)
\right|.
\end{equation}
where $G$ is the original graph, and $G'$ is the sampled graph.

\paragraph{Degree structure.} 
The second spectral moment is
\begin{equation}
\begin{aligned}
m_2
&= \frac{1}{n}\operatorname{Tr}(P^2) \\
&= \frac{1}{n}\sum_i \sum_{j\in N(i)}
\frac{1}{d_i d_j} \\
&= \frac{1}{n}\sum_i
\mathbb{E}_{j\sim N(i)}
\left[
\frac{1}{d_j}
\right].
\end{aligned}
\end{equation}
Thus, $m_2$ measures the average inverse degree observed from the local
neighborhood of each node.

More generally, let $X_0=i$ be the starting node and $X_r$ be the node reached after $r$ random-walk steps. We define
\begin{equation}
L_i^{(r)}
=
\mathbb{E}
\left[
\frac{1}{d_{X_r}}
\,\middle|\,
X_0=i
\right]
=
\sum_j (P^r)_{ji}\frac{1}{d_j}.
\end{equation}
The even-order spectral moment can be written as
\begin{equation}
\begin{aligned}
m_{2r}
&= \frac{1}{n}\operatorname{Tr}(P^{2r}) \\
&= \frac{1}{n}\sum_i\sum_j
(P^r)_{ij}(P^r)_{ji} \\
&= \frac{1}{n}\sum_i
d_i \sum_j
\frac{(P^r)_{ji}^2}{d_j} \\
&= \frac{1}{n}\sum_i \widetilde L_i^{(r)},
\end{aligned}
\label{eq:m_2r}
\end{equation}
where
\begin{equation}
\widetilde L_i^{(r)}
=
d_i \sum_j
\frac{(P^r)_{ji}^2}{d_j}.
\end{equation}
Compared with $L_i^{(r)}$, the quantity $\widetilde L_i^{(r)}$ reweights each endpoint contribution by the local accessibility factor $d_i(P^r)_{ji}$:
\begin{equation}
\widetilde L_i^{(r)}
=
\sum_j
\left[d_i(P^r)_{ji}\right]
\frac{(P^r)_{ji}}{d_j}.
\label{eq:tilde_L}
\end{equation}
For an $r$-step walk $\pi:i\overset{r}{\leadsto}j$, this factor admits the path expansion
\begin{equation}
d_i(P^r)_{ji}
=
\sum_{\pi:i\overset{r}{\leadsto}j}
\prod_{v\in \operatorname{Int}(\pi)}
\frac{1}{d_v},
\label{eq:accessibility}
\end{equation}
where $\operatorname{Int}(\pi)$ denotes the internal vertices of the walk. 
For example, when $r=2$,
\begin{equation}
d_i(P^2)_{ji}
=
\sum_{k\in N(i)\cap N(j)}
\frac{1}{d_k}.
\end{equation}

Therefore, combining Eq.~\ref{eq:accessibility}, Eq.~\ref{eq:m_2r} and Eq.~\ref{eq:tilde_L}, we can obtain Eq.~\ref{eq:degree_structure}:
\begin{equation}
m_{2r}
=
\frac{1}{n}\sum_i
\mathbb{E}\left[
\left.
\frac{a_i^{(r)}(X_r)}{d_{X_r}}
\right|X_0=i
\right], \quad
a_i^{(r)}(j)
=
\sum_{\pi:i\overset{r}{\leadsto}j}
\prod_{v\in\mathrm{Int}(\pi)}\frac{1}{d_v}.
\end{equation}

\subsubsection{Additional preservation results} 
Figure~\ref{fig:exp2_across_datasets} provides additional results on ca-GrQc~\citep{leskovec2007graph}, Cora and Citeseer~\citep{yang2016revisiting}, and Wisconsin and Actor~\citep{pei2020geom}. 
Across these datasets, moment-preserving sampling consistently preserves the structural properties better than uniform random edge removal.

\begin{figure}[t]
    \centering

    \begin{subfigure}{0.24\linewidth}
        \centering
        \includegraphics[width=\linewidth]{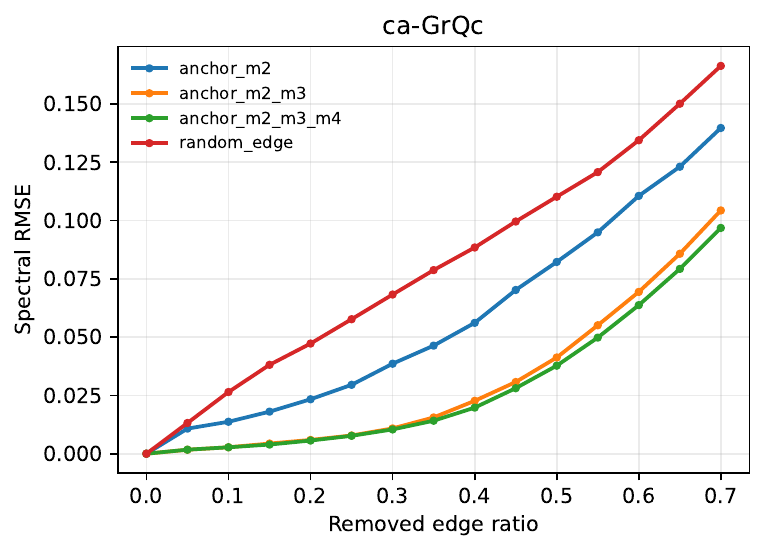}
    \end{subfigure}
    \begin{subfigure}{0.24\linewidth}
        \centering
        \includegraphics[width=\linewidth]{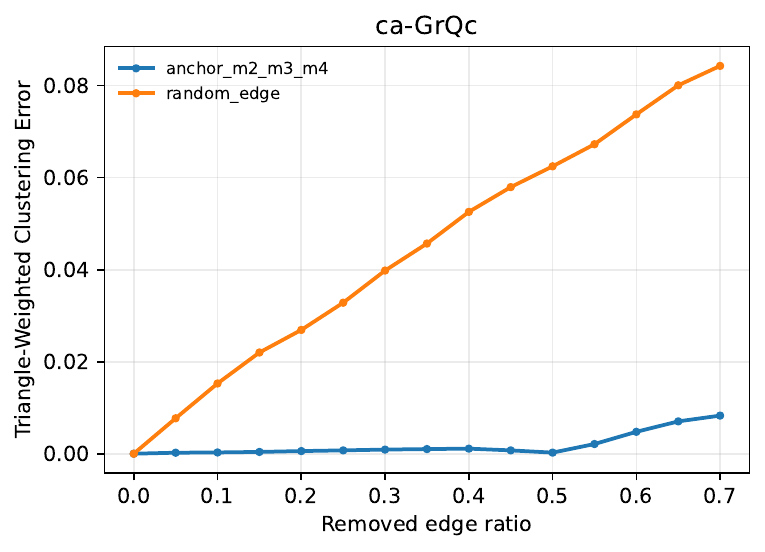}
    \end{subfigure}
    \begin{subfigure}{0.24\linewidth}
        \centering
        \includegraphics[width=\linewidth]{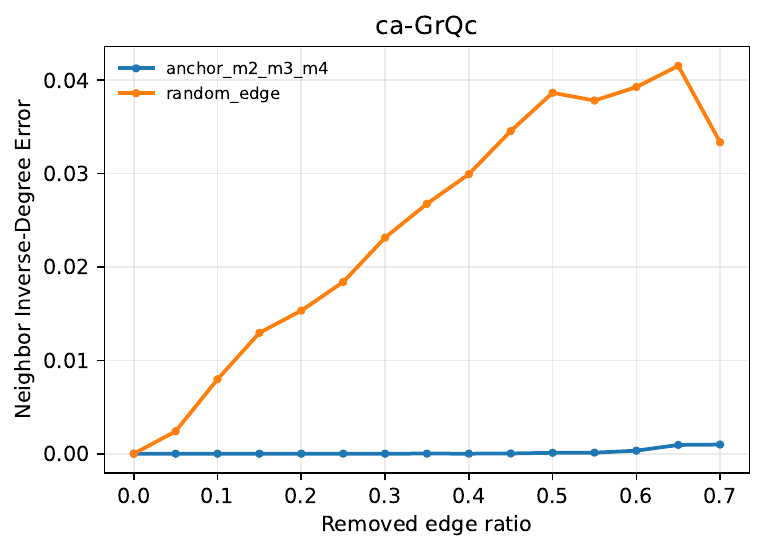}
    \end{subfigure}
    \begin{subfigure}{0.24\linewidth}
        \centering
        \includegraphics[width=\linewidth]{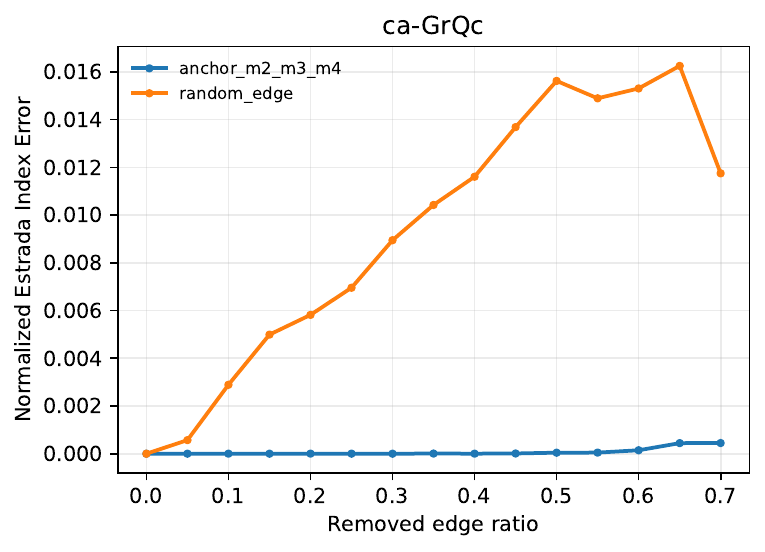}
    \end{subfigure}

    \vspace{1mm}

    \begin{subfigure}{0.24\linewidth}
        \centering
        \includegraphics[width=\linewidth]{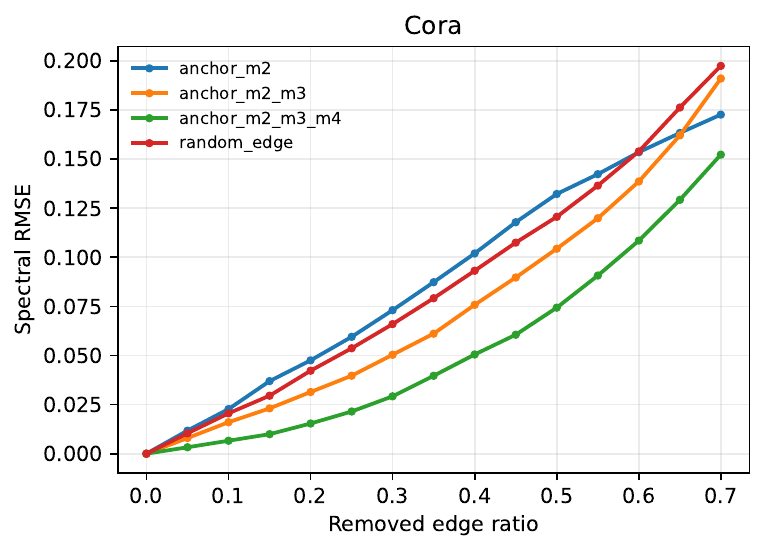}
    \end{subfigure}
    \begin{subfigure}{0.24\linewidth}
        \centering
        \includegraphics[width=\linewidth]{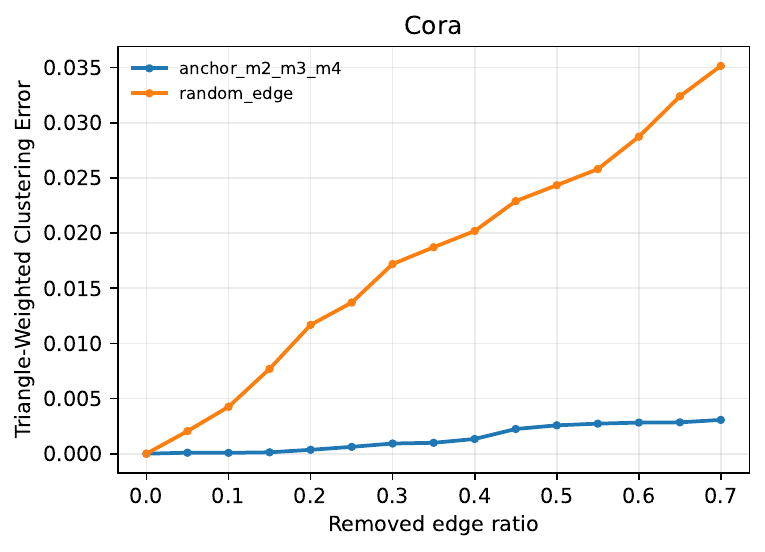}
    \end{subfigure}
    \begin{subfigure}{0.24\linewidth}
        \centering
        \includegraphics[width=\linewidth]{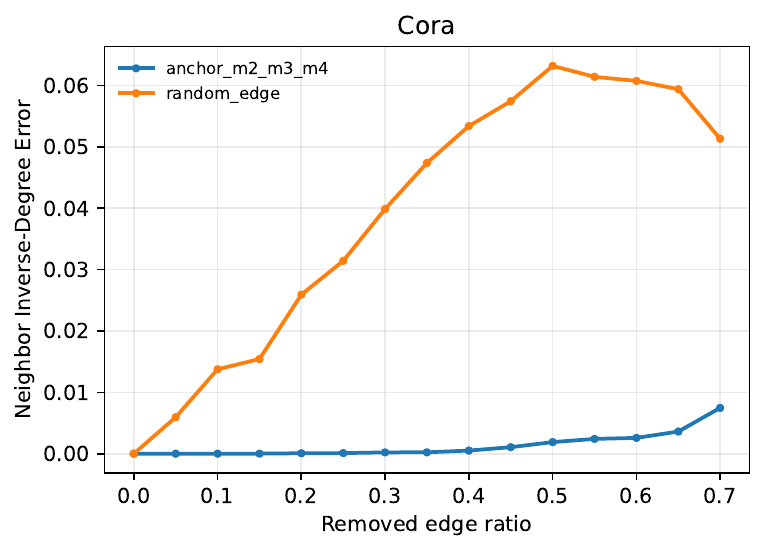}
    \end{subfigure}
    \begin{subfigure}{0.24\linewidth}
        \centering
        \includegraphics[width=\linewidth]{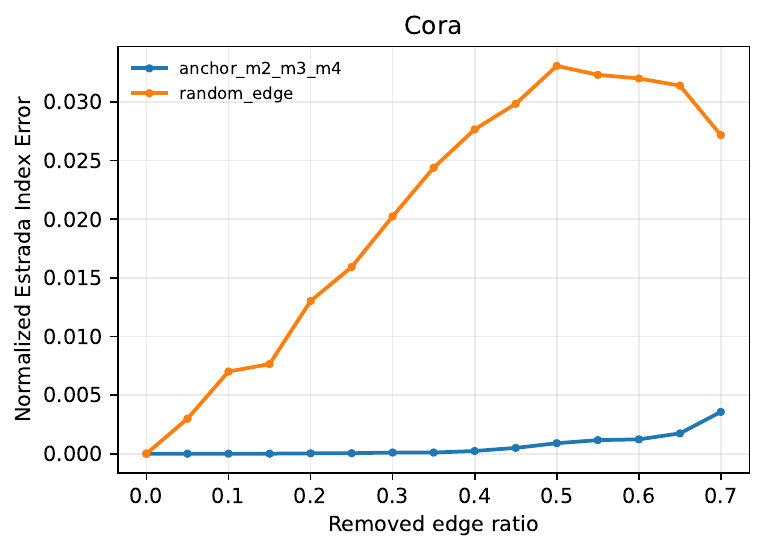}
    \end{subfigure}

    \vspace{1mm}

    \begin{subfigure}{0.24\linewidth}
        \centering
        \includegraphics[width=\linewidth]{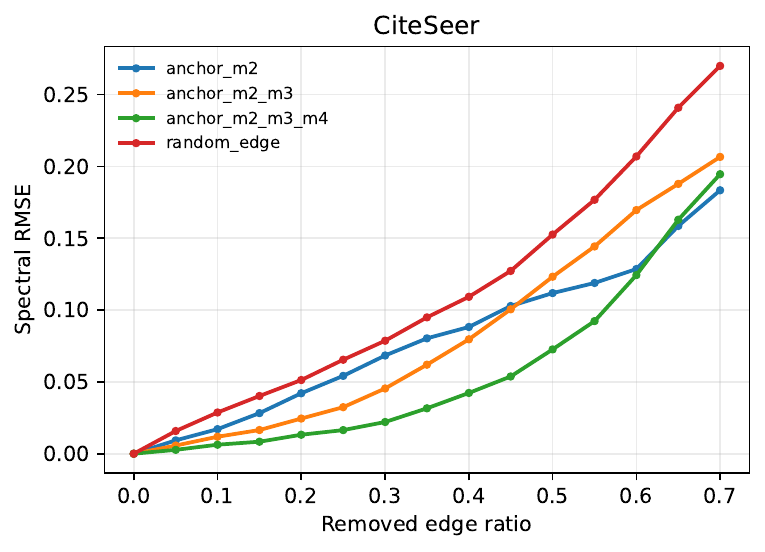}
    \end{subfigure}
    \begin{subfigure}{0.24\linewidth}
        \centering
        \includegraphics[width=\linewidth]{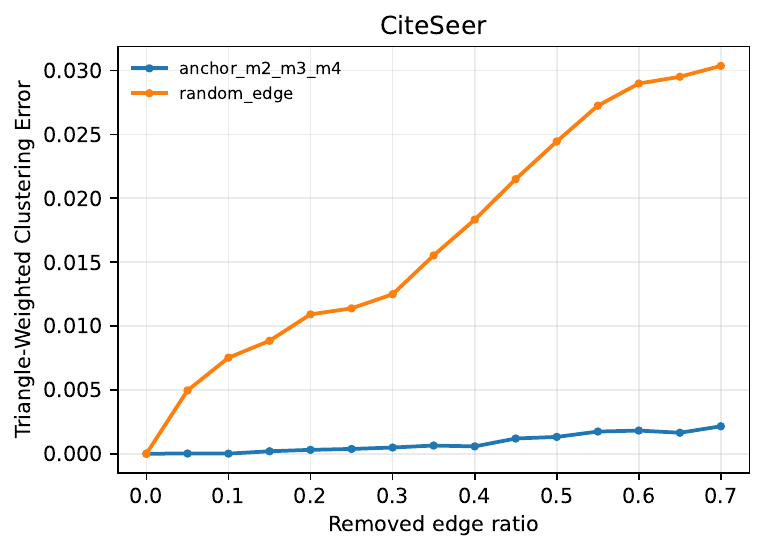}
    \end{subfigure}
    \begin{subfigure}{0.24\linewidth}
        \centering
        \includegraphics[width=\linewidth]{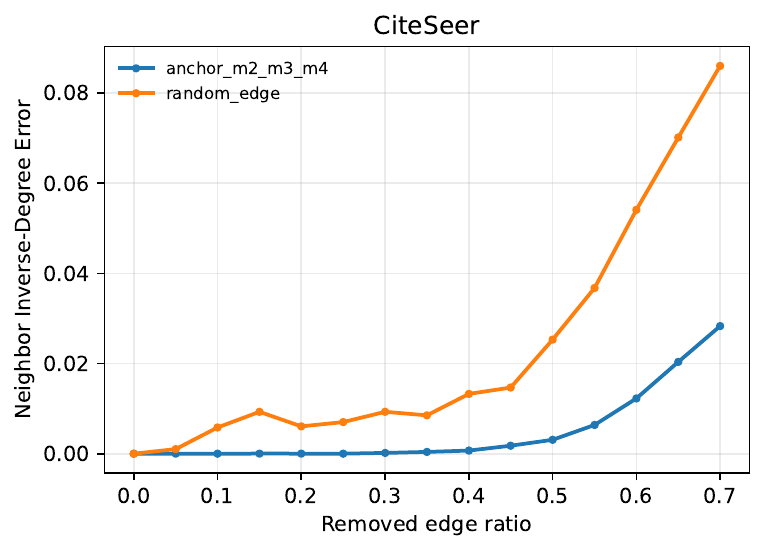}
    \end{subfigure}
    \begin{subfigure}{0.24\linewidth}
        \centering
        \includegraphics[width=\linewidth]{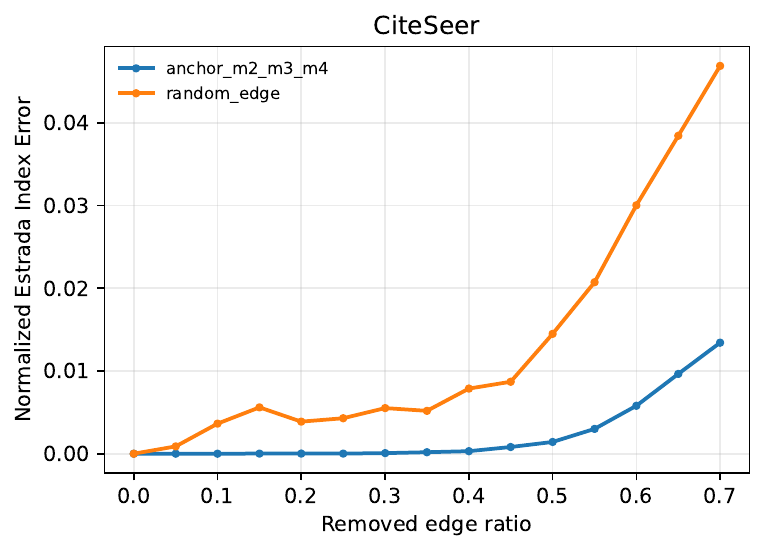}
    \end{subfigure}
    
    \vspace{1mm}

    \begin{subfigure}{0.24\linewidth}
        \centering
        \includegraphics[width=\linewidth]{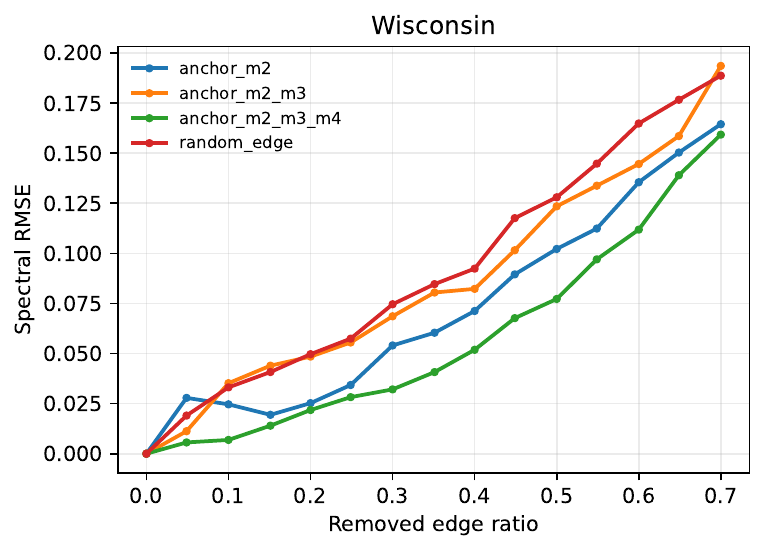}
    \end{subfigure}
    \begin{subfigure}{0.24\linewidth}
        \centering
        \includegraphics[width=\linewidth]{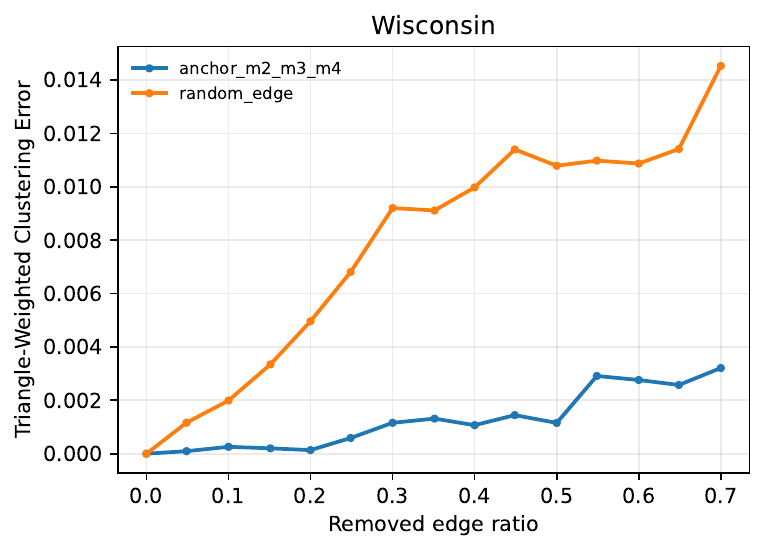}
    \end{subfigure}
    \begin{subfigure}{0.24\linewidth}
        \centering
        \includegraphics[width=\linewidth]{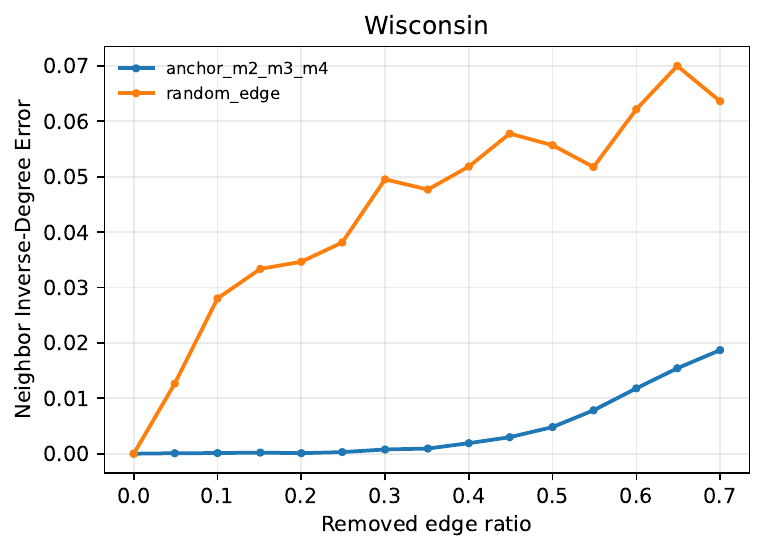}
    \end{subfigure}
    \begin{subfigure}{0.24\linewidth}
        \centering
        \includegraphics[width=\linewidth]{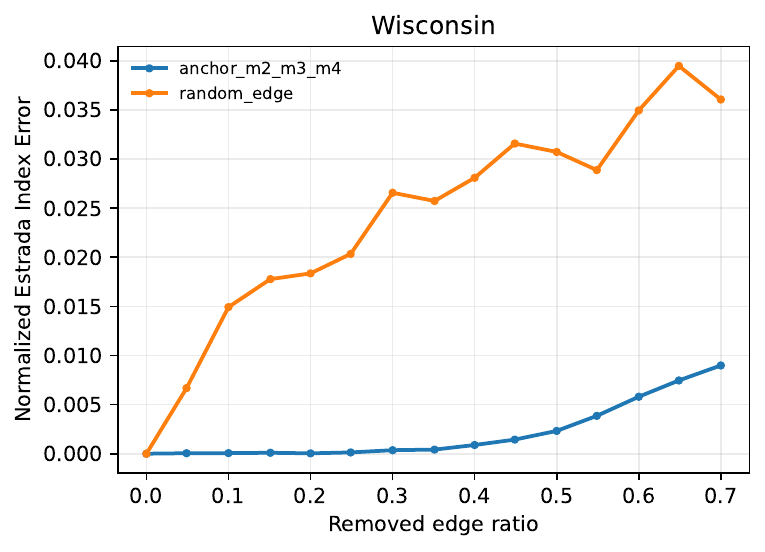}
    \end{subfigure}

    \vspace{1mm}

    \begin{subfigure}{0.24\linewidth}
        \centering
        \includegraphics[width=\linewidth]{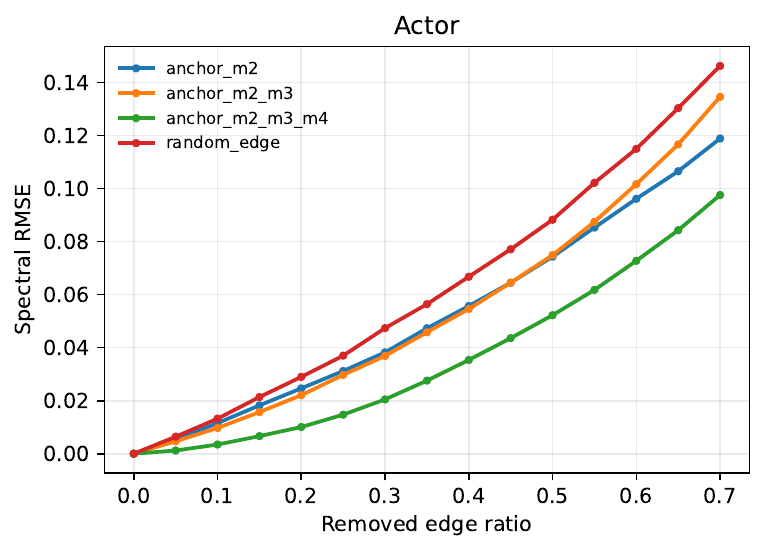}
    \end{subfigure}
    \begin{subfigure}{0.24\linewidth}
        \centering
        \includegraphics[width=\linewidth]{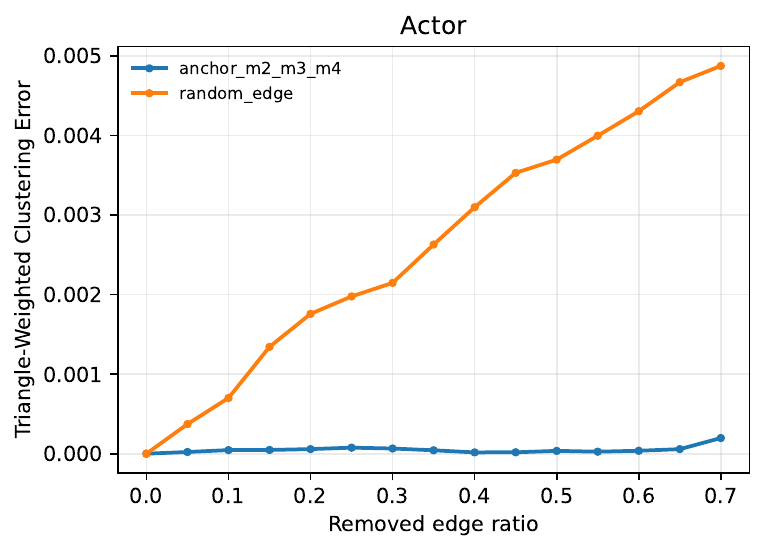}
    \end{subfigure}
    \begin{subfigure}{0.24\linewidth}
        \centering
        \includegraphics[width=\linewidth]{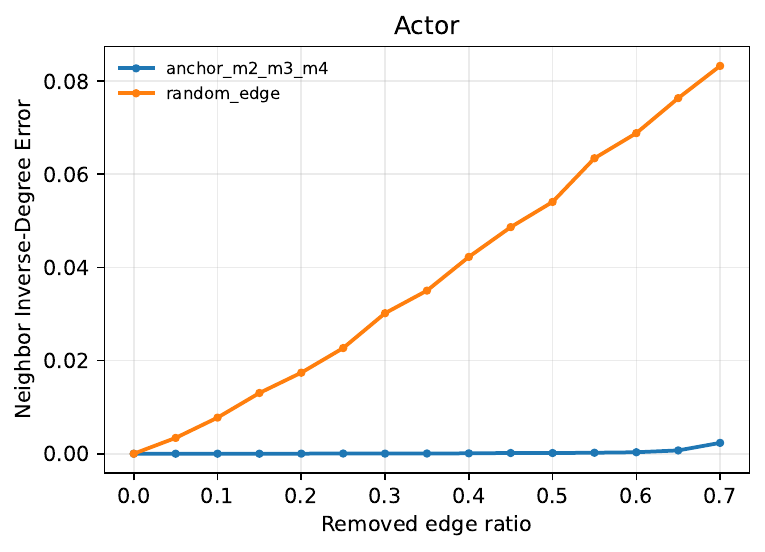}
    \end{subfigure}
    \begin{subfigure}{0.24\linewidth}
        \centering
        \includegraphics[width=\linewidth]{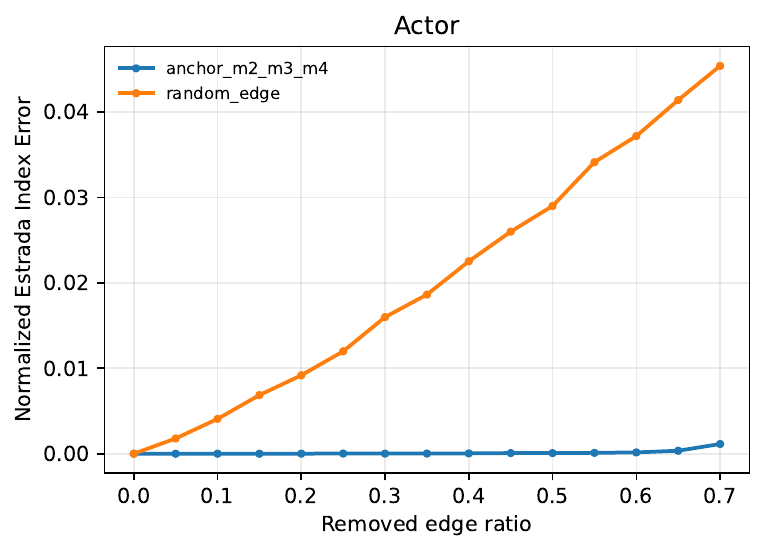}
    \end{subfigure}

    \caption{Moment-preserving sampling preserve four graph properties across datasets.}
    \label{fig:exp2_across_datasets}
\end{figure}

\subsection{Moments-guided edge sampling for graph learning}
\label{ap:graph_learning}
This section provides additional experimental details for Section~\ref{sec:improve_gnns}. 
We first examine how edges with different $\Delta\mathbf{m}(e)$ directions affect supervised node classification, and then evaluate moment-guided augmentation for unsupervised node classification. 

\subsubsection{Supervised node classification}
\label{ap:supervised_node_classification}

\paragraph{Experimental settings.} 
We follow the ``full'' split~\citep{chen2018fastgcn}, where all nodes outside the validation and test sets are used for training. 
During edge removal, we exclude edges whose endpoints are both training nodes with the same label. 
For each (direction, deletion ratio) pair, corresponding to one sector in Figure~\ref{fig:supervised}, we use five random seeds. 
For each seed, we sample five graphs using direction-specific edge removal and five graphs using uniform random removal under the same deletion budget. 
Each direction-specific sample is compared with its random counterpart to obtain five test-accuracy differences per seed. 
We report the mean over all $5\times5=25$ differences as the value of each sector.

\subsubsection{graph contrastive learning}
\label{ap:unsupervised_node_classification}

\paragraph{Experimental settings.} 
For the results in Table~\ref{tab:unsupervised}, we construct a moment-preserving augmented view using an edge-edit budget equal to 20\% of the original number of edges, including both additions and deletions. 
Our implementation is built on the official GCL-SPAN code\footnote{\url{https://github.com/Louise-LuLin/GCL-SPAN/tree/main}}. 
We replace its two augmented views with the original graph and our moment-preserving view, while keeping the remaining training pipeline unchanged. 
Following GCL-SPAN, we use a 10\%/10\%/80\% train/validation/test split for the downstream classifier. 
The released implementation assigns the 80\% split to validation, which we correct to match the stated protocol. 
Since dataset-specific hyperparameters are not fully provided, our reproduced GCL-SPAN results may differ from those reported in the original paper. 
All methods are evaluated over 10 runs using seeds $15$--$24$.

We compare against three representative baselines: 
\begin{itemize}
    \item \textbf{GRACE}~\citep{zhu2020deep} generates two views through random edge dropping. It treats the same node across views as a positive pair and all other nodes as negative pairs. 
    We use the official implementation and hyperparameters\footnote{\url{https://github.com/CRIPAC-DIG/GRACE/tree/master}}. 
    \item \textbf{MVGRL}~\citep{hassani2020contrastive} contrasts representations obtained from the original graph and a diffusion-based view. 
    We use the official implementation and reported hyperparameters\footnote{\url{https://github.com/kavehhassani/mvgrl}}. 
    Its PPR-based augmentation is also closely related to moments: Appendix~\ref{ap:ppr} shows that the trace of PPR can be written as a weighted sum of moments.  
    \item \textbf{GCL-SPAN}~\citep{lin2022spectral}: constructs augmented views by perturbing graph spectrum. 
    Since spectral moments summarize the eigenvalue distribution, its augmentation is closely related to changing moments. 
    We use the official implementation and tune the hyperparameters because dataset-specific settings are not provided. 
    In contrast, targeting individual moments provides a more direct interpretation of which structural properties change, as discussed in Section~\ref{sec:graph_properties} and Appendix~\ref{ap:preserve_graph_properties}. 
\end{itemize}

\paragraph{Searching the moment space for principled graph augmentation.} 
Existing graph contrastive learning methods typically design augmentations through specific heuristics or implicit assumptions about which graph structures should be preserved or perturbed. 
For example, MVGRL relies on diffusion-based views, while GCL-SPAN explicitly perturbs graph spectrum. 
Such designs can be effective, but they do not directly reveal which structural changes are most beneficial for a given graph.

Spectral moments provide an explicit space for studying this question. 
Different target moments correspond to different normalized structural statistics, as discussed in Section~\ref{sec:graph_properties}. 
We therefore use the target moment profile as a controllable augmentation objective and systematically vary it to examine which structural changes lead to better representations. 

Specifically, instead of always setting the target to the original moments, we sample 36 target profiles around the original $(m_2,m_3)$ point. 
For each target, we edit 20\% of the original number of edges and guide the graph toward that target using our moment-guided sampler (Section~\ref{sec:selection}). 
Each target is evaluated over 10 random seeds, $15$--$24$, and we report the mean test accuracy. 
This produces the accuracy landscapes shown in Figure~\ref{fig:accuracy_landscape}. 

The landscapes reveal \textit{graph-specific augmentation preferences rather than a universal optimal rule}. 
For Cora, the best target lies around $(m_2,m_3)=(0.35,0.033)$, where both $m_2$ and $m_3$ increase relative to the original graph.
Since $m_3$ is closely related to triangle-weighted clustering, this direction tends to preserve or strengthen clustered local structure. 
In contrast, Citeseer performs best around $(m_2,m_3)=(0.40,0.0163)$, where both moments decrease. 
The lower $m_3$ suggests that weakening clustered structure can benefit Citeseer, while the opposite trend is preferred by Cora. 
Thus, \textit{different graphs favor different structural perturbations}. 
\textit{The moment landscape makes these preferences explicit by mapping augmentation targets to interpretable structural changes, providing a principled way to search for graph-specific augmentation strategies}.  


\begin{figure}[t]
    \centering
    \begin{subfigure}{0.45\linewidth}
        \centering
        \includegraphics[width=\linewidth]{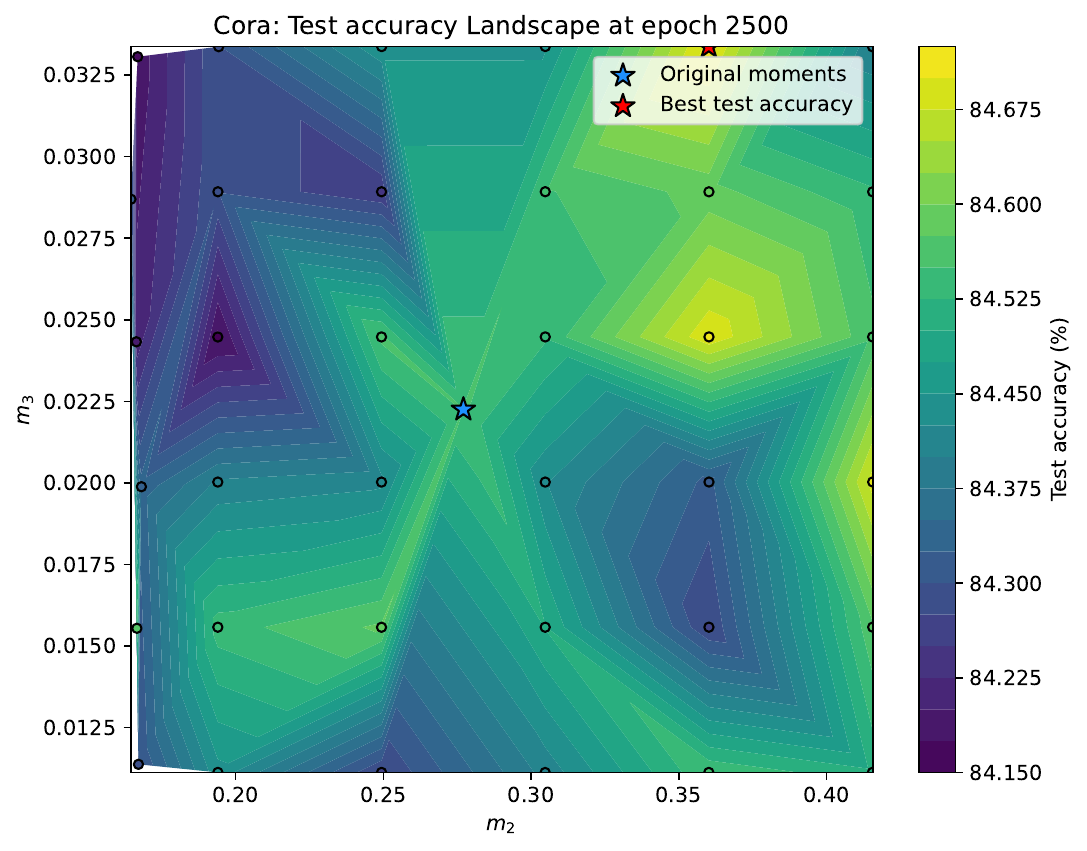}
        \caption{Accuracy landscape on Cora.}
    \end{subfigure}
    \hspace{0.03\linewidth}
    \begin{subfigure}{0.45\linewidth}
        \centering
        \includegraphics[width=\linewidth]{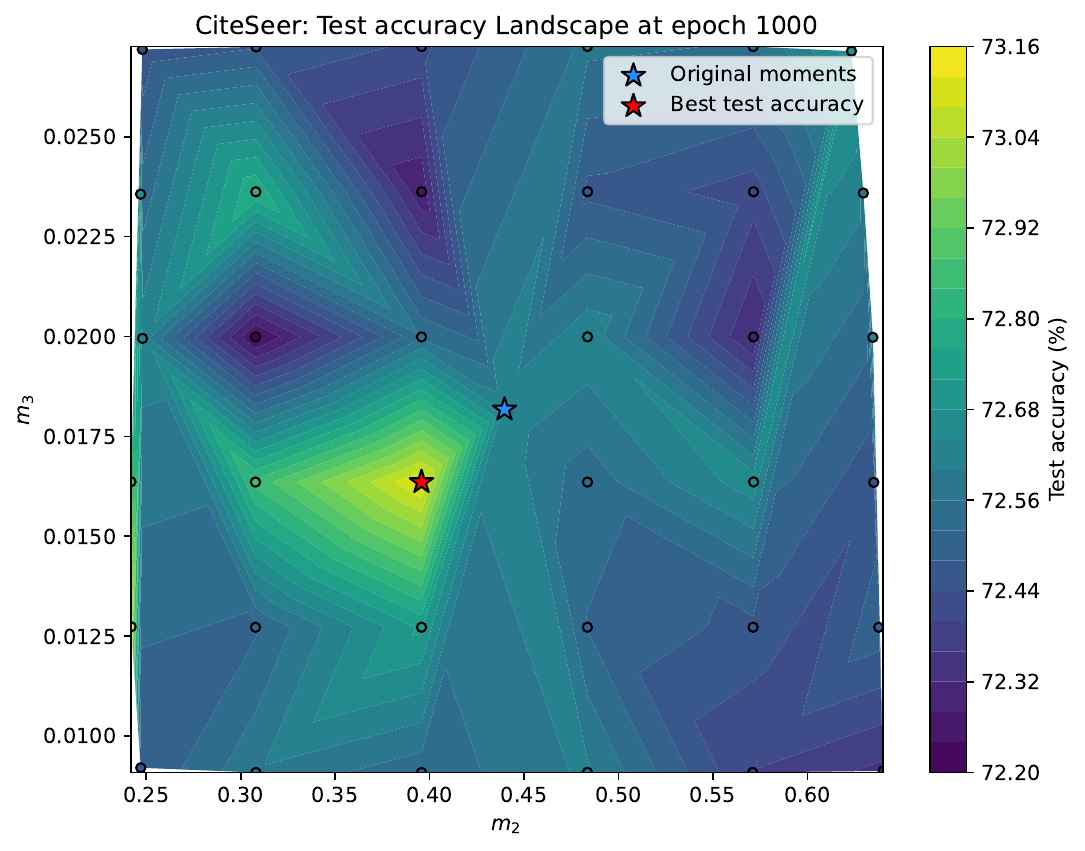}
        \caption{Accuracy landscape on Citeseer.}
    \end{subfigure}
    \caption{Accuracy landscapes over different target moment profiles. 
    Each point corresponds to an augmented view generated toward a target $(m_2,m_3)$ profile under the same edge-edit budget. 
    The blue star marks the original moment profile, while the red star marks the target with the highest mean test accuracy. }
    \label{fig:accuracy_landscape}
\end{figure}

%% file: Sections/Appendix_Interaction_Locality.tex
\subsection{Interactions and Locality of moment changes}
\label{ap:interaction}
This section characterizes how edge edits interact, and how these interactions control the accuracy of composing single-edge moment changes. 
We first establish exact locality

\paragraph{Setting and notation.} 
Let $G=(V,E)$ be a simple, undirected, unweighted graph with a fixed node set of size $n$. 
Write $P_G=A_G D_G^{-1}$, where the column corresponding to an isolated node is defined to be zero. 
Thus $P_G$ is nonnegative and column substochastic, and $\|P_G\|_1\le1$ for the induced matrix $1$-norm. 
No connectivity or minimun-degree constraint is imposed on the resulting graph. 
For an integer $r\ge2$ and a present edge $e$, define
\begin{equation}
    m_k(G)=\frac{1}{n} \operatorname{Tr}(P_G^k),\qquad \Delta_k^G(e)=m_r(G-e) - m_k(G).
    \label{b6:delta}
\end{equation}
For distinct present edges $e,h$, their interaction is the symmetric mixed difference
\begin{equation}
\begin{aligned}
    I_k^G(e,h)
    &=\Delta_k^{G-h}(e) - \Delta_k^G(e)\\
    &=m_k(G-e-h)-m_k(G-e)-m_k(G-h)+m_k(G).
\end{aligned}
\label{b6:interaction}
\end{equation}
When $I_k^G(e,h)=0$, it means remove $e$ or $h$ do not change other's moment $k$ change. 
Let $T_e$ be the endpoint set of $e$. 
We use endpoint-set distance, 
\begin{equation}
    d_G(e,h)=\min_{x\in T_e,\,y\in T_h}d_G(x,y), 
\end{equation}
with distance $+\infty$ between disconnected sets. 
In particular, incident edges have distance zero. 
Unless explicitly stated otherwise, all degrees and neighborhoods in a formula are evaluated in its superscript graph $G$. 

\subsubsection{Exact locality}
\begin{proposition}[Locality of interactions]
For every pair of distinct present edges and every integer $k\ge2$, 
\begin{equation}
    2d_G(e,h)>k\quad\Longrightarrow\quad I_k^G(e,h)=0. 
    \label{b6:locality}
\end{equation}
\end{proposition}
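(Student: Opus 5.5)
The plan is to write all three edited transition matrices as exact additive perturbations of $P=P_G$, expand the mixed difference $I_k^G(e,h)$ as a sum over words, and show that every surviving word corresponds to closed walks in $G$ that must visit both $T_e$ and $T_h$. Such a walk would force $2d_G(e,h)\le k$.

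\textbf{Step 1: additivity of the perturbations.} Assume $2d_G(e,h)>k\ge 2$. Then $d_G(e,h)\ge 2$, so in particular $T_e\cap T_h=\varnothing$. As in the locality discussion of Section~\ref{sec:algebraic_method}, deleting $e$ changes only the columns of $P$ indexed by $T_e$. Define $Q_e=P_{G-e}-P$ and $Q_h=P_{G-h}-P$. The column of a node $x$ in $P_{G'}$ depends only on the edges of $G'$ incident to $x$, and this holds also under the isolated-node zero convention. Since $h$ is not incident to any $x\in T_e$, the $T_e$-columns of $P_{G-e-h}$ coincide with those of $P_{G-e}$. Symmetrically, the $T_h$-columns of $P_{G-e-h}$ coincide with those of $P_{G-h}$, and all other columns equal those of $P$. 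Hence
\[
P_{G-e-h}=P+Q_e+Q_h
\]
holds exactly.

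\textbf{Step 2: expansion of the mixed difference.} Expand $\operatorname{Tr}((P+Q_e+Q_h)^k)$ as a sum of $\operatorname{Tr}(w)$ over words $w\in\{P,Q_e,Q_h\}^k$. Subtracting $\operatorname{Tr}((P+Q_e)^k)$ and $\operatorname{Tr}((P+Q_h)^k)$ and adding back $\operatorname{Tr}(P^k)$ is inclusion--exclusion. It leaves exactly the words containing at least one $Q_e$ and at least one $Q_h$:
\[
nI_k^G(e,h)=\sum_{\substack{w\in\{P,Q_e,Q_h\}^k\\ w\ni Q_e,\;w\ni Q_h}}\operatorname{Tr}(w).
\]

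\textbf{Step 3: supports and walks.} Each $\operatorname{Tr}(w)$ expands as
\[
\operatorname{Tr}(w)=\sum_{i_0,\ldots,i_{k-1}}\prod_{s}(M_s)_{i_{s}i_{s+1}},
\qquad i_k=i_0,
\]
where $M_s$ is the $s$-th letter of $w$. The supports of the letters are as follows.
\begin{itemize}
\item $P_{ij}\neq0$ only if $\{i,j\}\in E$.
\item $(Q_e)_{ij}\neq 0$ only if $j\in T_e$ and $i\in\mathcal N_G(j)$. This is because $c_j=p'_j-p_j$ is supported on the $G$-neighbors of $j$, as seen from the explicit formulas in Appendix~\ref{ap:algebric_method}, including the isolating case $c_u=-e_v$.
\item $(Q_h)_{ij}\neq0$ only if $j\in T_h$ and $i\in\mathcal N_G(j)$.
\end{itemize}
So every nonzero term is a closed walk of length $k$ in $G$. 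Because $w$ contains a $Q_e$ and a $Q_h$, the walk visits some $x\in T_e$ and some $y\in T_h$. A closed walk of length $k$ through $x$ and $y$ splits into two $x$--$y$ walks of total length $k$, so $2d_G(x,y)\le k$, and therefore $2d_G(e,h)\le k$. This contradicts the assumption. Hence every term vanishes and $I_k^G(e,h)=0$.

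\textbf{Main obstacle.} The delicate point is Step 1: verifying that the simultaneous deletion is exactly the sum of the two single-edge perturbations. This is where disjointness of $T_e$ and $T_h$ is needed, together with care about the degree-one endpoints handled by the zero-column convention.
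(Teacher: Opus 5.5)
Your proof is correct and is essentially the paper's argument: both reduce $I_k^G(e,h)$ to closed walks of $G$ that visit both $T_e$ and $T_h$, and then note that such a walk has length at least $2d_G(e,h)>k$. The difference is only in bookkeeping. You get the cancellation algebraically, from the exact splitting $P_{G-e-h}=P+Q_e+Q_h$ (valid because the hypothesis forces $T_e\cap T_h=\varnothing$) followed by inclusion--exclusion over words. The paper instead cancels each walk's weight directly across $G$, $G-e$, $G-h$ and $G-e-h$, so it does not need that additivity step.
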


\begin{proof} 
Expand $\operatorname{Tr}(P_G^k)$ as a sum over the length-$k$ closed walks of $G$. 
Use the same walks to index the traces for its subgraphs $G-h$, $G-e$, and $G-e-h$, assigning weight zero to a walk that uses a deleted edge. 
Deleting $e$ only changes transition columns indexed by $T_e$. 
Consequently, a walk that avoids $T_e$ has the same weight in $G$ and $G-e$, and also in $G-h$ and $G-e-h$. 
Its contribution to Eq. \ref{b6:interaction} cancels. 
The same argument applies to walks avoiding $T_h$. 
A potentially nonzero contribution must therefore visit both endpoint sets. 
Such a closed walk contains segment from one set to the other and a segment returning to the first set, each of length at least $d_G(e,h)$. 
Its length is at least $2d_G(e,h)$, proving the claim. 
This argument also covers deletions that create isolated vertices.
\end{proof}

For a collection of orders $\mathcal{K}$ with $R=\max\mathcal{K}$, all selected moment changes of $e$ are unchanged by deleting $h$ whenever $d_G(e,h)>\lfloor R/2\rfloor$. 
For $\mathcal{K}=\{2,3\}$, the locality proposition implies that deleting $h$ leaves the moment changes of $e$ unchanged whenever $d_G(e,h)>1$. 
Thus, edge deletions affect other edges' moment changes only locally; beyond this distance, the moment changes remain unchanged and can be reused without recomputation. 

The distance condition is sufficient but not necessary for zero interaction. 
Edges within the locality radius may also have unchanged moment changes. 
For example, let $G=K_{1,3}$ and choose any two distinct edges $e,h$. 
These edges share the center so $d_G(e,h)=0$. 
With the node set fixed, each of $G$, $G-e$, $G-h$, $G-e-h$ consists of a nonempty star and possibly isolated nodes. 
All four graphs satisfy $m_2=1/2$ and $m_3=0$, giving $I_2^G(e,h)=I_3^G(e,h)=0$. 
Thus, the locality radius identifies a set of potentially affected edges, rather than asserting that every edge within this set is affected.

\begin{corollary}[Additivity of separated deletions]
\label{b6:separated-additivity}
Let $S_1,\ldots,S_p\subseteq E(G)$ be pairwise disjoint, and let $R=\max\mathcal{K}$. 
If $d_G(e,h)>\lfloor R/2\rfloor$ whenever $e\in S_\ell$, $h \in S_j$, and $\ell \ne j$, then, for every $k\in\mathcal{K}$, 
\begin{equation}
    m_k\!\left( G-\bigcup_{\ell=1}^{p}S_\ell \right) - m_k(G)
    = 
    \sum_{\ell=1}^{p} \bigl[m_k(G-S_\ell) - m_k(G)\bigr]. 
\end{equation}
\end{corollary}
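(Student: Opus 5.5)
The plan is to reuse the closed-walk argument from the proof of the locality proposition, but applied to whole groups of deletions rather than to a single pair. Fix $k\in\mathcal{K}$, so $k\le R$. For a subgraph $G'$ of $G$ on the same node set, I expand $\operatorname{Tr}(P_{G'}^k)$ as a sum over the length-$k$ closed walks $w=(x_0,x_1,\ldots,x_k=x_0)$ of $G$. The weight of $w$ in $G'$ is $\prod_{s}A^{G'}_{x_{s+1}x_s}/d^{G'}_{x_s}$, and it is taken to be zero if $w$ uses an edge absent from $G'$. Let $T_\ell$ be the set of endpoints of edges in $S_\ell$. The factor at step $s$ depends only on the column of $P_{G'}$ indexed by $x_s$. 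Deleting the edges of $S_j$ changes only the columns indexed by $T_j$. Hence if $w$ avoids $T_j$, its weight is unchanged by deleting $S_j$, whatever other deletions have already been made. This also covers the zero-weight convention, since every edge of $S_j$ has both endpoints in $T_j$, and isolated endpoints, since the zero-column convention is only ever invoked at nodes of $T_j$.

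\textbf{Key step (separation).} I will show that no length-$k$ closed walk of $G$ meets two different endpoint sets $T_\ell$ and $T_j$ with $\ell\ne j$. The hypothesis gives $d_G(x,y)\ge\lfloor R/2\rfloor+1$ for all $x\in T_\ell$ and $y\in T_j$. A closed walk visiting both sets contains a segment from $T_\ell$ to $T_j$ and a segment back, so its length is at least
\[
2\bigl(\lfloor R/2\rfloor+1\bigr)\ge R+1>k,
\]
which is a contradiction. This is exactly the counting in the locality proposition, with distances measured in the original graph $G$. That is legitimate because all walks are indexed by walks of $G$.

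\textbf{Summation.} Partition the closed walks into a class $W_0$, consisting of walks meeting no $T_\ell$, and classes $W_\ell$, consisting of walks meeting $T_\ell$ and, by the separation step, no other $T_j$. I then compare weights.
\begin{itemize}
\item A walk in $W_0$ has the same weight in $G$, in every $G-S_\ell$, and in $G-\bigcup_j S_j$. It therefore contributes zero to both sides.
\item A walk in $W_\ell$ has the same weight in $G-S_j$ as in $G$ for every $j\ne\ell$, so it contributes zero to the $j$-th summand on the right.
\item Removing the groups $S_j$ with $j\ne\ell$ from either $G-S_\ell$ or $G$ leaves the weight of a walk in $W_\ell$ unchanged. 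So its weight in $G-\bigcup_j S_j$ equals its weight in $G-S_\ell$, and its contribution to the left side equals its contribution to the $\ell$-th summand.
\end{itemize}
Summing over all classes and dividing by $n$ gives the identity. For the last bullet I would apply the invariance one group at a time, in any order. The intermediate graphs differ from $G$ only in columns indexed by the touched sets, and the walk avoids every $T_j$ with $j\ne\ell$, so each single-group removal leaves its weight unchanged.

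The main obstacle is the bookkeeping in this last step. Degrees of shared neighbors are not an issue, because only columns indexed by endpoints change. The point that needs care is that the separation is measured in $G$ rather than in the intermediate subgraphs. This is harmless, since deleting edges can only increase distances and every walk of a subgraph is a walk of $G$.
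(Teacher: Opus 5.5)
Your proposal is correct and follows the paper's proof. Both expand $\operatorname{Tr}(P^k)$ over length-$k$ closed walks of $G$, both use the separation hypothesis to show that no such walk meets the endpoint sets of two different groups (it would need length at least $2(\lfloor R/2\rfloor+1)>R\ge k$), and both conclude that each walk's weight is affected by at most one group. Your class-by-class bookkeeping, in particular the point that deleting $S_j$ changes only columns indexed by $T_j$ regardless of which other groups have already been removed, makes explicit what the paper compresses into the phrase ``its weight change is additive across the sets.''
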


\begin{proof}
By the separation condition, no length-$k$ closed walk can visit deleted-edge endpoints from two distinct sets $S_\ell$: such a walk would require length at least $2d_G(e,h)>R\ge k$. 
Thus, each walk's weight is affected by at most one deletion set, and its weight change is additive across the sets. 
Summing over all length-$k$ closed walks, with weight zero for walks using deleted edges, and dividing by $n$ proves the claim. 
\end{proof}

\subsubsection{Interaction bounds for arbitrary moment orders}
\label{b6:interaction_bounds}
The locality result identifies when interaction vanish. 
We now derive an upper bound on $|I_k^G(e,h)|$ for every integer $k\ge2$, using the notation and conventions introduced above. 
The argument exploits the fact that deleting an edge changes only the transition columns indexed by its endpoints. 

\paragraph{Trace inequality.} 
We first record an inequality that accounts for the columns affected by an edge deletion. 
Throughout this section, $\|\cdot\|_1$ denotes the induced matrix $1$-norm for matrices and the vector $1$-norm for column vectors. 

\begin{lemma}[Trace bound using column support]
\label{b6:trace_bound}
Let $B,C\in \mathbf{R}^{n \times n}$. 
If $C_{:i}=0$ for every $i\notin T$, then 
\begin{equation}
\left|\operatorname{Tr}(BC)\right|
\le
\|B\|_1 \sum_{i\in T}\|C_{:i}\|_1. 
\label{eq:trace_bound}
\end{equation}
\end{lemma}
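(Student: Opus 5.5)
The plan is to expand the trace as a sum of diagonal entries and use the column support of $C$ to restrict that sum to indices in $T$. First, write
\[
\operatorname{Tr}(BC)=\sum_{i=1}^n (BC)_{ii}=\sum_{i=1}^n e_i^\top B\, C_{:i}.
\]
Every index $i\notin T$ has $C_{:i}=0$ by hypothesis, so those summands vanish. What remains is
\[
\operatorname{Tr}(BC)=\sum_{i\in T} e_i^\top B\, C_{:i}=\sum_{i\in T}(BC_{:i})_i .
\]

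Next, bound each remaining summand. A single coordinate of a vector is at most its vector $1$-norm, so $|(BC_{:i})_i|\le \|BC_{:i}\|_1$. The induced matrix $1$-norm then gives $\|BC_{:i}\|_1\le \|B\|_1\|C_{:i}\|_1$.

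Finally, apply the triangle inequality across the finitely many $i\in T$. This yields
\[
|\operatorname{Tr}(BC)|\le\sum_{i\in T}\|B\|_1\|C_{:i}\|_1=\|B\|_1\sum_{i\in T}\|C_{:i}\|_1,
\]
which is Eq.~\ref{eq:trace_bound}.

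There is no real obstacle here. The only point to check is that we bound each diagonal entry by the full column norm $\|BC_{:i}\|_1$ rather than by $\|B\|_1$ acting on all of $C$, since the latter would lose the restriction to $T$. An equivalent route is to bound the row-$i$ entry $|e_i^\top B|\,|C_{:i}|$ by $\max_j|B_{ij}|\,\|C_{:i}\|_1$ and then note $\max_j|B_{ij}|\le\max_j\sum_l|B_{lj}|=\|B\|_1$. We will use the first, column-norm form, since it matches the column-substochastic setting, where $\|P_G\|_1\le 1$, in which the lemma is later applied with $B$ a product of transition-matrix powers.
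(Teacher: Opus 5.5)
Your proof is correct and is essentially the paper's argument: restrict $\operatorname{Tr}(BC)=\sum_i e_i^\top B C_{:i}$ to $i\in T$, bound each summand by $\|BC_{:i}\|_1\le\|B\|_1\|C_{:i}\|_1$, and sum over $T$. Your alternative entrywise (H\"older) route is also valid and even gives the slightly sharper constant $\max_{i,j}|B_{ij}|\le\|B\|_1$, though the lemma does not need it.
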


\begin{proof}
Let $\mathbf{e}_i$ denote the $i$th standard basis vector. 
Since columns outside $T$ are zero, 
\[
\operatorname{Tr}(BC)
= 
\sum_{i\in T}\mathbf{e}_i^\top BC_{:i}. 
\]
Consequently, 
\[
\begin{aligned}
\left|\operatorname{Tr}(BC)\right|
&\le
\sum_{i\in T}
\left|\mathbf{e}_i^\top BC_{:i}\right|\\
&\le
\sum_{i\in T}\|BC_{:i}\|_1\\
&\le
\|B\|_1 \sum_{i\in T}\|C_{:i}\|_1, 
\end{aligned}
\]
which proves the claim. 
\end{proof}

\paragraph{Perturbation parameters.} 
For distinct edges $e,h\in E(G)$, define 
\[
P_{00} = P_G, \qquad
P_{10} = P_{G-e}, \qquad
P_{01} = P_{G-h}, \qquad
P_{11} = P_{G-e-h}, 
\]
and let 
\begin{equation}
\begin{aligned}
X&=P_{10} - P_{00}, \\ 
Y&=P_{01} - P_{00}, \\
Z&=P_{11} - P_{10} - P_{01} + P_{00}. 
\end{aligned}
\label{eq:four_corners}
\end{equation}
Thus, $X$ and $X+Z$ describe the transition-matrix changes caused by deleting $e$ before and after deleting $h$, respectively. 
Similarly, $Y$ and $Y+Z$ describe the changes caused by deleting $h$ before and after deleting $e$. 

Both $X$ and $X+Z$ have nonzero columns only in $T_e$, whereas $Y$ and $Y+Z$ have nonzero columns only in $T_h$. 
It follows that $Z$ has nonzero columns only in $T_e\cap T_h$. 

For $i\in T_e$, define 
\[
\rho_{e,i}
= 
\max\left\{
\|X_{:i}\|_1,\, 
\|(X+Z)_{:i}\|_1
\right\}, 
\]
and set 
\begin{equation}
s_e = \sum_{i \in T_e}\rho_{e,i}, 
\qquad
\alpha_e = \max_{i\in T_e}\rho_{e,i}. 
\label{eq:edge_parameters} 
\end{equation}
Define $\rho_{h,i}$, $s_h$, and $\alpha_h$ analogously using $Y$ and $Y+Z$, and let 
\[
\beta = \sum_{i \in V}\|Z_{:i}\|_1. 
\]
Here, $s_e$ bounds the total column change associated with deleting $e$, while $\alpha_e$ bounds its largest column change, accounting for both backgrounds. 
The parameter $\beta$ measures the mixed change in the transition matrix itself. 

\begin{theorem}[Interaction bound for arbitrary moment orders]
\label{thm:uniform_bound}
For every integer $k\ge2$, 
\begin{equation}
\boxed{
|I_k^G(e,h)|
\le
\frac{k\beta + k(k-1)\min\{s_e \alpha_h, s_h\alpha_e\}}{n} 
}.
\label{eq:uniform_bound}
\end{equation}
The bound allows shared endpoints and deletions that create isolated nodes. 
\end{theorem}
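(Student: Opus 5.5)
We will work directly with the four-corner matrices of Eq.~\ref{eq:four_corners}. Write $P=P_{00}$, so that $P_{10}=P+X$, $P_{01}=P+Y$ and $P_{11}=P+X+Y+Z$. Then
\[
n I_k^G(e,h)=\operatorname{Tr}\bigl[(P+X+Y+Z)^k-(P+X)^k-(P+Y)^k+P^k\bigr].
\]
We expand each power into words over the letters $\{P,X,Y,Z\}$. In the alternating sum, every word that avoids $Z$ and also avoids $Y$ cancels, and likewise every $Z$-free word that avoids $X$. Two kinds of words survive.

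\emph{Type (i):} words containing at least one $Z$. These arise only from $(P+X+Y+Z)^k$.

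\emph{Type (ii):} $Z$-free words containing at least one $X$ and at least one $Y$.

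Bounding the words one by one with Lemma~\ref{b6:trace_bound} would be too lossy, since the number of words grows like $4^k$. Instead we regroup them telescopically.

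For type (i), write $P_{11}^k-(P+X+Y)^k=\sum_{j=0}^{k-1}P_{11}^{\,j}\,Z\,(P+X+Y)^{k-1-j}$. Apply cyclicity so that $Z$ sits on the right, and then use Lemma~\ref{b6:trace_bound} with $C=Z$ and $B$ the product of the remaining factors. This needs $\|P_{11}\|_1\le1$, which holds by column substochasticity. It also needs $\|P+X+Y\|_1\le1$, which requires care: $P+X+Y=P_{11}-Z$ is not obviously substochastic. We will check it column by column. On columns outside $T_e\cup T_h$ it equals $P$. On $T_e\setminus T_h$ it equals $P_{10}$, and on $T_h\setminus T_e$ it equals $P_{01}$. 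On a shared endpoint $i$ its column is $P_{10,:i}+P_{01,:i}-P_{:i}$. Because the deleted neighbours differ, this is the column with both entries zeroed but rescaled by $1/d_i-1/(d_i-1)$ terms. If it turns out not to be substochastic, we will instead split the telescope as $P_{11}^k-P_{10}^k-P_{01}^k+P^k$ through a two-step telescope using only $P_{ab}$ factors. Either way, each of the $k$ terms is bounded by $\beta$, giving the term $k\beta/n$.

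For type (ii), we first telescope in $X$:
\[
(P+X)^k-P^k=\sum_j (P+X)^jXP^{k-1-j},
\]
and similarly with $Y$ present. The surviving difference then becomes
\[
\sum_j\Bigl[(P_{11}-Z)^jXP_{01}^{\,k-1-j}-(P+X)^jXP^{k-1-j}\Bigr]\quad(\text{modulo } Z\text{-terms absorbed into }k\beta).
\]
A cleaner route is to organise the whole interaction as $\Delta^{G-h}_k(e)-\Delta^G_k(e)$. We write $\operatorname{Tr}(P_{11}^k)-\operatorname{Tr}(P_{01}^k)=\sum_j\operatorname{Tr}(P_{11}^{\,j}(X+Z)P_{01}^{\,k-1-j})$ and $\operatorname{Tr}(P_{10}^k)-\operatorname{Tr}(P^k)=\sum_j\operatorname{Tr}(P_{10}^{\,j}XP^{k-1-j})$. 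Subtracting, and replacing $X+Z$ by $X$ at the cost of a $Z$ term (bounded by $\beta$, giving the $k\beta$ contribution), we are left with $\sum_j\operatorname{Tr}\bigl[(P_{11}^{\,j}-P_{10}^{\,j})XP_{01}^{\,k-1-j}+P_{10}^{\,j}X(P_{01}^{\,k-1-j}-P^{k-1-j})\bigr]$.

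Each power difference telescopes again into at most $k-1-j$ or $j$ terms, each containing one factor $Y$ or $Y+Z$ sandwiched between products of norm at most $1$. The total count of such terms is $\sum_j (k-1)=k(k-1)$. For each term we bound by the column support of the \emph{pair} $X$ (support $T_e$) and $Y$ (support $T_h$). After cyclic rotation, the trace reads $\sum_{i\in T_e}\mathbf e_i^\top B\,Y'\,B' X_{:i}$, where $Y'\in\{Y,Y+Z\}$. We bound $|\mathbf e_i^\top BY'B'X_{:i}|\le\|Y'B'X_{:i}\|_1$. The key estimate is $\|Y'v\|_1\le\alpha_h\|v\|_1$, since $\|Y'\|_1=\max_i\|Y'_{:i}\|_1\le\alpha_h$. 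This gives $\le\alpha_h\|X_{:i}\|_1$, and summing over $i$ gives at most $s_e\alpha_h$. The symmetric ordering, anchoring at $T_h$, gives $s_h\alpha_e$, and we take the minimum.

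The main obstacle is the bookkeeping of type (ii). We must ensure that every surviving word is counted exactly once with a coefficient of at most $k(k-1)$ in total. We must also ensure that all "background" factors are genuine transition matrices $P_{ab}$, with norm at most $1$, rather than mixed sums like $P+X+Y$. We would first try the nested-telescope expansion of $\Delta^{G-h}_k(e)-\Delta^G_k(e)$ above, keeping $X$ versus $X+Z$ explicit so that $\rho_{e,i}$ (the max over both backgrounds) is exactly what is needed.
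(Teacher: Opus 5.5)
Your nested-telescope route is correct and complete. It is a discrete counterpart of the paper's argument rather than the same argument.

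The paper interpolates bilinearly, $M(s,t)=P_{00}+sX+tY+stZ$, and observes that $M(s,t)$ is a convex combination of the four corner matrices, hence column substochastic. It then writes $I_k^G(e,h)=\int_0^1\int_0^1 f_{st}\,dt\,ds$. From the mixed derivative it reads off one term $k\operatorname{Tr}(M^{k-1}Z)$ and $k(k-1)$ cross terms $\operatorname{Tr}(M^jM_tM^{k-2-j}M_s)$, where $M_s=(1-t)X+t(X+Z)$ and $M_t=(1-s)Y+s(Y+Z)$.

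Your identity replaces the integral by finite differences:
\begin{align*}
nI_k^G(e,h)&=\sum_{j=0}^{k-1}\operatorname{Tr}\bigl(P_{11}^{j}ZP_{01}^{k-1-j}\bigr)\\
&\quad+\sum_{j=0}^{k-1}\operatorname{Tr}\Bigl[(P_{11}^{j}-P_{10}^{j})XP_{01}^{k-1-j}+P_{10}^{j}X\bigl(P_{01}^{k-1-j}-P_{00}^{k-1-j}\bigr)\Bigr].
\end{align*}
Expanding the two power differences with $P_{11}-P_{10}=Y+Z$ and $P_{01}-P_{00}=Y$ gives exactly $k$ terms containing $Z$ and $\sum_{j}[\,j+(k-1-j)\,]=k(k-1)$ cross terms. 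Each cross term has one $X$, one $Y'\in\{Y,Y+Z\}$, and all other factors among $P_{00},P_{01},P_{10},P_{11}$. Lemma~\ref{b6:trace_bound} therefore yields the same constants.

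What your version buys:
\begin{itemize}
\item It is purely algebraic, with no calculus.
\item Substochasticity of the background factors is immediate, because they are genuine transition matrices; no convexity argument is needed.
\item Since $e$'s perturbation only ever appears as $X$ (never $X+Z$), you actually get a slightly sharper, asymmetric bound. Here $s_e$ and $\alpha_e$ are replaced by $\sum_{i\in T_e}\|X_{:i}\|_1$ and $\max_i\|X_{:i}\|_1$; at a shared endpoint $u$ this means $q(d_u)$ instead of $q(d_u-1)$ on the $e$-side.
\end{itemize}
What the paper's version buys: $e$ and $h$ are treated symmetrically, which is precisely why it defines $\rho_{e,i}$ and $\rho_{h,i}$ as maxima over both backgrounds. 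Its bookkeeping is also done once by the product rule instead of by nested telescopes.

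Two cleanups.
\begin{itemize}
\item The worry in your last paragraph about counting surviving words is moot. The telescoping identity is exact, so no word expansion is needed, and you can drop the type (i)/(ii) route entirely.
\item Your description of the shared column of $P+X+Y$ is off, though it does not matter once that route is dropped. For $d_u\ge3$ its entries are $1/(d_u(d_u-1))$ at $v$ and $w$, and $(d_u+1)/(d_u(d_u-1))$ at the other neighbours. For $d_u=2$ it equals $P_{:u}$. Hence $\|P+X+Y\|_1\le1$, and the concern behind your contingency was unfounded.
\end{itemize}
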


\begin{proof}
For $s,t\in[0,1]$, define the bilinear interpolation
\begin{equation}
\begin{aligned}
M(s,t)
&=P_{00} + sX + tY+stZ\\ 
&=(1-s)(1-t)P_{00} + s(1-t)P_{10} + (1-s)tP_{01} + stP_{11}. 
\end{aligned}
\label{eq:interpolation}
\end{equation}
The four coefficients are nonnegative and sum to one. 
Since each $P_{ab}$ is nonnegative and column substochastic, so is $M(s,t)$. 
Hence 
\begin{equation}
\|M(s,t)\|_1 \le 1, 
\qquad
\|M(s,t)^j\|_1 \le 1
\quad
\text{for every integer }j\ge 0.
\label{eq:interpolation_norm} 
\end{equation}
The case $j=0$ uses $M^0=I$ and $\|T\|_1 = 1$. 

Let 
\[
f(s,t) = \frac{1}{n} \operatorname{Tr}(M(s,t)^k). 
\]
At the four corners, $f$ equals the corresponding graph moment. 
Since $f$ is a polynomial in $s$ and $t$, the fundamental theorem of calculus gives 
\begin{equation}
\begin{aligned}
I_k^G(e,h) 
&=f(1,1) - f(1,0) - f(0,1) + f(0,0)\\
&=\int_0^1\int_0^1 f_{st}(s,t)\,dt\,ds. 
\end{aligned}
\label{eq:mixed_integral}
\end{equation}

We suppress the arguments $(s,t)$ below. 
The matrix derivatives are 
\[
M_s=X+tZ,\qquad
M_t=Y+sZ,\qquad
M_{st}=Z. 
\]
By the matrix product rule and cyclic invariance of the trace, 
\[
\begin{aligned}
f_s
&= \frac{1}{n} \sum_{a=0}^{k-1} \operatorname{Tr}(M^a M_s M^{k-1-a})\\ 
&= \frac{k}{n} \operatorname{Tr}(M^{k-1} M_s). 
\end{aligned}
\]
Differentiating once more yields 
\begin{equation}
f_{st}
=\frac{k}{n}\operatorname{Tr}(M^{k-1}Z) +\frac{k}{n}\sum_{j=0}^{k-2} \operatorname{Tr}(M^j M_t M^{k-2-j} M_s). 
\label{eq:mixed_derivative}
\end{equation}
This identity does not require the matrices to commute. 

Since
\[
M_s = (1-t)X + t(X+Z), 
\]
the triangle inequality gives 
\[
\|(M_s)_{:i}\|_1 \le \rho_{e,i} \qquad (i\in T_e), 
\]
and all other columns of $M_s$ are zero. 
Therefore, 
\[
\sum_{i \in T_e}\|(M_s)_{:i}\|_1 \le s_e, \qquad \|M_s\|_1 \le \alpha_e. 
\]
Similarly, 
\[
\sum_{i\in T_h} \|(M_t)_{:i}\|_1 \le s_h, \qquad \|M_t\|_1 \le \alpha_h. 
\]

Applying Lemma~\ref{b6:trace_bound} and Eq.~\ref{eq:interpolation_norm}, we obtain 
\[
\begin{aligned}
\left|\operatorname{Tr}(M^{k-1}Z)\right|
&\le \|M^{k-1}\|_1 \sum_i \|Z_{:i}\|_1
\le \beta,\\
\left|\operatorname{Tr}(M^jM_tM^{k-2-j}M_s)\right|
&\le \|M^jM_tM^{k-2-j}\|_1
\sum_{i\in T_e}\|(M_s)_{:i}\|_1
\le \alpha_h s_e.
\end{aligned}
\]
Alternatively, cyclically rotating the trace gives 
\[
\operatorname{Tr}(M^j M_t M^{k-2-j} M_s)
=
\operatorname{Tr}(M^{k-2-j} M_s M^j M_t), 
\]
which yields the bound $\alpha_e s_h$. 
Thus, each summand is bounded in absolute value by $\min\{s_e\alpha_h, s_h\alpha_e\}$. 

There are $k-1$ summands in Eq.~\ref{eq:mixed_derivative}, so 
\[
|f_{st}(s,t)| \le \frac{k \beta + k(k-1)\min\{s_e\alpha_h, s_h\alpha_e\}}{n}
\]
throughout $[0,1]^2$. 
Taking absolute values in Eq.~\ref{eq:mixed_integral} and integrating this uniform bound proves the result. 

\end{proof}

\paragraph{Explicit degree parameters.} 
The parameters above can be computed from endpoint degrees. 
Deleting an incident edge at a node of degree $d\ge1$ changes its transition column by an amount whose $\ell_1$-norm is 
\begin{equation}
q(d)=
\begin{cases}
1, & d=1, \\
2/d, & d\ge2.
\end{cases}
\label{eq:column_change}
\end{equation}
Indeed, when $d\ge2$, the deleted neighbor loses probability $1/d$, while the remaining $d-1$ neighbors each gain $1/[d(d-1)]$. 
The total absolute change is therefore $2/d$. 
When $d=1$, the column changes from a unit vector to zero, giving norm one. 

If $e$ and $h$ have disjoint endpoint sets, then $Z=0$ and $\beta=0$. 
Writing $d_i=d_G(i)$, we have 
\[
s_e = \sum_{i\in T_e}q(d_i), 
\qquad
\alpha_e = \max_{i\in T_e}q(d_i), 
\]
and analogously for $h$. 
Let 
\[
d_e = \min_{i\in T_e}d_i, 
\qquad
d_h=\min_{i\in T_h}d_i. 
\]
Since $q(d)\le2/d$ for all $d\ge1$, 
\[
s_e \le \frac{4}{d_e}, \quad
\alpha_e \le \frac{2}{d_e}, \quad
s_h \le \frac{4}{d_h}, \quad
\alpha_h \le \frac{2}{d_h}.
\]
Theorem~\ref{thm:uniform_bound} therefore implies
\begin{equation}
\boxed{
|I_k^G(e,h)|
\le
\frac{8k(k-1)}{n\,d_e d_h}.
}
\label{eq:disjoint_bound}
\end{equation}
This bound also covers degree-one endpoints. 

If $e=uv$ and $h=uw$ share an endpoint $u$, then $d_u\ge 2$ because both edges are present. 
Deleting $e$ from $G$ changes column $u$ by $\ell_1$-norm $q(d_u)$, whereas deleting $e$ from $G-h$ changes that column by $\ell_1$-norm $q(d_u-1)$. 
The same two values arise when the roles of $e$ and $h$ are exchanged. 
Since $q(1)=q(2)=1$ and $q(D)=2/D$ for $D\ge 2$, we have 
\[
q(d_u-1)\ge q(d_u),
\qquad
q(d_u-1)=
\begin{cases}
1, &d_u=2,\\
\dfrac{2}{d_u-1}, &d_u\ge3. 
\end{cases}
\]
Thus, the maximum column-change norm at the shared endpoint is exactly $q(d_u-1)$. 
The exact parameter values are therefore 
\begin{equation}
\begin{aligned}
s_e&=q(d_u-1)+q(d_v),
&
\alpha_e&=\max\{q(d_u-1),q(d_v)\},\\
s_h&=q(d_u-1)+q(d_w),
&
\alpha_h&=\max\{q(d_u-1),q(d_w)\},\\
\beta&=
\begin{cases}
1, & d_u=2,\\[2pt]
\dfrac4{d_u(d_u-1)}, & d_u\ge3.
\end{cases}
\end{aligned}
\label{eq:shared_degree_parameters}
\end{equation}
The expressions for $s_e$ and $\alpha_e$ follow because deleting $h$ reduces the degree of $u$ by one and leaves the transition column at $v$ unchanged. 
Similarly, deleting $e$ reduces the degree of $u$ by one and leaves the transition column at $w$ unchanged, giving the expressions for $s_h$ and $\alpha_h$. 
To verify the expression for $\beta$, recall that 
\[
Z=P_{G-e-h}-P_{G-e}-P_{G-h}+P_G,
\qquad
\beta=\sum_i\|Z_{:i}\|_1. 
\]
The matrix $Z$ is supported only on columns in $T_e\cap T_h=\{u\}$, so $\beta=\|Z_{:u}\|_1$. 
For $d_u\ge3$, its entries at rows $v$ and $w$ are 
\[
Z_{vu}=Z_{wu}
=
-\frac1{d_u-1}+\frac1{d_u}
=
-\frac1{d_u(d_u-1)}. 
\]
At each remaining neighbor $x\in N_G(u)\setminus\{v,w\}$, the entry is 
\[
Z_{xu}
=
\frac1{d_u-2}-\frac2{d_u-1}+\frac1{d_u}
=
\frac2{d_u(d_u-1)(d_u-2)}. 
\]
All other entries are zero.
Since there are $d_u-2$ remaining neighbors, summing the absolute values gives
\[
\begin{aligned}
\beta
&=
\frac2{d_u(d_u-1)}
+
(d_u-2)\frac2{d_u(d_u-1)(d_u-2)}\\
&=
\frac4{d_u(d_u-1)}.
\end{aligned}
\]
If $d_u=2$, deleting both edges isolates $u$. 
Under the zero-column convention for isolated nodes, the four relevant columns are 
\[
(P_G)_{:u}=\frac12(\mathbf e_v+\mathbf e_w), 
\qquad
(P_{G-e})_{:u}=\mathbf e_w,
\qquad
(P_{G-h})_{:u}=\mathbf e_v,
\qquad
(P_{G-e-h})_{:u}=0.
\]
Consequently,
\[
Z_{:u}=0-\mathbf e_w-\mathbf e_v+\frac12(\mathbf e_v+\mathbf e_w)
=
-\frac12(\mathbf e_v+\mathbf e_w),
\]
and hence $\beta=\|Z_{:u}\|_1=1$. 

\begin{corollary}[A uniform degree-based bound]
\label{cor:degree_bound}
Suppose that every node in $T_e\cup T_h$ has degree at least $d\ge2$ in $G$. 
Then, for every integer $k\ge2$, 
\begin{equation}
\boxed{
|I_k^G(e,h)|\le 
\frac{1}{n}\left[\frac{4k}{d(d-1)}+\frac{8k(k-1)}{(d-1)^2}
\right]. 
}
\label{eq:common_degree_bound}
\end{equation}
If $T_e\cap T_h=\varnothing$, the sharper bound
\[
|I_k^G(e,h)|\le \frac{8k(k-1)}{nd^2}
\]
holds. 
\end{corollary}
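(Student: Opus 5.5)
The plan is to apply Theorem~\ref{thm:uniform_bound} directly and bound each parameter $\beta$, $s_e,\alpha_e,s_h,\alpha_h$ in terms of the common lower bound $d$ on endpoint degrees. I would split into two cases according to whether the endpoint sets intersect. Two edges that share both endpoints coincide, so the only cases are disjoint endpoint sets or exactly one shared endpoint.

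\emph{Disjoint case.} Here $Z=0$, so $\beta=0$, and the explicit degree discussion gives $d_e,d_h\ge d$. Substituting into Eq.~\ref{eq:disjoint_bound} gives $|I_k^G(e,h)|\le 8k(k-1)/(n d_e d_h)\le 8k(k-1)/(nd^2)$. This is the sharper claim. Since $d^2\ge (d-1)^2$, and the extra $\beta$ term in the general bound is nonnegative, it also implies the first displayed bound.

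\emph{Shared-endpoint case.} Let $e=uv$ and $h=uw$. I would use the exact parameters in Eq.~\ref{eq:shared_degree_parameters}. Because $d_u\ge d\ge 2$:
\begin{itemize}
\item If $d_u\ge3$, then $\beta=4/(d_u(d_u-1))\le 4/(d(d-1))$, since $x\mapsto 1/(x(x-1))$ is decreasing.
\item If $d_u=2$, then $d=2$ and $\beta=1$, while $4/(d(d-1))=2$, so the bound still holds.
\end{itemize}
For the column terms, I would use $q(D)\le 2/(D-1)$ for all $D\ge2$ and $q(d_u-1)\le 2/(d_u-2)$ when $d_u\ge3$. That last estimate is too weak, so a better route is needed. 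I would check directly that $q(d_u-1)\le 2/(d-1)$:
\begin{itemize}
\item For $d_u\ge3$, $q(d_u-1)=2/(d_u-1)\le 2/(d-1)$.
\item For $d_u=2$, $q(1)=1\le 2/(d-1)=2$.
\end{itemize}
Also $q(d_v)\le 2/d_v\le 2/d\le 2/(d-1)$, and likewise for $w$. Hence $s_e,s_h\le 4/(d-1)$ and $\alpha_e,\alpha_h\le 2/(d-1)$, which gives $\min\{s_e\alpha_h,s_h\alpha_e\}\le 8/(d-1)^2$. Plugging into Theorem~\ref{thm:uniform_bound} yields exactly $\frac1n\bigl[\frac{4k}{d(d-1)}+\frac{8k(k-1)}{(d-1)^2}\bigr]$.

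The only delicate point is the shared-endpoint column bound. Deleting the other edge first lowers $u$'s degree, so the relevant norm is $q(d_u-1)$ rather than $q(d_u)$, and this is where the $(d-1)^2$ denominator comes from. The degenerate case $d_u=2$, where $u$ becomes isolated, must be checked separately, as done above.
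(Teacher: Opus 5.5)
Your proof is correct and follows the paper's argument. You use the same split into disjoint versus single-shared-endpoint cases and the same bounds $s_e,s_h\le 4/(d-1)$, $\alpha_e,\alpha_h\le 2/(d-1)$, $\beta\le 4/(d(d-1))$ (including the $d_u=2$ check), fed into Theorem~\ref{thm:uniform_bound}. The differences are cosmetic: you quote the disjoint bound $8k(k-1)/(n d_e d_h)$ directly and case-split on $d_u$ to show $q(d_u-1)\le 2/(d-1)$, whereas the paper obtains this in one line from $q(D)\le 2/D$ for all $D\ge 1$, so your discarded $2/(d_u-2)$ detour can simply be removed.
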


\begin{proof}
Recall that $q(D)\le 2/D$ for every integer $D\ge1$. 
If the edges share an endpoint $u$, then $d_u-1\ge d-1\ge1$, so 
\[
q(d_u-1) \le \frac{2}{d_u-1} \le \frac{2}{d-1}. 
\]
The other endpoint terms satisfy 
\[
q(d_v), q(d_w) \le \frac{2}{d} \le \frac{2}{d-1}. 
\]
Substituting these inequalities into Eq.~\ref{eq:shared_degree_parameters} gives 
\[
s_e, s_h \le \frac{4}{d-1}, 
\qquad
\alpha_e, \alpha_h \le \frac{2}{d-1}. 
\]
If $d_u\ge 3$, then 
\[
\beta = \frac{4}{d_u(d_u-1)} \le \frac{4}{d(d-1)}. 
\]
If $d_u=2$, the condition $2\le d\le d_u$ forces $d=2$, and 
\[
\beta = 1 \le \frac{4}{d(d-1)}. 
\]
Thus, the required bound on $\beta$ also holds when deleting both edges isolates the shared endpoint. 

If the edges have disjoint endpoint sets, then $\beta=0$. 
Each endpoint has degree at least $d$, and the disjoint-endpoint parameter formulas give 
\[
s_e, s_h \le \frac{4}{d} \le \frac{4}{d-1}, 
\qquad
\alpha_e, \alpha_h \le \frac{2}{d} \le \frac{2}{d-1}. 
\]
Therefore, for either endpoint configuration, 
\[
s_e, s_h \le \frac{4}{d-1}, 
\qquad
\alpha_e, \alpha_h \le \frac{2}{d-1}, 
\qquad
\beta \le \frac{4}{d(d-1)}. 
\]
In particular, 
\[
\min\{s_e \alpha_h, s_h \alpha_e\} \le \frac{8}{(d-1)^2}. 
\]
Applying Theorem~\ref{thm:uniform_bound} yields 
\[
\begin{aligned}
|I_k^G(e,h)|
&\le \frac{k\beta + k(k-1) \min\{s_e\alpha_h, s_h\alpha_e\}}{n} \\
&\le \frac{1}{n} \left[ \frac{4k}{d(d-1)} + \frac{8k(k-1)}{(d-1)^2} \right], 
\end{aligned}
\]
which proves Eq.~\ref{eq:common_degree_bound}. 
For disjoint endpoint sets, the stronger estimates $\beta=0$, $s_e, s_h \le 4/d$, and $\alpha_e, \alpha_h \le 2/d$ instead give 
\[
|I_k^G(e,h)| \le \frac{8k(k-1)}{nd^2}. 
\]

\end{proof}

%% file: Sections/pseudocode.tex
\begin{algorithm}[t]
\caption{Moment-Guided Edge Sampling}
\label{alg:moment-guided-sampling}
\begin{algorithmic}[1]
\Require Graph $G=(V,E)$, target moments $\mathbf{m}^\ast$, budget $B$, candidate size $q$, moment orders $\mathcal{K}$, edge-edit delta method $\textsc{DeltaMoment}$, which can be either \textsc{CombinatorialDelta} or \textsc{LowRankDelta}
\Ensure Sampled graph $G_S$
\State Compute current moments $\mathbf{m}=(m_k)_{k\in\mathcal{K}}$  \Comment{Not necessarily continuous, such as $\mathcal{K}={2,5,7}$.}
\For{each $k\in\mathcal{K}$}
    \State $\sigma_k \leftarrow m_k$
    \Comment{Fixed to the original graph moment.}
\EndFor
\For{$t=1,\ldots,B$}
    \State Sample a candidate set $\mathcal{C}_t$ with $|\mathcal{C}_t|=q$, where each $\epsilon=(o,u,v)\in\mathcal{C}_t$ has $o\in\{\textsc{Add},\textsc{Delete}\}$
    \For{each edge edit $\epsilon \in\mathcal{C}_t$}
        \State $\Delta\mathbf{m}(\epsilon)\leftarrow \textsc{DeltaMoment} (G,\epsilon,\mathcal{K})$  
    \EndFor
    \For{each edge edit $\epsilon \in\mathcal{C}_t$}
        \State $\mathrm{score}(\epsilon)\leftarrow \sum_{k\in\mathcal{K}}\left((m_k+\Delta m_k(\epsilon)-m_k^\ast)/\sigma_k\right)^2$
    \EndFor
    \State $\epsilon^\star\leftarrow \arg\min_{e\in\mathcal{C}_t}\mathrm{score}(\epsilon)$
    \State $\Call{ApplyOperation}{G,\epsilon^\star,\textsc{DeltaMoment}}$
    \State $\mathbf{m}\leftarrow \mathbf{m}+\Delta\mathbf{m}(\epsilon^\star)$
\EndFor
\State \Return $G_S\leftarrow G$

\Procedure{ApplyOperation}{$G,\epsilon,\textsc{DeltaMoment}$}
    \State Write $\epsilon=(o,u,v)$
    \If{$\textsc{DeltaMoment}=\textsc{CombinatorialDelta}$} \Comment{Update local statistics}
        \If{$o=\textsc{Add}$}
            \State $\Call{ApplyAdd}{u,v}$
        \Else
            \State $\Call{ApplyDelete}{u,v}$
        \EndIf
    \Else \Comment{Update graph for \textsc{LowRankDelta}}
        \State Apply operation $o$ on pair $(u,v)$ to $G$
    \EndIf
\EndProcedure

\end{algorithmic}
\end{algorithm}

\begin{algorithm}[t]
\caption{\textsc{CombinatorialDelta}: Exact Deltas for $m_2$ and $m_3$}
\label{alg:combinatorial-delta}
\begin{algorithmic}[1]
\Require Graph $G$, candidate edge edit $\epsilon=(o,u,v)$, maintained statistics $d_i,W_i,M_{ij},H_i$ 
\Ensure Exact moment changes $\Delta m_2(\epsilon),\Delta m_3(\epsilon)$
\If{$o=\textsc{Add}$}
    \State $\Delta S_2\gets
    \dfrac{1}{(d_u+1)(d_v+1)}
    -\dfrac{W_u}{d_u(d_u+1)}
    -\dfrac{W_v}{d_v(d_v+1)}$
    \State $\Delta S_3\gets
    \dfrac{M_{uv}}{(d_u+1)(d_v+1)}
    -\dfrac{H_u}{d_u(d_u+1)}
    -\dfrac{H_v}{d_v(d_v+1)}$
\ElsIf{$o=\textsc{Delete}$}
    \State $\Delta S_2\gets
    \dfrac{W_u-d_v^{-1}}{d_u(d_u-1)}
    +\dfrac{W_v-d_u^{-1}}{d_v(d_v-1)}
    -\dfrac{1}{d_ud_v}$
    \State $\Delta S_3\gets
    \dfrac{H_u-M_{uv}/d_v}{d_u(d_u-1)}
    +\dfrac{H_v-M_{uv}/d_u}{d_v(d_v-1)}
    -\dfrac{M_{uv}}{d_ud_v}$
\EndIf
\State \Return $\left(\dfrac{2}{n}\Delta S_2,\dfrac{6}{n}\Delta S_3\right)$






\end{algorithmic}
\end{algorithm}

\begin{algorithm}[t]
\caption{\textsc{ApplyAdd}: Update Local Statistics for \textsc{CombinatorialDelta}}
\label{alg:apply-add}
\begin{algorithmic}[1]
\Procedure{ApplyAdd}{$u,v$}
    \State $N_u\gets\mathcal{N}(u)$, $N_v\gets\mathcal{N}(v)$
    \State Use old degrees $d_u,d_v$ \Comment{Update degrees after finish updating all local statistics}
    \State $\Delta H\gets\Call{DeltaHAdd}{u,v}$

    \ForAll{$x\in N_u$} \Comment{Update the local statistics between $u$'s neighbors and $v$}
        \State $M_{vx},M_{xv}\gets M_{vx}+(d_u+1)^{-1},M_{xv}+(d_u+1)^{-1}$ \Comment{Edge addition introduce a new common neighbor $u$ between $v$ and $u$'s neighbors}
        \State $W_x\gets W_x-\bigl(d_u(d_u+1)\bigr)^{-1}$ \Comment{The increase in $u$'s degree dilutes the weights assigned to its neighbors'}
    \EndFor

    \ForAll{$x\in N_v$} \Comment{Update the local statistics between $v$'s neighbors and $u$}
        \State $M_{ux},M_{xu}\gets M_{ux}+(d_v+1)^{-1},M_{xu}+(d_v+1)^{-1}$ \Comment{Edge addition introduce a new common neighbor $v$ between $u$ and $v$'s neighbors}
        \State $W_x\gets W_x-\bigl(d_v(d_v+1)\bigr)^{-1}$  \Comment{The increase in $v$'s degree dilutes the weights assigned to its neighbors'}
    \EndFor

    \ForAll{$x,y\in N_u$} \Comment{Update the local statistics within $u$'s neighbors}
        \State $M_{xy}\gets M_{xy}-\bigl(d_u(d_u+1)\bigr)^{-1}$  \Comment{The increase in $u$'s degree dilutes the weights assigned to its neighbors'}
    \EndFor
    \ForAll{$x,y\in N_v$} \Comment{Update the local statistics within $v$'s neighbors}
        \State $M_{xy}\gets M_{xy}-\bigl(d_v(d_v+1)\bigr)^{-1}$  \Comment{The increase in $v$'s degree dilutes the weights assigned to its neighbors'}
    \EndFor

    \State Add $\{u,v\}$ to $E$
    \State $\mathcal{N}(u)\gets N_u\cup\{v\}$, $\mathcal{N}(v)\gets N_v\cup\{u\}$
    \State $d_u\gets d_u+1$, $d_v\gets d_v+1$
    \State $W_u\gets W_u+d_v^{-1}$, $W_v\gets W_v+d_u^{-1}$ \Comment{Edge addition introduce a new term for $u$ and $v$, after updating degrees}

    \ForAll{$i$ with $\Delta H_i\neq 0$}
        \State $H_i\gets H_i+\Delta H_i$
    \EndFor
\EndProcedure
\end{algorithmic}
\end{algorithm}

\begin{algorithm}[t]
\caption{\textsc{ApplyDelete}: Update Local Statistics for \textsc{CombinatorialDelta}}
\label{alg:apply-delete}
\begin{algorithmic}[1]
\Procedure{ApplyDelete}{$u,v$}
    \State $N_u\gets\mathcal{N}(u)$, $N_v\gets\mathcal{N}(v)$
    \State Use old degrees $d_u,d_v$
    \State $N_u^-\gets N_u\setminus\{v\}$, $N_v^-\gets N_v\setminus\{u\}$
    \State $\Delta H\gets\Call{DeltaHDelete}{u,v}$

    \ForAll{$x\in N_u^-$} \Comment{Update the local statistics between $u$'s neighbors and $v$}
        \State $M_{vx},M_{xv}\gets M_{vx}-d_u^{-1},M_{xv}-d_u^{-1}$ \Comment{Edge deletion remove the common neighbor $u$ between $v$ and $u$'s neighbors}
    \EndFor
    \ForAll{$x\in N_v^-$} \Comment{Update the local statistics between $v$'s neighbors and $u$}
        \State $M_{ux},M_{xu}\gets M_{ux}-d_v^{-1},M_{xu}-d_v^{-1}$ \Comment{Edge deletion remove the common neighbor $v$ between $u$ and $v$'s neighbors}
    \EndFor

    \ForAll{$x,y\in N_u^-$} \Comment{Update the local statistics within $u$'s neighbors}
        \State $M_{xy}\gets M_{xy}+\bigl(d_u(d_u-1)\bigr)^{-1}$ \Comment{The decrease in $u$'s degree concentrate the weights assigned to its neighbors'}
    \EndFor
    \ForAll{$x,y\in N_v^-$} \Comment{Update the local statistics within $u$'s neighbors}
        \State $M_{xy}\gets M_{xy}+\bigl(d_v(d_v-1)\bigr)^{-1}$ \Comment{The decrease in $v$'s degree concentrate the weights assigned to its neighbors'}
    \EndFor

    \State $W_u\gets W_u-d_v^{-1}$, $W_v\gets W_v-d_u^{-1}$ \Comment{Edge deletion remove one term for $u$ and $v$, before updating degrees}
    \State Delete $\{u,v\}$ from $E$
    \State $\mathcal{N}(u)\gets N_u^-$, $\mathcal{N}(v)\gets N_v^-$
    \State $d_u\gets d_u-1$, $d_v\gets d_v-1$

    \ForAll{$x\in N_u^-$} \Comment{Edge deletion concentrate weights assigned to $u$'s neighbors}
        \State $W_x\gets W_x+\bigl((d_u+1)d_u\bigr)^{-1}$
    \EndFor
    \ForAll{$x\in N_v^-$} \Comment{Edge deletion concentrate weights assigned to $v$'s neighbors}
        \State $W_x\gets W_x+\bigl((d_v+1)d_v\bigr)^{-1}$
    \EndFor

    \ForAll{$i$ with $\Delta H_i\neq 0$}
        \State $H_i\gets H_i+\Delta H_i$
    \EndFor
\EndProcedure
\end{algorithmic}
\end{algorithm}

\begin{algorithm}[t]
\caption{Exact Local Update of $H$}
\label{alg:delta-h}
\begin{algorithmic}[1]
\Function{WCommon}{$a,b,\mathcal{R}$}
    \Comment{Weighted common-neighbor mass, excluding nodes in $\mathcal{R}$}
    \State $s\gets 0$
    \ForAll{$x\in\mathcal{N}(a)\cap\mathcal{N}(b)$}
        \If{$x\notin\mathcal{R}$}
            \State $s\gets s+d_x^{-1}$
        \EndIf
    \EndFor
    \State \Return $s$
\EndFunction

\Function{DeltaHAdd}{$u,v$}
    \State Initialize $\Delta H_i\gets 0$ for
    $i\in\{u,v\}\cup\mathcal{N}(u)\cup\mathcal{N}(v)$
    \State $C_{uv}\gets\Call{WCommon}{u,v,\emptyset}$ \Comment{New triangles centered at $u$ and $v$}
    \State $\Delta H_u\gets \Delta H_u+C_{uv}/(d_v+1)$
    \State $\Delta H_v\gets \Delta H_v+C_{uv}/(d_u+1)$

    \ForAll{$i\in\mathcal{N}(u)$} \Comment{Old triangles containing $u$ diluted because $d_u$ increases}
        \State $C_{iu}\gets\Call{WCommon}{i,u,\emptyset}$
        \State $\Delta H_i\gets
        \Delta H_i-C_{iu}/\bigl(d_u(d_u+1)\bigr)$
    \EndFor

    \ForAll{$i\in\mathcal{N}(v)$} \Comment{Old triangles containing $v$ diluted because $d_v$ increases}
        \State $C_{iv}\gets\Call{WCommon}{i,v,\emptyset}$
        \State $\Delta H_i\gets
        \Delta H_i-C_{iv}/\bigl(d_v(d_v+1)\bigr)$
    \EndFor

    \ForAll{$i\in\mathcal{N}(u)\cap\mathcal{N}(v)$} \Comment{New triangle $(i,u,v)$ for each common neighbor $i$}
        \State $\Delta H_i\gets
        \Delta H_i+\bigl((d_u+1)(d_v+1)\bigr)^{-1}$
    \EndFor

    \State \Return $\Delta H$
\EndFunction

\Function{DeltaHDelete}{$u,v$}
    \State Initialize $\Delta H_i\gets 0$ for
    $i\in\{u,v\}\cup\mathcal{N}(u)\cup\mathcal{N}(v)$
    \State $C_{uv}\gets\Call{WCommon}{u,v,\emptyset}$ \Comment{Triangles centered at $u$ and $v$ using edge $(u,v)$ disappear}

    \State $\Delta H_u\gets\Delta H_u-C_{uv}/d_v$
    \State $\Delta H_v\gets\Delta H_v-C_{uv}/d_u$

    \ForAll{$i\in\mathcal{N}(u)\setminus\{v\}$} \Comment{Remaining old triangles containing $u$ grow because $d_u$ decreases}
        \State $C_{iu}^{-v}\gets\Call{WCommon}{i,u,\{v\}}$
        \State $\Delta H_i\gets
        \Delta H_i+C_{iu}^{-v}/\bigl(d_u(d_u-1)\bigr)$
    \EndFor

    \ForAll{$i\in\mathcal{N}(v)\setminus\{u\}$} \Comment{Remaining old triangles containing $v$ grow because $d_v$ decreases}
        \State $C_{iv}^{-u}\gets\Call{WCommon}{i,v,\{u\}}$
        \State $\Delta H_i\gets
        \Delta H_i+C_{iv}^{-u}/\bigl(d_v(d_v-1)\bigr)$
    \EndFor

    \ForAll{$i\in\mathcal{N}(u)\cap\mathcal{N}(v)$} \Comment{Triangle $(i,u,v)$ disappears for each common neighbor $i$}
        \State $\Delta H_i\gets\Delta H_i-(d_ud_v)^{-1}$
    \EndFor

    \State \Return $\Delta H$
\EndFunction
\end{algorithmic}
\end{algorithm}

\begin{algorithm}[t]
\caption{\textsc{LowRankDelta}: Exact Moment Delta for Batched Edge Edits}
\label{alg:lowrank-delta}
\begin{algorithmic}[1]    
    \Require Graph $G=(V,E)$, transition matrix $P=AD^{-1}$, edit batch $\mathcal{B}$, moment orders $\mathcal{K}$
    \Ensure Exact changes $\{\Delta m_k\}_{k\in\mathcal{K}}$
    \State $T\leftarrow \{i\in V:\exists j\in V \text{ such that } (i,j)\in\mathcal{B}\}$
    \State Write $T=\{i_1,\ldots,i_r\}$
    \State Compute updated touched columns $P'_{:i_a}$ for all $i_a\in T$
    \State Initialize $C\in\mathbb{R}^{n\times r}$ and $R\in\mathbb{R}^{n\times r}$
    \For{$a=1,\ldots,r$}
        \State $C_{:a}\leftarrow P'_{:i_a}-P_{:i_a}$
        \State $R_{:a}\leftarrow e_{i_a}$
    \EndFor
    \State $K_{\max}\leftarrow\max\mathcal{K}$
    \State $Y\leftarrow C$
        \For{$t=0,\ldots,K_{\max}-1$}
        \State $H_t\leftarrow R^\top Y$
        \If{$t<K_{\max}-1$}
            \State $Y\leftarrow PY$
        \EndIf
    \EndFor
    \For{each $k\in\mathcal{K}$}
        \State $\Delta T_k\leftarrow \textsc{TraceDelta}(H_0,\ldots,H_{k-1},k)$
        \State $\Delta m_k\leftarrow \Delta T_k/n$
    \EndFor
    \State \Return $\{\Delta m_k\}_{k\in\mathcal{K}}$

\end{algorithmic}
\end{algorithm}

\begin{algorithm}[t]
\caption{\textsc{TraceDelta}: Trace-Power Change from Small Matrices}
\label{alg:trace-delta}
\begin{algorithmic}[1]
    \Require Small matrices $H_0, \ldots, H_{k-1}$, moment order $k$
    \Ensure Exact trace-power change $\Delta T_k$
    \State $\Delta T_k \leftarrow 0$
    \For{$\ell=1,2,\ldots,k$}
    \For{each $(a_1,\ldots,a_\ell)\in\mathcal{A}_{k-\ell,\ell}$}
        \State $M\leftarrow I$
        \For{$q=1,\ldots,\ell$}
            \State $M\leftarrow M H_{a_q}$
        \EndFor
        \State $\Delta T_k\leftarrow \Delta T_k+\frac{k}{\ell}\operatorname{Tr}(M)$
    \EndFor
\EndFor
\State \Return $\Delta T_k$
\end{algorithmic}
\end{algorithm}